\newcommand{\opnorm}[1]{\norm{#1}_{\mathrm{op}}}
\title{Improving Linear System Solvers for Hyperparameter Optimisation in Iterative Gaussian Processes}
\author{Jihao Andreas Lin\textsuperscript{\ensuremath{1,2}} \qquad Shreyas Padhy\textsuperscript{\ensuremath{1}} \qquad Bruno Mlodozeniec\textsuperscript{\ensuremath{1,2}}\\\bfseries Javier Antorán\textsuperscript{\ensuremath{1,3}} \qquad José Miguel Hernández-Lobato\textsuperscript{\ensuremath{1,3}}\\\textsuperscript{\ensuremath{1}}University of Cambridge \quad \textsuperscript{\ensuremath{2}}MPI for Intelligent Systems, Tübingen \quad \textsuperscript{\ensuremath{3}}Ångström AI}
\begin{document}

\maketitle
\begin{abstract}
Scaling hyperparameter optimisation to very large datasets remains an open problem in the Gaussian process community.
This paper focuses on iterative methods, which use linear system solvers, like conjugate gradients, alternating projections or stochastic gradient descent, to construct an estimate of the marginal likelihood gradient.
We discuss three key improvements which are applicable across solvers:
(i) a pathwise gradient estimator, which reduces the required number of solver iterations and amortises the computational cost of making predictions,
(ii) warm starting linear system solvers with the solution from the previous step, which leads to faster solver convergence at the cost of negligible bias,
(iii) early stopping linear system solvers after a limited computational budget, which synergises with warm starting, allowing solver progress to accumulate over multiple marginal likelihood steps.
These techniques provide speed-ups of up to $72\times$ when solving to tolerance, and decrease the average residual norm by up to $7\times$ when stopping early.
\end{abstract}

\section{Introduction}
\label{sec:intro}
Gaussian processes \cite{GPML} (GPs) are a versatile class of probabilistic machine learning models which are used widely for Bayesian optimisation of black-box functions \cite{Snoek2012}, climate and earth sciences \cite{ghasemi2021application,tazi2023beyond}, and data-efficient learning in robotics and control \cite{Deisenroth2015}.
However, their effectiveness depends on good estimates of hyperparameters, such as kernel length scales and observation noise.
These quantities are typically learned by maximising the marginal likelihood, which balances model complexity with training data fit.
In general, the marginal likelihood is a non-convex function of the hyperparameters and evaluating its gradient requires inverting the kernel matrix.
Using direct methods, this requires compute and memory resources which are respectively cubic and quadratic in the number of training examples.
This is intractable when dealing with large datasets of modern interest.

Methods to improve the scalability of Gaussian processes can roughly be grouped into two categories.
Sparse methods \cite{candela2005,titsias09,hensman13} approximate the kernel matrix with a low-rank surrogate, which is cheaper to invert.
This reduced flexibility may result in failure to properly fit increasingly large or sufficiently complex data \cite{lin2023sampling}.
On the other hand, iterative methods \cite{Gibbs_MacKay97a} express GP computations in terms of systems of linear equations.
The solution to these linear systems is approximated up to a specified numerical precision with linear system solvers, such as conjugate gradients (CG) \cite{Gibbs_MacKay97a,gardner18,WangPGT2019exactgp}, alternating projections (AP) \cite{Shalev2012SDCA,tu2016large,wu2024largescale}, or stochastic gradient descent (SGD) \cite{lin2023sampling, lin2024stochastic}.
These methods allow for a trade-off between compute time and accuracy.
However, convergence can be slow in the large data regime, where system conditioning is often poor.

In this paper, we focus on iterative GPs and identify techniques, which were important to the success of previously proposed methods, but did not receive special attention in the literature.
Many of these amount to amortisations which leverage previous computations to accelerate subsequent ones.
We analyse and adapt these techniques, and show that they can be applied to accelerate different linear solvers, obtaining speed-ups of up to $72\times$ without sacrificing predictive performance (see \Cref{fig:train_solver_time}).

In the following, we summarise our contributions:
\begin{itemize}
\item We introduce a \textbf{pathwise estimator} of the marginal likelihood gradient and show that, under real-world conditions, the solutions to the linear systems required by this estimator are closer to the origin than those of the standard estimator, allowing our solvers to converge faster. Additionally, these solutions transform into samples from the GP posterior without further matrix inversions, amortising the computational costs of predictive posterior inference.

\item We propose to \textbf{warm start} linear system solvers throughout marginal likelihood optimisation by reusing linear system solutions to initialise the solver in the subsequent step. This results in faster convergence. Although this technically introduces bias into the optimisation, we show that, theoretically and empirically, the optimisation quality does not suffer.

\item We investigate the behaviour of linear system solvers on a \textbf{limited compute budget}, such that reaching the specified tolerance is not guaranteed. Here, warm starting allows the linear system solver to accumulate solver progress across marginal likelihood steps, progressively improving the solution quality of the linear system solver despite early stopping.

\item We demonstrate empirically that the methods above either reduce the required number of iterations until convergence without sacrificing performance or improve the performance if a limited compute budget hinders convergence. Across different UCI regression datasets and linear system solvers, we observe average \textbf{speed-ups of up to} $\mathbf{72\times}$ when solving until the tolerance is reached, and \textbf{increased performance} when the compute budget is limited.
\end{itemize}
Source code available at: \texttt{\href{https://github.com/jandylin/iterative-gaussian-processes}{https://github.com/jandylin/iterative-gaussian-processes}}
\begin{figure}[t]
    \centering
\includegraphics{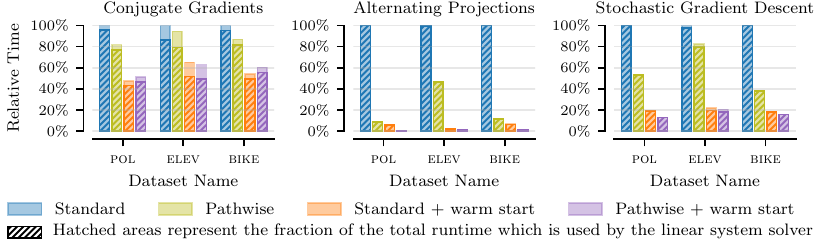}
    \caption{Comparison of relative runtimes for different methods, linear system solvers, and datasets. The linear system solver (hatched areas) dominates the total training time (coloured patches). The pathwise gradient estimator requires less time than the standard estimator. Initialising at the previous solution (warm start) further reduces the runtime of the linear system solver for both estimators.}
    \label{fig:train_solver_time}
    \vspace{-0.3cm}
\end{figure}
\vspace{-0.1cm}
\section{Gaussian Process Regression and Marginal Likelihood Optimisation}
\label{sec:background}
\vspace{-0.1cm}
Formally, a GP is a stochastic process $f: \mathcal{X} \to \mathbb{R}$, such that, for any finite subset $\{ x_i \}_{i=1}^n \subset \mathcal{X}$, the set of random variables $\{ f(x_i) \}_{i=1}^n$ is jointly Gaussian.
In particular, $f$ is uniquely identified by a mean function $\mu(\cdot) = \mathbb{E}[f(\cdot)]$ and a positive-definite kernel function $k(\cdot, \cdot'; \bm{\vartheta}) = \mathrm{Cov}(f(\cdot), f(\cdot'))$ with kernel hyperparameters $\bm{\vartheta}$.
We use a Mat\'ern-$\nicefrac{3}{2}$ kernel with length scales per dimension and a scalar signal scale and write $f \sim \mathrm{GP}(\mu, k)$ to express that $f$ is a GP with mean $\mu$ and kernel $k$.

For GP regression, let the training data consist of $n$ inputs $\bm{x} \subset \mathcal{X}$ and targets $\bm{y} \in \mathbb{R}^n$.
We consider the Bayesian model $y_i = f(x_i) + \epsilon_i$, where each $\epsilon_i \sim \mathcal{N}(0, \sigma^2)$ i.i.d.\ and $f \sim \mathrm{GP}(\mu, k)$.
We assume $\mu = 0$ without loss of generality.
The posterior of this model is $f | \bm{y} \sim \mathrm{GP}(\mu_{f|\bm{y}}, k_{f|\bm{y}})$, with
\begin{align}
\label{eq:gp_posterior}
    \mu_{f|\bm{y}}(\cdot) &= k(\cdot, \bm{x}; \bm{\vartheta}) (k(\bm{x}, \bm{x}; \bm{\vartheta}) + \sigma^2 \mathbf{I})^{-1} \bm{y}, \\
    k_{f|\bm{y}}(\cdot, \cdot') &= k(\cdot, \cdot'; \bm{\vartheta}) - k(\cdot, \bm{x}; \bm{\vartheta}) (k(\bm{x}, \bm{x}; \bm{\vartheta}) + \sigma^2 \mathbf{I})^{-1} k(\bm{x}, \cdot'; \bm{\vartheta}),
\end{align}
where $k(\cdot, \bm{x}; \bm{\vartheta})$, $k(\bm{x}, \cdot; \bm{\vartheta})$ and $k(\bm{x}, \bm{x}; \bm{\vartheta})$ refer to pairwise evaluations, resulting in a $1 \times n$ row vector, a $n \times 1$ column vector and a $n \times n$ matrix respectively.

\paragraph{Pathwise Conditioning}
Wilson et al.\ \cite{wilson20,wilson21} express a GP posterior sample as a random function
\begin{align}
\label{eq:pathwise_conditioning}
    (f|\bm{y})(\cdot) = f(\cdot) + k(\cdot, \bm{x}; \bm{\vartheta})(k(\bm{x}, \bm{x}; \bm{\vartheta}) + \sigma^2 \mathbf{I})^{-1} (\bm{y} - (f(\bm{x}) + \bm{\epsilon})),
\end{align}
where $\bm{\epsilon} \sim \mathcal{N}(\mathbf{0}, \sigma^2 \mathbf{I})$ is a random vector, $f \sim \mathrm{GP}(0, k)$ is a zero-mean prior function sample, and $f(\bm{x})$ is its evaluation at the training data.
Following previous work \cite{wilson20,wilson21,lin2023sampling,lin2024stochastic}, we efficiently approximate the prior function sample using random features \cite{rahimi08, sutherland15} (see \Cref{apd:implementation_details} for details).
Using pathwise conditioning, a single linear solve suffices to evaluate a posterior function sample at arbitrary locations without further linear solves.
In \Cref{sec:pathwise_estimator}, we amortise the cost of this single linear solve during marginal likelihood optimisation to obtain posterior samples efficiently.

\paragraph{The Marginal Likelihood and Its Gradient}
With hyperparameters $\bm{\theta} = \{\bm{\vartheta}, \sigma\}$ and regularised kernel matrix $\mathbf{H}_{\bm{\theta}} = k(\bm{x}, \bm{x}; \bm{\vartheta}) + \sigma^2 \mathbf{I} \in \R^{n\times n}$,
the marginal likelihood $\mathcal{L}$ as a function of $\bm{\theta}$ and its gradient $\nabla_{\theta_k} \mathcal{L}$ with respect to $\theta_k$ can be expressed as
\begin{align}
    \label{eq:mll}
    \mathcal{L}(\bm{\theta}) &= -\frac{1}{2} \bm{y}^{\mathsf{T}} \mathbf{H}_{\bm{\theta}}^{-1} \bm{y} - \frac{1}{2} \log \det \mathbf{H}_{\bm{\theta}} - \frac{n}{2} \log 2 \pi, \\
    \label{eq:mll_grad}
    \nabla_{\theta_k} \mathcal{L}(\bm{\theta}) &= \frac{1}{2} (\mathbf{H}_{\bm{\theta}}^{-1} \bm{y})^{\mathsf{T}} \frac{\partial \mathbf{H}_{\bm{\theta}}}{\partial \theta_k} \mathbf{H}_{\bm{\theta}}^{-1} \bm{y} - \frac{1}{2} \mathrm{tr} \left( \mathbf{H}_{\bm{\theta}}^{-1} \frac{\partial \mathbf{H}_{\bm{\theta}}}{\partial \theta_k} \right),
\end{align}
where the partial derivative of $\mathbf{H}_{\bm{\theta}}$ with respect to $\theta_k$ is a $n \times n$ matrix.
We assume $n$ is too large to compute the inverse or log-determinant of $\mathbf{H}_{\bm{\theta}}$ and iterative methods are used instead.

\subsection{Hierarchical View of Marginal Likelihood Optimisation for Iterative Gaussian Processes}
\label{sec:marginal_likelihood_iterative}
\begin{wrapfigure}{R}{0.3\textwidth}
\vspace{-0.45cm}
\centering
\begin{tikzpicture}[
  node/.style={rectangle, rounded corners, minimum width=3.6cm, minimum height=1cm, draw=black},
  arrow/.style={thick,<-},
  node distance=1.3cm
]

\node (optimiser) [node, align=center] {Outer-Loop Optimiser\\\scriptsize{e.g.\ L-BFGS, Adam}};
\node (gradient) [node, below of=optimiser, align=center] {Gradient Estimator\\\scriptsize{e.g.\ Hutchinson trace estimator}};
\node (solver) [node, below of=gradient, align=center] {Linear System Solver\\\scriptsize{e.g.\ conjugate gradients}};

\draw [arrow] (optimiser) -- (gradient);
\draw [arrow] (gradient) -- (solver);

\end{tikzpicture}
\caption{Marginal likelihood optimisation for iterative GPs.}
\label{fig:diagram}
\vspace{-1.5cm}
\end{wrapfigure}
Marginal likelihood optimisation for iterative GPs consists of bi-level optimisation, where the outer loop maximises the marginal likelihood \eqref{eq:mll} using stochastic estimates of its gradient \eqref{eq:mll_grad}.
Computing these gradient estimates requires the solution to systems of linear equations.
These solutions are obtained using an iterative solver in the inner loop.
\Cref{fig:diagram} illustrates this three-level hierarchy.

\paragraph{Outer-Loop Optimiser}
The outer-loop optimiser maximises the marginal likelihood $\mathcal{L}$ \eqref{eq:mll} using its gradient \eqref{eq:mll_grad}.
Common choices are L-BFGS \cite{artemev2021tighter}, when exact gradients are available, and Adam \cite{Adam} in the large-data setting, when stochastic approximation is required.
We consider the case where gradients are stochastic and use Adam.

\paragraph{Gradient Estimator}
The gradient \eqref{eq:mll_grad} involves two computationally expensive components: linear solves against the targets $\mathbf{H}_{\bm{\theta}}^{-1} \bm{y}$ and the trace term $\mathrm{tr} \left( \mathbf{H}_{\bm{\theta}}^{-1} \partial \mathbf{H}_{\bm{\theta}} / \partial \theta_k \right)$.
An unbiased estimate of the latter can be obtained using $s$ probe vectors and Hutchinson's trace estimator \cite{Hutchinson1990},
\begin{align}
    \label{eq:trace}
    \mathrm{tr} \left( \mathbf{H}_{\bm{\theta}}^{-1} \frac{\partial \mathbf{H}_{\bm{\theta}}}{\partial \theta_k} \right)
    &= \mathbb{E}_{\bm{z}} \left[ \bm{z}^{\mathsf{T}} \mathbf{H}_{\bm{\theta}}^{-1} \frac{\partial \mathbf{H}_{\bm{\theta}}}{\partial \theta_k} \bm{z} \right]
    \approx \frac{1}{s} \sum_{j=1}^s \bm{z}_j^{\mathsf{T}} \mathbf{H}_{\bm{\theta}}^{-1} \frac{\partial \mathbf{H}_{\bm{\theta}}}{\partial \theta_k} \bm{z}_j,
\end{align}
where the probe vectors $\bm{z}_j \in \R^n$ satisfy $\forall j: \mathbb{E} [\bm{z}_j \bm{z}_j^{\mathsf{T}}] = \mathbf{I}$, and $\bm{z}_j^{\mathsf{T}} \mathbf{H}_{\bm{\theta}}^{-1}$ is obtained using a linear solve.
We refer to this as the \emph{standard estimator} and set $s = 64$, unless otherwise specified.

\paragraph{Linear System Solver}
Substituting the trace estimator \eqref{eq:trace} back into the gradient \eqref{eq:mll_grad}, we obtain an unbiased gradient estimate in terms of the solution to a batch of systems of linear equations,
\begin{align}
\label{eq:standard_system}
    \mathbf{H}_{\bm{\theta}} \, \left[ \, \bm{v}_{\bm{y}}, \bm{v}_1, \dots, \bm{v}_s \, \right]
    = \left[ \, \bm{y}, \bm{z}_1, \dots, \bm{z}_s \, \right],
\end{align}
which share the same coefficient matrix $\mathbf{H}_{\bm{\theta}}$.
Since $\mathbf{H}_{\bm{\theta}}$ is positive-definite, the solution $\bm{v} = \mathbf{H}_{\bm{\theta}}^{-1} \bm{b}$ to the system $\mathbf{H}_{\bm{\theta}} \, \bm{v} = \bm{b}$ can be obtained by finding the unique minimiser of the quadratic objective
\begin{align}
    \label{eq:quadratic}
    \bm{v} = \underset{\bm{u}}{\arg \min} \;\; \frac{1}{2} \bm{u}^{\mathsf{T}} \mathbf{H}_{\bm{\theta}} \, \bm{u} - \bm{u}^{\mathsf{T}} \bm{b},
\end{align}
facilitating the use of iterative solvers. Most popular in the GP literature are conjugate gradients (CG) \cite{gardner18,WangPGT2019exactgp,wilson20,wilson21}, alternating projections (AP) \cite{Shalev2012SDCA,tu2016large,wu2024largescale} and stochastic gradient descent (SGD) \cite{lin2023sampling,lin2024stochastic}.
We consider these in our study and provide detailed descriptions of them in \Cref{apd:implementation_details}.
Solvers are often run until the relative residual norm $\Vert \bm{b} -\mathbf{H}_{\bm{\theta}} \bm{u} \Vert / \Vert \bm{b} \Vert$ reaches a certain tolerance $\tau$ \cite{WangPGT2019exactgp,maddox2021iterative,wu2024largescale}.
We set $\tau = 0.01$, following Maddox et al.\ \cite{maddox2021iterative}.
The linear system solver in the inner loop dominates the computational costs of marginal likelihood optimisation for iterative GPs, as shown in \Cref{fig:train_solver_time}. Therefore, improving linear system solvers is the main focus of our work.
\section{Pathwise Estimation of Marginal Likelihood Gradients}\label{sec:pathwise_estimator}
We introduce the \emph{pathwise estimator}, an alternative to the standard estimator \eqref{eq:trace} which reduces the required number of linear system solver iterations until convergence (see \Cref{fig:initial_distance}). Additionally, the estimator simultaneously provides us with posterior function samples via pathwise conditioning, hence the name \emph{pathwise} estimator.
This facilitates predictions without further linear solves.

We modify the standard estimator \eqref{eq:trace} to absorb $\mathbf{H}_{\bm{\theta}}^{-1}$ into the distribution of the probe vectors \cite{Antoran22sampling},
\begin{align}
    \label{eq:pathwise_estimator}
    \mathrm{tr} \left( \mathbf{H}_{\bm{\theta}}^{-1} \frac{\partial \mathbf{H}_{\bm{\theta}}}{\partial \theta_k} \right)
    = \mathrm{tr} \left( \mathbf{H}_{\bm{\theta}}^{-\frac{1}{2}} \frac{\partial \mathbf{H}_{\bm{\theta}}}{\partial \theta_k} \mathbf{H}_{\bm{\theta}}^{-\frac{1}{2}}\right)
    = \mathbb{E}_{\bm{\hat{z}}} \left[ \bm{\hat{z}}^{\mathsf{T}} \frac{\partial \mathbf{H}_{\bm{\theta}}}{\partial \theta_k} \bm{\hat{z}} \right]
    \approx \; \frac{1}{s} \sum_{j=1}^s \bm{\hat{z}}_j^{\mathsf{T}} \frac{\partial \mathbf{H}_{\bm{\theta}}}{\partial \theta_k} \bm{\hat{z}}_j,
\end{align}
where $\forall j: \mathbb{E} [\bm{\hat{z}}_j \bm{\hat{z}}_j^{\mathsf{T}}] = \mathbf{H}_{\bm{\theta}}^{-1}$.
Probe vectors $\bm{\hat{z}}$ with the desired second moment can be obtained as
\begin{align}
\begin{aligned}
    f(\bm{x}) &\sim \mathcal{N}(\mathbf{0}, k(\bm{x}, \bm{x}; \bm{\vartheta})) \\
    \bm{\epsilon} &\sim \mathcal{N}(\mathbf{0}, \sigma^2 \mathbf{I})
\end{aligned}
\implies
\bm{\xi} \sim \mathcal{N}(\mathbf{0}, \mathbf{H}_{\bm{\theta}})
\implies
\bm{\hat{z}} = \mathbf{H}_{\bm{\theta}}^{-1} \bm{\xi} \sim \mathcal{N}(\mathbf{0}, \mathbf{H}_{\bm{\theta}}^{-1}),
\end{align}
where $\bm{\xi} = f(\bm{x}) + \bm{\epsilon}$.
Akin to the standard estimator in \Cref{sec:marginal_likelihood_iterative}, we obtain $\bm{v}_{\bm{y}}$ and $\bm{\hat{z}}_j$ by solving
\begin{align}\label{eq:pathwise_system}
    \mathbf{H}_{\bm{\theta}} \, \left[ \, \bm{v}_{\bm{y}}, \bm{\hat{z}}_1, \dots, \bm{\hat{z}}_s \, \right]
    = \left[ \, \bm{y}, \bm{\xi}_1, \dots, \bm{\xi}_s \, \right].
\end{align}
\vspace{-0.2cm}
\begin{figure}[t]
    \centering
    \includegraphics{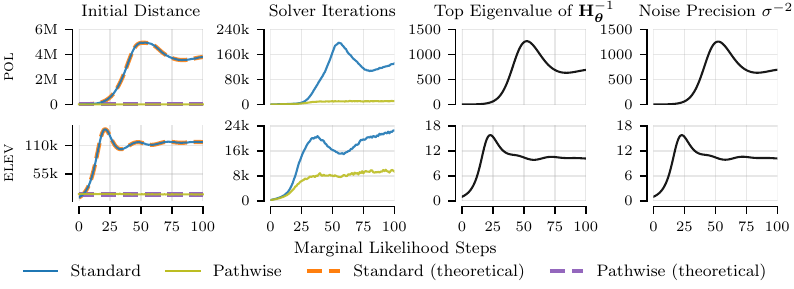}
    \caption{On the \textsc{pol} and \textsc{elevators} datasets, the pathwise estimator results in a lower RKHS distance \eqref{eq:RKHS_distance} between solver initialisation and solution, as predicted by theory (\ref{eq:standard_dist},\ref{eq:pathwise_dist}) (left). This results in fewer AP iterations until reaching the tolerance (left middle). When using the standard estimator, the initial distance follows the top eigenvalue of $\mathbf{H}_{\bm{\theta}}^{-1}$ (right middle), which is strongly related to the noise precision (right). The latter tends to increase during marginal likelihood optimisation when fitting the data. The effects are greater on \textsc{pol} due to the higher noise precision.}
    \label{fig:initial_distance}
\end{figure}
\paragraph{Initial Distance to the Linear System Solution}
Under realistic conditions, the pathwise estimator moves the solution of the linear system closer to the origin.
To show this, we consider the generic linear system $\mathbf{H}_{\bm{\theta}} \bm{u} = \bm{b}$ and measure the RKHS distance between the initialisation $\bm{u}_{\mathrm{init}}$ and the solution $\bm{u} = \mathbf{H}_{\bm{\theta}}^{-1} \, \bm{b}$ as $\lVert \bm{u}_{\mathrm{init}} - \bm{u} \rVert_{\mathbf{H}_{\bm{\theta}}}^2$.
With $\bm{u}_{\mathrm{init}} = \mathbf{0}$, which is standard \cite{gardner18,WangPGT2019exactgp,Antoran2022linearised,lin2023sampling,wu2024largescale,lin2024stochastic},
\begin{align}
\label{eq:RKHS_distance}
    \lVert \bm{u}_{\mathrm{init}} - \bm{u} \rVert_{\mathbf{H}_{\bm{\theta}}}^2
    &= \lVert \bm{u} \rVert_{\mathbf{H}_{\bm{\theta}}}^2
    = \bm{u}^{\mathsf{T}} \mathbf{H}_{\bm{\theta}} \bm{u}
    = \bm{b}^{\mathsf{T}} \mathbf{H}_{\bm{\theta}}^{-1} \mathbf{H}_{\bm{\theta}} \mathbf{H}_{\bm{\theta}}^{-1} \bm{b}
    = \bm{b}^{\mathsf{T}} \mathbf{H}_{\bm{\theta}}^{-1} \bm{b}.
\end{align}
Since $\bm{b}$ is a random vector ($\bm{z}$ in \eqref{eq:standard_system} and $\bm{\xi}$ in \eqref{eq:pathwise_system}), we analyse the expected squared distance 
\begin{align}
    \mathbb{E} \left[ \bm{b}^{\mathsf{T}} \mathbf{H}_{\bm{\theta}}^{-1} \bm{b} \right]
    &= \mathbb{E} \left[ \mathrm{tr} \left( \bm{b}^{\mathsf{T}} \mathbf{H}_{\bm{\theta}}^{-1} \bm{b} \right) \right]
    = \mathbb{E} \left[ \mathrm{tr} \left( \bm{b} \bm{b}^{\mathsf{T}} \mathbf{H}_{\bm{\theta}}^{-1} \right) \right]
    = \mathrm{tr} \left( \mathbb{E} \left[ \bm{b} \bm{b}^{\mathsf{T}} \right] \mathbf{H}_{\bm{\theta}}^{-1} \right).
\end{align}
For the standard estimator \eqref{eq:trace}, we substitute $\bm{b} := \bm{z}$ with $\mathbb{E} \left[ \bm{z} \bm{z}^{\mathsf{T}} \right] = \mathbf{I}$, yielding
\begin{align}
\label{eq:standard_dist}
    \mathbb{E} \left[ \lVert \bm{u}_{\mathrm{init}} - \bm{u} \rVert_{\mathbf{H}_{\bm{\theta}}}^2 \right]
    &= \mathrm{tr}\left( \mathbb{E} \left[ \bm{z} \bm{z}^{\mathsf{T}} \right] \mathbf{H}_{\bm{\theta}}^{-1} \right)
    = \mathrm{tr}\left( \mathbf{I} \, \mathbf{H}_{\bm{\theta}}^{-1} \right)
    = \mathrm{tr}\left( \mathbf{H}_{\bm{\theta}}^{-1} \right).
\end{align}
For the pathwise estimator \eqref{eq:pathwise_estimator}, we substitute $\bm{b} := \bm{\xi}$ with $\mathbb{E} \left[ \bm{\xi} \bm{\xi}^{\mathsf{T}} \right] = \mathbf{H}_{\bm{\theta}}$, yielding
\begin{align}
\label{eq:pathwise_dist}
    \mathbb{E} \left[ \lVert \bm{u}_{\mathrm{init}} - \bm{u} \rVert_{\mathbf{H}_{\bm{\theta}}}^2 \right]
    &= \mathrm{tr}\left( \mathbb{E} \left[ \bm{\xi} \bm{\xi}^{\mathsf{T}} \right] \mathbf{H}_{\bm{\theta}}^{-1} \right)
    = \mathrm{tr}\left( \mathbf{H}_{\bm{\theta}} \, \mathbf{H}_{\bm{\theta}}^{-1} \right)
    = \mathrm{tr}\left( \mathbf{I} \right)
    = n.
\end{align}
The initial distance for the standard estimator is equal to the trace of $\mathbf{H}_{\bm{\theta}}^{-1}$, whereas it is constant for the pathwise estimator.
\Cref{fig:initial_distance} illustrates that this trace follows the top eigenvalue, which roughly matches the noise precision.
As the model fits the data better, the noise precision increases, increasing the initial distance for the standard but not for the pathwise estimator.
In practice, the latter leads to faster solver convergence, especially for problems with high noise precision (see \Cref{tab:results_main}).

\begin{figure}[t]
    \centering
    \includegraphics{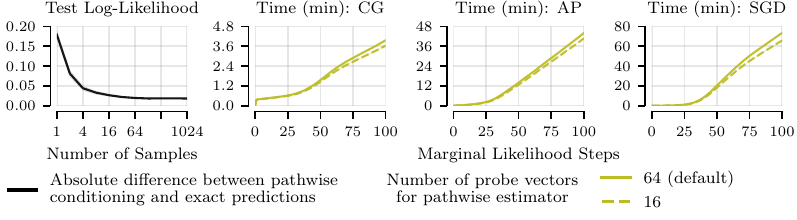}
    \caption{On the \textsc{pol} dataset, increasing the number of posterior samples improves the performance of pathwise conditioning until diminishing returns start to manifest with more than 64 samples (left). Furthermore, with $4\times$ as many probe vectors, the total cumulative runtime only increases by around 10\% because the computational costs are dominated by shared kernel function evaluations (right).}
    \label{fig:n_samples}
\end{figure}
\begin{figure}[b]
    \centering
    \includegraphics{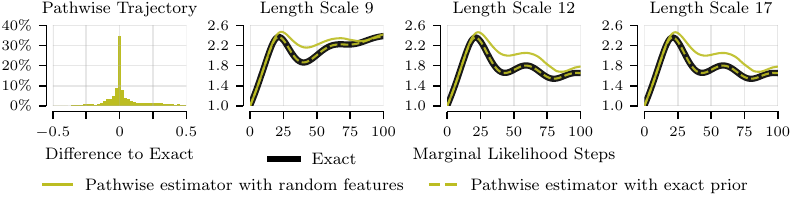}
    \caption{Across all datasets and marginal likelihood steps, most hyperparameter trajectories of the pathwise estimator rarely differ from exact optimisation, as shown by the histogram illustrating the differences between hyperparameters (left). On selected length scales of the \textsc{elevators} dataset, the pathwise estimator deviates due to the use of random features to approximate prior function samples. With exact samples from the prior, the pathwise estimator matches exact optimisation again (right).}
    \label{fig:pathwise_exact_prior}
\end{figure}

\paragraph{Amortising Linear Solves for Optimisation and Prediction}
The name of the \emph{pathwise} estimator comes from the fact that solving the linear systems \eqref{eq:pathwise_system} provides us with all of the terms we need to construct a set of $s$ posterior samples via pathwise conditioning \eqref{eq:pathwise_conditioning}. Each of these is given by
\begin{align}
    (f|\bm{y})(\cdot) = f(\cdot) + k(\cdot, \bm{x}; \bm{\vartheta}) \, \mathbf{H}_{\bm{\theta}}^{-1} (\bm{y} - \bm{\xi})
    =  f(\cdot) + k(\cdot, \bm{x}; \bm{\vartheta}) (\bm{v}_{\bm{y}} - \bm{\hat{z}}).
\end{align}
We can use these to make predictions without requiring any additional linear system solves. 

\begin{figure}[t]
    \centering
    \includegraphics{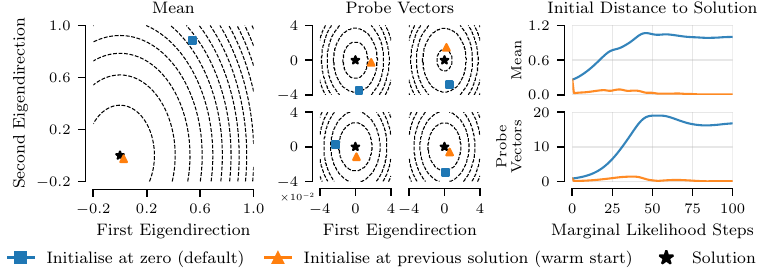}
    \caption{Two-dimensional cross-sections along top eigendirections of the inner-loop quadratic objective after 20 marginal likelihood steps on the \textsc{pol} dataset. The current solution is placed at the origin of coordinates (left and middle). Warm starting significantly reduces the initial root-mean-square RKHS distance to the solution throughout marginal likelihood optimisation (right).}
    \label{fig:2d_cross_section}
    \vspace{0.25cm}
    \includegraphics{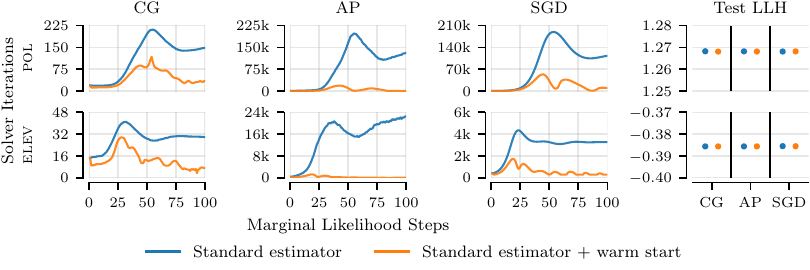}
    \caption{Required number of solver iterations to the tolerance $\tau = 0.01$ during marginal likelihood optimisation on the \textsc{pol} and \textsc{elevators} datasets. Warm starting with the previous solution reduces the required number of iterations to reach the tolerance without sacrificing predictive performance.}
    \label{fig:solver_iterations_main}
\end{figure}
\paragraph{How Many Probe Vectors and Posterior Samples Do We Need?}
In the literature \cite{gardner18,maddox2021iterative, Antoran22sampling,wu2024largescale}, it is common to use $s \leq 16$ probe vectors for marginal likelihood optimisation.
However, a larger number of posterior samples, around $s = 64$, is necessary to make accurate predictions \cite{Antoran22sampling,lin2023sampling,lin2024stochastic} (see \Cref{fig:n_samples}).
Thus, to amortise linear system solves across marginal likelihood optimisation and prediction, we must use the same number of probes for both.
Interestingly, as shown in \Cref{fig:n_samples}, using 64 instead of 16 probe vectors only increases the runtime by around 10\% because the computational costs are dominated by kernel function evaluations, which are shared among probe vectors.

\paragraph{Estimator Variance}
The standard estimator with Gaussian probe vectors and the pathwise estimator have the same variance if $\mathbf{H}_{\bm{\theta}}$ and $\partial \mathbf{H}_{\bm{\theta}} / \partial \theta_k$ commute with each other (see \Cref{apd:estimator_variance}). There has been work developing trace estimators with lower variance \cite{Hutch++,XTrace}, however, we did not pursue these as we find variance to be sufficiently low, even when relying on only $s = 16$ probe vectors.

\paragraph{Approximate Prior Function Samples Using Random Features}
In practice, the pathwise estimator requires samples from the prior $f \sim \mathrm{GP}(0, k)$, which are intractable for large datasets without the use of random features \cite{wilson20,wilson21}.
In \Cref{fig:pathwise_exact_prior}, we show that, despite using random features, most of the time the marginal likelihood optimisation trajectory of the pathwise estimator matches the trajectory of exact optimisation using Cholesky factorisation and backpropagation.
Further, we confirm that deviations of the pathwise estimator are indeed due to the use of random features by demonstrating that we can remove these deviations using exact samples from the prior instead.

\section{Warm Starting Linear System Solvers}
\label{sec:warm_start}
Linear system solvers are typically initialised at zero \cite{gardner18,WangPGT2019exactgp,Antoran2022linearised,lin2023sampling,wu2024largescale,lin2024stochastic}.\footnote{Notable exceptions are Artemev et al.\ \cite{artemev2021tighter}, who warm start $\bm{v}_{\bm{y}}$ in a sparse lower bound on $\mathcal{L}$, and Antor\'an et al.\ \cite{Antoran22sampling}, who warm start a stochastic gradient descent solver for finite-dimensional linear models.}
However, because the outer-loop marginal likelihood optimisation does not change the hyperparameters much between consecutive steps, we expect that the solution to inner-loop linear systems also does not change much between consecutive steps (see \Cref{apd:warm_start_math} for a more formal argument).
Therefore, we suggest to \emph{warm start} linear system solvers by initialising them at the solution of the previous \cite{lin2024warm}.
This requires that the targets of the linear systems, $\bm{z}_j$ or $\bm{\xi}_j$, are not resampled throughout optimisation, which can introduce bias \cite{chen2020stochastic}.
However, we find that warm starting consistently provides gains across all linear system solvers for both the standard and the pathwise estimator, and that the bias is negligible.

\paragraph{Visualising Warm Starts}
\Cref{fig:2d_cross_section} visualises the two top eigendirections of the inner-loop quadratic objective on \textsc{pol}.
Throughout training, warm starting solvers at the solution to the previous linear system results in a substantially smaller initial distance to the current solution.

\paragraph{Effects on Linear System Solver Convergence}
Reducing the initial RKHS distance to the solution reduces the required number of solver iterations until the tolerance $\tau = 0.01$ is reached for all solvers and all five datasets, as shown in \Cref{fig:solver_iterations_main}, \Cref{tab:results_main} and \Cref{apd:results}.
However, the effectiveness depends on the solver type.
CG is more sensitive to the direction of descent rather than the distance to the solution because it uses line searches to take big steps. It only obtains a $2.1\times$ speed-up on average. AP and SGD benefit more, with average speed-ups of $18.9\times$ and $5.1\times$, respectively.

\begin{table}[t]
\caption{Test log-likelihoods, total training times, and average speed-up among datasets for CG, AP, and SGD after 100 outer-loop marginal likelihood steps with learning rate of 0.1. We consider five datasets with $n<50$k, which allows us to solve to tolerance, and report the mean over 10 data splits.}
\label{tab:results_main}
\vspace{0.2cm}
\centering
\small
\setlength{\tabcolsep}{4pt}
\begin{tabular}{l c c | c c c c c | c c c c c | r}
\toprule
& \scriptsize{path} & \scriptsize{warm} & \multicolumn{5}{c|}{Test Log-Likelihood} & \multicolumn{5}{c|}{Total Time (min)} & \multicolumn{1}{c}{Average} \\
& \scriptsize{wise} & \scriptsize{start} & \textsc{pol} & \textsc{elev} & \textsc{bike} & \textsc{prot} & \textsc{kegg} & \textsc{pol} & \textsc{elev} & \textsc{bike} & \textsc{prot} & \textsc{kegg} & \multicolumn{1}{c}{Speed-Up} \\
\midrule
\multirow{4}{*}{\rotatebox[origin=c]{90}{CG}} & & & 1.27 & -0.39 & 2.15 & -0.59 & 1.08 & 4.83 & 1.58 & 5.08 & 29.9 & 28.0 & --- \\
 & \checkmark & & 1.27 & -0.39 & 2.07 & -0.62 & 1.08 & 3.96 & 1.49 & 4.41 & 20.0 & 26.4 & \textbf{1.2} $\times$ \\
 & & \checkmark & 1.27 & -0.39 & 2.15 & -0.59 & 1.08 & 2.28 & 1.03 & 2.74 & 11.5 & 12.8 & \textbf{2.1} $\times$ \\
 & \checkmark & \checkmark & 1.27 & -0.39 & 2.06 & -0.62 & 1.08 & 2.47 & 1.00 & 3.07 & 13.7 & 13.0 & \textbf{1.9} $\times$ \\
\midrule
\multirow{4}{*}{\rotatebox[origin=c]{90}{AP}} & & & 1.27 & -0.39 & 2.15 & -0.59 & --- & 493. & 77.8 & 302. & 131. & > 24 h & --- \\
 & \checkmark & & 1.27 & -0.39 & 2.07 & -0.62 & 1.08 & 27.9 & 1.67 & 19.9 & 16.4 & 211. & > \textbf{5.4} $\times$ \\
 & & \checkmark & 1.27 & -0.39 & 2.15 & -0.59 & 1.08 & 44.0 & 36.4 & 35.1 & 55.8 & 491. & > \textbf{18.9} $\times$ \\
 & \checkmark & \checkmark & 1.27 & -0.39 & 2.06 & -0.62 & 1.08 & 3.90 & 1.21 & 5.40 & 12.3 & 14.0 & > \textbf{72.1} $\times$ \\
\midrule
\multirow{4}{*}{\rotatebox[origin=c]{90}{SGD}} & & & 1.27 & -0.39 & 2.15 & -0.59 & 1.08 & 139. & 5.54 & 412. & 75.2 & 620. & --- \\
 & \checkmark & & 1.27 & -0.39 & 2.07 & -0.63 & 1.08 & 73.6 & 4.58 & 156. & 24.0 & 412. & \textbf{2.1} $\times$ \\
 & & \checkmark & 1.27 & -0.39 & 2.15 & -0.59 & 1.08 & 26.5 & 1.22 & 74.3 & 11.2 & 168. & \textbf{5.1} $\times$ \\
 & \checkmark & \checkmark & 1.27 & -0.39 & 2.06 & -0.62 & 1.07 & 17.9 & 1.14 & 64.2 & 11.9 & 58.7 & \textbf{7.2} $\times$ \\
\bottomrule
\end{tabular}
\end{table}

\begin{figure}[t]
    \centering
    \includegraphics{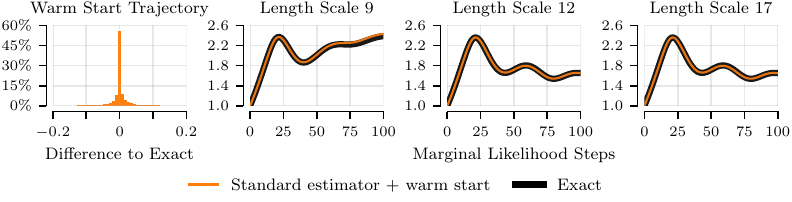}
    \caption{Across all marginal likelihood steps and datasets, warm starting results in hyperparameter trajectories which barely differ from exact optimisation, as shown by the histogram (left). On the same selected length scales from \Cref{fig:pathwise_exact_prior}, warm starting matches exact optimisation (right).}
    \label{fig:warm_start_trajectory}
\end{figure}

\paragraph{Does Warm Starting Introduce Bias?}
A potential concern when warm starting is that the latter introduces bias into the optimisation trajectory because the linear system targets are not resampled throughout optimisation.
Although individual gradient estimates are unbiased, estimates are correlated along the optimisation trajectory.
In fact, after fixing the targets, gradients become deterministic and it is unclear whether the induced optimum converges to the true optimum.\footnote{The concern might be likened to how pointwise convergence of integrable functions does not always imply convergence of the integrals of those functions, potentially biasing the optima of the limit of the integrals.}
Fortunately, one can show that the marginal likelihood at the optimum implied by these gradients will converge in probability to the marginal likelihood of the true optimum.
\begin{theorem}
    \label{thm:bound-on-biased-optimum}
    (informal) Under reasonable assumptions, the marginal likelihood $\mathcal{L}$ of the hyperparameters obtained by maximising the objective implied by the warm-started gradients $\tilde{\bm{\theta}}^*$ will converge in probability to the marginal likelihood of a true maximum $\bm{\theta}^*$: $\mathcal{L}(\tilde{\bm{\theta}}^*) \overset{p}{\to} \mathcal{L}(\bm{\theta}^*)$ as $s \to \infty$.
\end{theorem}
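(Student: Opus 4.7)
My plan is to recognise warm starting as a sample average approximation (SAA): with the probe vectors $\{\bm{z}_j\}_{j=1}^s$ fixed across the outer loop, the warm-started gradient estimator is literally the gradient of a deterministic objective $\tilde{\mathcal{L}}_s$, so the question reduces to showing that the maximisers of this SAA objective inherit consistency as $s \to \infty$.

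First I would construct $\tilde{\mathcal{L}}_s$ explicitly. Using $\log\det \mathbf{H}_{\bm{\theta}} = \mathrm{tr}(\log \mathbf{H}_{\bm{\theta}})$ and Hutchinson's identity, the natural SAA analogue of $\mathcal{L}$ is
\begin{align*}
\tilde{\mathcal{L}}_s(\bm{\theta}) \;=\; -\tfrac{1}{2}\,\bm{y}^{\mathsf{T}} \mathbf{H}_{\bm{\theta}}^{-1} \bm{y} \;-\; \tfrac{1}{2s}\sum_{j=1}^{s} \bm{z}_j^{\mathsf{T}} \log(\mathbf{H}_{\bm{\theta}})\,\bm{z}_j \;-\; \tfrac{n}{2}\log 2\pi.
\end{align*}
Differentiating term by term (using $\partial_{\theta_k} \log \mathbf{H}_{\bm{\theta}}$ interpreted via the integral/spectral representation, which sidesteps non-commutativity of $\mathbf{H}_{\bm{\theta}}$ and $\partial_{\theta_k}\mathbf{H}_{\bm{\theta}}$) recovers exactly the fixed-probe gradient estimator from \Cref{eq:mll_grad,eq:trace}. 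By construction, $\mathbb{E}_{\bm{z}}[\tilde{\mathcal{L}}_s(\bm{\theta})] = \mathcal{L}(\bm{\theta})$ for every $\bm{\theta}$.

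Next I would establish uniform convergence on a compact hyperparameter domain $\Theta$, the key technical step. Assuming $\Theta$ is chosen so that the spectrum of $\mathbf{H}_{\bm{\theta}}$ is bounded above and bounded away from zero (which is reasonable since $\sigma^2 > 0$ is part of $\bm{\theta}$ and length scales are bounded), the maps $\bm{\theta} \mapsto \bm{y}^{\mathsf{T}}\mathbf{H}_{\bm{\theta}}^{-1}\bm{y}$ and $\bm{\theta}\mapsto \bm{z}^{\mathsf{T}} \log(\mathbf{H}_{\bm{\theta}})\bm{z}$ are continuously differentiable with Lipschitz constants controlled by polynomial functions of $\|\bm{z}\|^2$. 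For each fixed $\bm{\theta}$, Hanson--Wright gives sub-exponential concentration of $\tilde{\mathcal{L}}_s(\bm{\theta})$ around $\mathcal{L}(\bm{\theta})$ at rate $1/\sqrt{s}$; upgrading to uniform convergence via a standard $\varepsilon$-net argument on $\Theta$ combined with the Lipschitz bound gives
\begin{align*}
\sup_{\bm{\theta}\in\Theta} \bigl|\tilde{\mathcal{L}}_s(\bm{\theta}) - \mathcal{L}(\bm{\theta})\bigr| \;\overset{p}{\to}\; 0 \qquad \text{as } s\to\infty.
\end{align*}

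Finally, I would close with the standard SAA sandwich. Let $\bm{\theta}^* \in \arg\max_\Theta \mathcal{L}$ and $\tilde{\bm{\theta}}^* \in \arg\max_\Theta \tilde{\mathcal{L}}_s$. Then
\begin{align*}
0 \;\leq\; \mathcal{L}(\bm{\theta}^*) - \mathcal{L}(\tilde{\bm{\theta}}^*) \;=\; \bigl[\mathcal{L}(\bm{\theta}^*) - \tilde{\mathcal{L}}_s(\bm{\theta}^*)\bigr] + \bigl[\tilde{\mathcal{L}}_s(\bm{\theta}^*) - \tilde{\mathcal{L}}_s(\tilde{\bm{\theta}}^*)\bigr] + \bigl[\tilde{\mathcal{L}}_s(\tilde{\bm{\theta}}^*) - \mathcal{L}(\tilde{\bm{\theta}}^*)\bigr] \;\leq\; 2\sup_{\bm{\theta}\in\Theta}\bigl|\tilde{\mathcal{L}}_s(\bm{\theta}) - \mathcal{L}(\bm{\theta})\bigr|,
\end{align*}
since the middle bracket is non-positive by definition of $\tilde{\bm{\theta}}^*$. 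Combining with uniform convergence yields $\mathcal{L}(\tilde{\bm{\theta}}^*) \overset{p}{\to} \mathcal{L}(\bm{\theta}^*)$. The main obstacle is the uniform convergence step: Hoeffding is insufficient because $\bm{z}^{\mathsf{T}}\log(\mathbf{H}_{\bm{\theta}})\bm{z}$ has sub-exponential, not sub-Gaussian, tails, so I would rely on Hanson--Wright (or Bernstein for quadratic forms in Gaussians) together with an assumption that eigenvalues of $\mathbf{H}_{\bm{\theta}}$ stay in a fixed interval $[\lambda_{\min}, \lambda_{\max}]$ throughout $\Theta$. Identifying a clean, minimal set of ``reasonable assumptions'' that guarantees this spectral control without trivialising the statement is the most delicate aspect of the argument; everything else is standard M-estimation machinery.
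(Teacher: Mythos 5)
Your overall architecture (fix the probes, view the resulting deterministic objective as a sample average approximation, prove uniform convergence over a compact $\Theta$, then close with the two-sided sandwich using optimality of $\tilde{\bm{\theta}}^*$) is sound and its final step is essentially the same inequality the paper uses. However, there is a genuine gap at the very first step: the claim that differentiating
\begin{equation*}
\tilde{\mathcal{L}}_s(\bm{\theta}) = -\tfrac{1}{2}\bm{y}^{\mathsf{T}}\mathbf{H}_{\bm{\theta}}^{-1}\bm{y} - \tfrac{1}{2s}\sum_{j=1}^{s}\bm{z}_j^{\mathsf{T}}\log(\mathbf{H}_{\bm{\theta}})\bm{z}_j - \tfrac{n}{2}\log 2\pi
\end{equation*}
``recovers exactly the fixed-probe gradient estimator'' is false in general. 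The derivative of the log term is $\bm{z}_j^{\mathsf{T}}\bigl(\partial_{\theta_k}\log\mathbf{H}_{\bm{\theta}}\bigr)\bm{z}_j$ with
\begin{equation*}
\partial_{\theta_k}\log\mathbf{H}_{\bm{\theta}} = \int_0^\infty (\mathbf{H}_{\bm{\theta}}+t\mathbf{I})^{-1}\,\frac{\partial\mathbf{H}_{\bm{\theta}}}{\partial\theta_k}\,(\mathbf{H}_{\bm{\theta}}+t\mathbf{I})^{-1}\,dt,
\end{equation*}
and this matrix equals $\mathbf{H}_{\bm{\theta}}^{-1}\partial_{\theta_k}\mathbf{H}_{\bm{\theta}}$ only when the two factors commute. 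The integral representation does not ``sidestep'' the non-commutativity: it gives the correct derivative of your surrogate, which is a \emph{different} random quantity from the estimator \eqref{eq:trace} that the warm-started optimiser actually follows (the two agree only in expectation, via cyclicity of the trace, not pathwise). Consequently your $\tilde{\mathcal{L}}_s$ is not an antiderivative of the warm-started gradient field, its maximiser is not the point the optimiser converges to, and the sandwich argument is being applied to the wrong objective. This is precisely why the paper does not construct an explicit surrogate: it works directly at the gradient level, \emph{assumes} the fixed-probe field $\tilde{\bm{g}}$ is conservative (explicitly flagging that in general it need not be), and bounds $\lvert(\mathcal{L}(\bm{\theta}')-\mathcal{L}(\bm{\theta}))-(\tilde{\mathcal{L}}(\bm{\theta}')-\tilde{\mathcal{L}}(\bm{\theta}))\rvert$ by integrating the gradient error along a line segment.

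Two smaller points of comparison. First, your route needs a union bound over an $\varepsilon$-net of the parameter domain $\Theta$ with Lipschitz control in $\bm{\theta}$ whose constants involve $\|\bm{z}_j\|^2$; the paper's gradient-level route gets uniformity over $\Theta$ essentially for free, because the error factors as $\sum_i \lambda_i\,\bm{q}_i^{\mathsf{T}}\mathbf{M}\bm{p}_i$ with $\mathbf{M}=\frac{1}{s}\sum_j\bm{z}_j\bm{z}_j^{\mathsf{T}}-\mathbf{I}$ independent of $\bm{\theta}$, so a single high-probability bound on $\opnorm{\mathbf{M}}$ (via an $\varepsilon$-net on the \emph{sphere} plus Bernstein) controls the gradient error at every $\bm{\theta}$ simultaneously. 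Second, your instinct that Hoeffding fails and sub-exponential concentration is needed is correct and matches the paper's use of Bernstein for the quadratic forms. If you repaired the first step --- either by adopting the paper's conservativity assumption on the actual estimator, or by redefining the method so that the optimiser genuinely follows $\nabla\tilde{\mathcal{L}}_s$ --- the remainder of your argument would go through, and in the latter case conservativity would hold by construction, which is an attractive feature; but as written it does not prove the stated theorem about the estimator the paper uses.
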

See \Cref{apd:bound-on-opt-derivation,apd:bound-on-opt-pathwise-derivation} for details.
In practice, a small number of samples seems to be sufficient.
In \Cref{apd:results}, we illustrate that optimisation trajectories of warm-started solvers are almost identical to trajectories obtained by non-warm-started solvers across solver types and datasets.

\paragraph{Warm Starting the Pathwise Estimator}
One advantage of the pathwise estimator from \Cref{sec:pathwise_estimator} is the reduced RKHS distance between the origin and the solution.
When warm starting, the inner-loop solver no longer initialises at the origin, and thus, one may be concerned that we lose this advantage.
However, empirically, this is not the case.
As shown in \Cref{tab:results_main}, combining both techniques further accelerates AP and SGD, reaching $72.1\times$ and $7.2\times$ average speed-ups across our datasets relative to the standard estimator without warm starting.
Furthermore, since we run solvers until reaching the tolerance, the predictive performance is almost identical among all methods and solvers.

\section{Solving Linear Systems on a Limited Compute Budget}
\label{sec:compute_budget}
Our experiments so far have only considered relatively small datasets with $n {\,<\,} 50$k, such that inner-loop solvers can reach the tolerance in a reasonable amount of time.
However, on large datasets, where linear system conditioning may be poor, reaching a low relative residual norm can become computationally infeasible.
Instead, linear system solvers are commonly given a limited compute budget.
Gardner et al.\ \cite{gardner18} limit the number of CG iterations to 20, Wu et al.\ \cite{wu2024largescale} use 11 epochs of AP, Antor\'an et al.\ \cite{Antoran22sampling} run SGD for 50k iterations, and Lin et al.\ \cite{lin2023sampling,lin2024stochastic} run SGD for 100k iterations.
While effective for managing computational costs, it is not well understood how early stopping before reaching the tolerance affects different solvers and marginal likelihood optimisation.
Furthermore, it is unclear whether a certain tolerance is required to obtain good downstream predictive performance. 

\begin{figure}[t]
    \centering
    \includegraphics{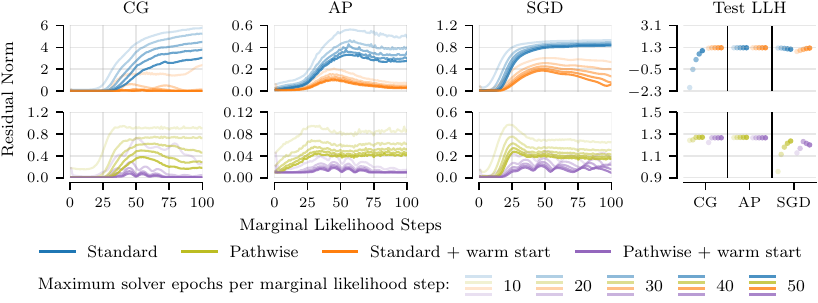}
    \caption{Relative residual norms of the probe vector linear systems at each marginal likelihood step on the \textsc{pol} dataset when solving until the tolerance or a maximum number of solver epochs is reached. Increasing the compute budget generally reduces the residual norm. Given the same compute budget, the pathwise estimator reaches lower residual norms than the standard estimator. Adding warm starts further reduces the residual norm for both estimators. However, the final test log-likelihood does not always match the residual norm. Surprisingly, good predictive performance can be obtained even if the residual norm is much higher than the tolerance $\tau = 0.01$.}
    \label{fig:residual_norm_main}
\end{figure}

\paragraph{The Effects of Early Stopping}
We repeat the experiments from \Cref{tab:results_main} but introduce limited compute budgets: 10, 20, 30, 40 or 50 solver epochs, where one epoch refers to computing each value in $\mathbf{H}_{\bm{\theta}}$ once (see \Cref{apd:implementation_details} for details).\footnote{Because kernel function evaluations dominate the computational costs of linear system solvers, this results in similar time budgets across methods while preventing compute wastage due to time-based stopping.}
In this setting, solvers terminate upon either reaching the relative residual norm tolerance or when the compute budget is exhausted, whichever occurs first.

In \Cref{fig:residual_norm_main}, we illustrate the relative residual norms reached for each compute budget on the \textsc{POL} dataset (see \Cref{fig:r_norm_y_standard,fig:r_norm_y_pathwise,fig:r_norm_z_standard,fig:r_norm_z_pathwise} in \Cref{apd:results} for other datasets).
In general, the residual norms increase as $\mathbf{H}_{\bm{\theta}}$ becomes more ill-conditioned during optimisation, and as the compute budget is decreased.
The increase in residual norms is much larger for CG than the other solvers, which is consistent with previous reports of CG not being amenable to early stopping \cite{lin2023sampling}.
AP seems to behave slightly better than SGD under a limited compute budget.
Both the pathwise estimator and warm starting combine well with early stopping, reaching lower residual norms when using a budget of 10 solver epochs than the standard estimator without warm starting using a budget of 50 solver epochs. 

In terms of predictive performance, we see that CG with the standard estimator and no warm starting suffers the most from early stopping.
Changing to the pathwise estimator and warm starting recovers good performance most of the time.
SGD also shows some sensitivity to early stopping, but there seems to be a stronger correlation between invested compute and final performance.
Surprisingly, AP generally achieves good predictive performance even on the smallest compute budget, despite not reaching the tolerance of $\tau = 0.01$.
Overall, the relationship between reaching a low residual norm and obtaining good predictive performance seems to be weak.
This is an unexpected yet interesting observation, and future research should investigate the suitability of the relative residual norm as a metric to determine solver convergence.

\begin{figure}[t]
    \centering
    \includegraphics{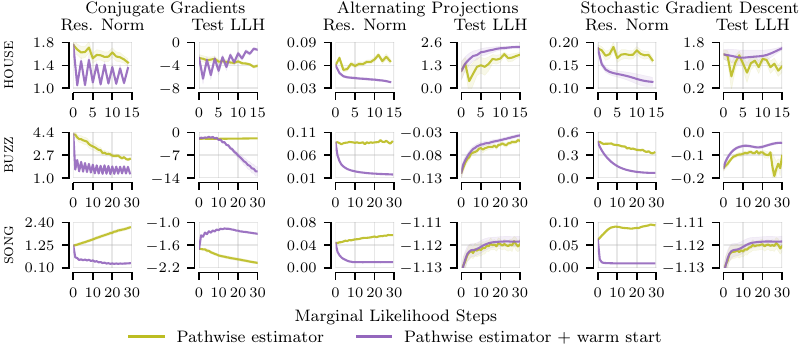}
    \caption{Relative residual norms and test log-likelihoods during marginal likelihood optimisation on large datasets using the pathwise estimator. Warm starting allows solver progress to accumulate over multiple marginal likelihood steps, leading to decreasing residual norms. Without warm starting, residual norms tend to remain similar or increase during optimisation. Despite reaching significantly lower residual norms, the predictive performance does not always improve, akin to \Cref{fig:residual_norm_main}.}
    \label{fig:large_datasets_main}
\end{figure}

\paragraph{Demonstration on Large Datasets}
After analysing early stopping on small datasets, we now turn to larger UCI datasets $391\text{k} {\,<\,} n {\,<\,} 1.8\text{M}$, where solving until reaching the tolerance becomes computationally infeasable.
Thus, we introduce a compute budget of 10 solver epochs per marginal likelihood step.
Hyperparameters are initialised with the heuristic of Lin et al.\ \cite{lin2023sampling} and optimised using a learning rate of 0.03 for 30 Adam steps (15 for \textsc{houseelectric} due to high computational costs).
We use the pathwise estimator because it accelerates solver convergence (see \Cref{sec:pathwise_estimator}), and it enables efficient tracking of predictive performance during optimisation.
See \Cref{apd:implementation_details} for details.

\Cref{fig:large_datasets_main} visualises the evolution of the relative residual norm of the probe vector linear systems and the predictive test log-likelihood during marginal likelihood optimisation.
A full set of results is in \Cref{apd:results}.
For all solvers, warm starting leads to lower residual norms throughout outer-loop steps.
This suggests a synergistic behaviour between early stopping and warm starting: the latter allows solver progress to accumulate across marginal likelihood steps.
This can be interpreted as amortising the inner-loop linear system solve over multiple outer-loop steps.
Despite the lower residual norm, CG is brittle under early stopping, obtaining significantly worse performance than AP and SGD on \textsc{buzz} and \textsc{houseelectric}. 
AP and SGD seem to be more robust to early stopping.
However, lower residual norms do not always translate to improved predictive performance.
Furthermore, we find that SGD can suffer due to the optimal learning rate changing as the hyperparameters change.

\section{Conclusion}
\label{sec:conclusion}
Building upon a hierarchical view of marginal likelihood optimisation, this paper consolidates several iterative GP techniques into a common framework, analysing them and showing their applicability across different linear system solvers.
Overall, these provide speed-ups of up to $72\times$ when solving until a specified tolerance is reached, and decrease the average relative residual norm by up to $7\times$ under a limited compute budget.
Additionally, our analyses lead to the following findings:
Firstly, the pathwise gradient estimater accelerates linear system solvers by moving solutions closer to the origin, and also provides amortised predictions as an added benefit by turning probe vectors into posterior samples via pathwise conditioning.
Secondly, warm starting solvers at previous solutions during marginal likelihood optimisation reduces the number of solver iterations to tolerance at the cost of introducing negligible bias into the optimisation trajectory.
Furthermore, warm starting combines well with pathwise gradient estimation.
Finally, stopping linear system solvers after exhausting a limited compute budget generally increases the relative residual norm.
However, when paired with warm starting, solver progress accumulates, amortising inner-loop linear system solves over multiple outer-loop steps.
Nonetheless, we observe that low relative residual norms are not always necessary to obtain good predictive performance, which presents an interesting avenue for future research.

\section*{Acknowledgments}
Jihao Andreas Lin and Shreyas Padhy were supported by the University of Cambridge Harding Distinguished Postgraduate Scholars Programme.
Jos\'e Miguel Hern\'andez-Lobato acknowledges support from a Turing AI Fellowship under grant EP/V023756/1.
We thank Runa Eschenhagen for helpful discussions.
This work was performed using resources provided by the Cambridge Service for Data Driven Discovery (CSD3) operated by the University of Cambridge Research Computing Service (www.csd3.cam.ac.uk), provided by Dell EMC and Intel using Tier-2 funding from the Engineering and Physical Sciences Research Council (capital grant EP/T022159/1), and DiRAC funding from the Science and Technology Facilities Council (www.dirac.ac.uk).
This work was also supported with Cloud TPUs from Google’s TPU Research Cloud (TRC).
\printbibliography

\newpage

\appendix
\section{Mathematical Derivations}
\label{apd:theory}
In this appendix, we provide mathematical derivations for claims in the main paper.

\subsection{Variance of Standard and Pathwise Gradient Estimator}
\label{apd:estimator_variance}
To compare the variances of the standard estimator \eqref{eq:trace} and the pathwise estimator \eqref{eq:pathwise_estimator}, we calculate
\begin{align}
    \mathrm{Var} \left( \bm{z}^{\mathsf{T}} \mathbf{H}_{\bm{\theta}}^{-1} \frac{\partial \mathbf{H}_{\bm{\theta}}}{\partial \theta_k} \bm{z} \right) \; \text{with} \;\; \bm{z} \sim \mathcal{N}(\mathbf{0}, \mathbf{I}) \;\; \text{and} \;\;
    \mathrm{Var} \left( \bm{\hat{z}}^{\mathsf{T}} \frac{\partial \mathbf{H}_{\bm{\theta}}}{\partial \theta_k} \bm{\hat{z}} \right) \; \text{with} \;\; \bm{\hat{z}} \sim \mathcal{N}(\mathbf{0}, \mathbf{H}_{\bm{\theta}}^{-1}).
\end{align}
The variance of the standard estimator is given by
\begin{align}
    \mathrm{Var} \left( \bm{z}^{\mathsf{T}} \mathbf{H}_{\bm{\theta}}^{-1} \frac{\partial \mathbf{H}_{\bm{\theta}}}{\partial \theta_k} \bm{z} \right)
    &= \mathrm{tr} \left( \mathbf{H}_{\bm{\theta}}^{-1} \frac{\partial \mathbf{H}_{\bm{\theta}}}{\partial \theta_k} \left( \mathbf{H}_{\bm{\theta}}^{-1} \frac{\partial \mathbf{H}_{\bm{\theta}}}{\partial \theta_k} + \frac{\partial \mathbf{H}_{\bm{\theta}}}{\partial \theta_k} \mathbf{H}_{\bm{\theta}}^{-1} \right) \right), \\
    &= \mathrm{tr} \left( \mathbf{H}_{\bm{\theta}}^{-1} \frac{\partial \mathbf{H}_{\bm{\theta}}}{\partial \theta_k} \mathbf{H}_{\bm{\theta}}^{-1} \frac{\partial \mathbf{H}_{\bm{\theta}}}{\partial \theta_k} \right)
    + \mathrm{tr} \left( \mathbf{H}_{\bm{\theta}}^{-1} \frac{\partial \mathbf{H}_{\bm{\theta}}}{\partial \theta_k}\frac{\partial \mathbf{H}_{\bm{\theta}}}{\partial \theta_k} \mathbf{H}_{\bm{\theta}}^{-1} \right).
\end{align}
The variance of the pathwise estimator is given by
\begin{align}
    \mathrm{Var} \left( \bm{\hat{z}}^{\mathsf{T}} \frac{\partial \mathbf{H}_{\bm{\theta}}}{\partial \theta_k} \bm{\hat{z}} \right)
    &= 2\,\mathrm{tr} \left( \frac{\partial \mathbf{H}_{\bm{\theta}}}{\partial \theta_k}  \mathbf{H}_{\bm{\theta}}^{-1} \frac{\partial \mathbf{H}_{\bm{\theta}}}{\partial \theta_k}  \mathbf{H}_{\bm{\theta}}^{-1} \right), \\
    &= \mathrm{tr} \left( \mathbf{H}_{\bm{\theta}}^{-1} \frac{\partial \mathbf{H}_{\bm{\theta}}}{\partial \theta_k} \mathbf{H}_{\bm{\theta}}^{-1} \frac{\partial \mathbf{H}_{\bm{\theta}}}{\partial \theta_k} \right)
    + \mathrm{tr} \left( \mathbf{H}_{\bm{\theta}}^{-1} \frac{\partial \mathbf{H}_{\bm{\theta}}}{\partial \theta_k} \mathbf{H}_{\bm{\theta}}^{-1} \frac{\partial \mathbf{H}_{\bm{\theta}}}{\partial \theta_k} \right).
\end{align}
Therefore, the variances of both estimators share the first trace term and only differ in the second trace term.
Hence, their variances will be identical if
\begin{align}
    \mathrm{tr} \left( \mathbf{H}_{\bm{\theta}}^{-1} \frac{\partial \mathbf{H}_{\bm{\theta}}}{\partial \theta_k}\frac{\partial \mathbf{H}_{\bm{\theta}}}{\partial \theta_k} \mathbf{H}_{\bm{\theta}}^{-1} \right)
    \overset{!}{=}
    \mathrm{tr} \left( \mathbf{H}_{\bm{\theta}}^{-1} \frac{\partial \mathbf{H}_{\bm{\theta}}}{\partial \theta_k} \mathbf{H}_{\bm{\theta}}^{-1} \frac{\partial \mathbf{H}_{\bm{\theta}}}{\partial \theta_k} \right),
\end{align}
which is the case if $\mathbf{H}_{\bm{\theta}}^{-1}$ and $\partial \mathbf{H}_{\bm{\theta}} / \partial \theta_k$ commute with each other.

For example, consider the derivative with respect to the noise scale $\sigma$,
\begin{align}
    \frac{\partial \mathbf{H}_{\bm{\theta}}}{\partial \sigma}
    &= \frac{\partial}{\partial \sigma} \left( k(\bm{x}, \bm{x}; \bm{\vartheta}) + \sigma^2 \mathbf{I} \right)
    = 2 \sigma \mathbf{I}.
\end{align}
In this case, $\mathbf{H}_{\bm{\theta}}^{-1}$ and $\partial \mathbf{H}_{\bm{\theta}} / \partial \sigma$ commute with each other, such that both estimators have the same variance.
In general, a sufficient condition for matrix multiplication to be commutative is simultaneous diagonalisability of two matrices.

\subsection{Taylor Approximation View of Warm Start}
\label{apd:warm_start_math}
At iterations $t$ and $t + 1$ of the outer-loop marginal likelihood optimiser, associated with $\bm{\theta}^{(t)}$ and $\bm{\theta}^{(t + 1)}$, the linear system solver must solve two batches of linear systems, namely
\begin{align}
    \mathbf{H}_{\bm{\theta}}^{(t)}
    \left[ \, \bm{v}_{\bm{y}}^{(t)}, \bm{v}_1^{(t)}, \dots, \bm{v}_s^{(t)} \, \right]
    &= \left[ \, \bm{y}, \bm{z}_1^{(t)}, \dots, \bm{z}_s^{(t)} \, \right]
    \quad
    \text{and} \\
    \mathbf{H}_{\bm{\theta}}^{(t + 1)}
    \left[ \, \bm{v}_{\bm{y}}^{(t + 1)}, \bm{v}_1^{(t + 1)}, \dots, \bm{v}_s^{(t + 1)} \, \right]
    &= \left[ \, \bm{y}, \bm{z}_1^{(t + 1)}, \dots, \bm{z}_s^{(t + 1)} \, \right],
\end{align}
where $\mathbf{H}_{\bm{\theta}}^{(t)}$ and $\mathbf{H}_{\bm{\theta}}^{(t + 1)}$ are related through the change from $\bm{\theta}^{(t)}$ to $\bm{\theta}^{(t + 1)}$ and $\bm{v}_{\bm{y}}^{(t)}$ and $\bm{v}_{\bm{y}}^{(t + 1)}$ are further related through sharing the same right-hand side $\bm{y}$ in the linear system.
In such a setting, where the coefficient matrix only changes slightly and the right-hand side remains fixed, we can approximate $\bm{v}^{(t + 1)}$ using a first-order Taylor expansion of $\mathbf{H}_{\bm{\theta}}^{(t + 1)}$,
\begin{align}
    \left(\mathbf{H}_{\bm{\theta}}^{(t+1)}\right)^{-1}
    &\approx \left( \mathbf{H}_{\bm{\theta}}^{(t)} \right)^{-1} - \left( \mathbf{H}_{\bm{\theta}}^{(t)} \right)^{-1} \left( \mathbf{H}_{\bm{\theta}}^{(t+1)} - \mathbf{H}_{\bm{\theta}}^{(t)} \right) \left( \mathbf{H}_{\bm{\theta}}^{(t)} \right)^{-1}, \\
    \bm{v}^{(t + 1)} &\approx \bm{v}^{(t)} - \left( \mathbf{H}_{\bm{\theta}}^{(t)} \right)^{-1} \left( \mathbf{H}_{\bm{\theta}}^{(t+1)} - \mathbf{H}_{\bm{\theta}}^{(t)} \right) \, \bm{v}^{(t)}.
\end{align}
If $\Delta = \mathbf{H}_{\bm{\theta}}^{(t+1)} - \mathbf{H}_{\bm{\theta}}^{(t)}$ is small then $\bm{v}^{(t)}$ will be close to $\bm{v}^{(t + 1)}$, such that we can reuse $\bm{v}^{(t)}$ to initialise the linear system solver when solving for $\bm{v}^{(t + 1)}$.
To satisfy the condition of fixed right-hand sides, we must set $\bm{z}_j^{(t)} = \bm{z}_j$ at the cost of introducing some bias throughout optimisation, which turns out to be negligible in practice (see \Cref{sec:warm_start} and \Cref{apd:bound-on-opt-derivation,apd:bound-on-opt-pathwise-derivation} for details).

\subsection{Convergence of Warm Starting Marginal Likelihood Optimisation}
\label{apd:bound-on-opt-derivation}
\allowdisplaybreaks

Recall the gradient of the marginal likelihood objective:
\begin{align*}
    \frac{\partial \mathcal{L}(\bm{\theta})}{\partial \theta_k} = \frac{1}{2} (\mathbf{H}_{\bm{\theta}}^{-1} \bm{y})^{\mathsf{T}} \frac{\partial \mathbf{H}_{\bm{\theta}}}{\partial \theta_k} \mathbf{H}_{\bm{\theta}}^{-1} \bm{y} - \frac{1}{2} \mathrm{tr} \left( \mathbf{H}_{\bm{\theta}}^{-1} \frac{\partial \mathbf{H}_{\bm{\theta}}}{\partial \theta_k} \right) && k\in \{1,\dots, d_{\bm{\theta}}\},
\end{align*}
where $\bm{H}_\theta\in \mathbb{R}^{n\times n}$ is a positive semi-definite symmetric matrix, $\bm{y}\in\R^{n}$ is a real vector, $n$ is the number of “data” examples, and $d_{\bm{\theta}}$ is the number of hyperparameters.
Also, recall the the warm start estimator $\tilde{g}_k(\bm{\theta})$ to the gradient ${\partial\mathcal{L}(\bm{\theta})} / {\partial \theta_k}$:
\begin{equation*}
    \tilde{g}_k(\bm{\theta}) = \frac{1}{s} \sum_{j=1}^s \bm{z}_j^{\mathsf{T}} \mathbf{H}_{\bm{\theta}}^{-1} \frac{\partial \mathbf{H}_{\bm{\theta}}}{\partial \theta_k} \bm{z}_j ,
\end{equation*}
where the \textit{probe vectors} $\bm{z}_j$ are random variables with identity second moments: $\E[\bm{z}_j \bm{z}_j^\intercal]=I$, and $s$ is the number of probe vectors in the trace estimator.

\textbf{Notation} We will write $\mathcal{S}^{n-1}\overset{\text{def}}{=} \{\bm{x}\in\mathbb{R}^n: \|\bm{x}\|_2=1\}$ for a sphere in $\mathbb{R}^n$. For a real matrix $\mathbf{A}\in \mathbb{R}^{m \times n}$, we will denote the operator (spectral) norm $\sup_{\bm{x}\in\mathcal{S}^{n-1}} \sup_{\bm{y}\in\mathcal{S}^{m-1}} \bm{y}^\intercal \mathbf{A} \bm{x}$ with $\opnorm{\mathbf{A}}$.

\begin{definition}[\textit{Sub-gaussian norm}]
    The \textbf{sub-gaussian norm} of a sub-gaussian random variable $X$ is defined as:
    \begin{equation*}
        \|X\|_{\psi_2} = \inf\{t>0: \mathbb{E}\left[X^2/t^2  \right]\leq 2\}
    \end{equation*}
    
\end{definition}

\begin{definition}[Sub-exponential norm]
    The \textbf{sub-exponential norm} of a sub-exponential random variable $X$ is defined as:
    
    \begin{equation*}
        \|X\|_{\psi_2} = \inf\{t>0: \mathbb{E}\left[|X|/t  \right]\leq 2\}
    \end{equation*}
\end{definition}

We will denote the optimisation domain for the hyperparameters as $\Theta$, where we assume $\Theta\subseteq \R^{d_{\bm{\theta}}}$.

\begin{theorem}\label{thm:bounded-gradient-error}
    Assume that the probe vectors $(\bm{z}_1, \dots, \bm{z}_s)$ are zero mean, coordinate-wise independent, and that elements of $\bm{z}_j$ are sub-gaussian with norm $\|z_{ji}\|_{\psi_2}=\sigma \forall i\in\{1, \dots, n\}, j\in \{1, \dots, s\}$.
    Assume that the sum of the singular values of $\mathbf{\mathbf{H}}^{-1}_{\bm{\theta}}\frac{\partial \mathbf{\mathbf{H}}_{\bm{\theta}}}{\partial \theta_k}$ is, for all $k\in\{1, \dots, d_{\bm{\theta}}\}$, upper-bounded on the domain of $\bm{\theta}$ by $\lambda^\mathrm{max}$.
    Then, for all $\delta > 0$:
    \begin{equation*}
        \mathbb{P}\left[ \left\lVert\tilde{\bm{g}}(\bm{\theta}) - \frac{\partial \mathcal{L}}{\partial \bm{\theta}}(\bm{\theta}) \right\rVert_\infty <  \max\left(\sqrt{\frac{n}{s}C_1 \log\left(\frac{9d_{\bm{\theta}} }{2\delta}\right)}, \frac{n}{s}C_1 \log\left(\frac{9d_{\bm{\theta}} }{2\delta}\right)\right) C_2 \sigma \lambda^\mathrm{max} \right] > 1-\delta,
    \end{equation*}
\end{theorem}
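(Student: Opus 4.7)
The plan is to reduce the bound to concentration of quadratic forms in sub-gaussian random vectors, and then apply a union bound over the $d_{\bm{\theta}}$ coordinates of the gradient. Write $A_k := \mathbf{H}_{\bm{\theta}}^{-1}\,\partial\mathbf{H}_{\bm{\theta}}/\partial\theta_k$. Interpreting $\tilde{\bm{g}}$ as the full gradient estimator (with the data-fit term computed exactly, as in \eqref{eq:standard_system}, and the trace replaced by Hutchinson's estimator), and using $\mathbb{E}[\bm{z}_j\bm{z}_j^{\mathsf{T}}]=\mathbf{I}$ so that $\mathbb{E}[\bm{z}_j^{\mathsf{T}}A_k\bm{z}_j]=\mathrm{tr}(A_k)$, coordinate $k$ of the error reduces (up to a sign and a factor $\tfrac{1}{2}$) to the Hutchinson deviation $\tfrac{1}{s}\sum_{j=1}^s\bigl(\bm{z}_j^{\mathsf{T}}A_k\bm{z}_j-\mathrm{tr}(A_k)\bigr)$, which I will control uniformly in $k$.

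The key tool is the Hanson-Wright inequality: for a vector $\bm{z}$ with independent sub-gaussian coordinates of norm at most $\sigma$, the centred form $\bm{z}^{\mathsf{T}}A\bm{z}-\mathrm{tr}(A)$ has Bernstein-type tails in which the sub-gaussian scale is set by $\lVert A\rVert_F$ and the sub-exponential scale by $\opnorm{A}$; symmetrising $A\mapsto(A+A^{\mathsf{T}})/2$ if needed only changes absolute constants. Averaging $s$ i.i.d.\ copies yields the usual $1/s$ improvement in the Bernstein exponent, so inverting the resulting tail bound at level $\delta'$ gives an estimate of the form
\begin{equation*}
\max\!\Bigl(\lVert A_k\rVert_F\sqrt{\log(1/\delta')/s},\ \opnorm{A_k}\log(1/\delta')/s\Bigr),
\end{equation*}
up to absolute and $\sigma$-dependent constants. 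To convert this into the stated bound in $\lambda^{\mathrm{max}}$, the assumption that the sum of the singular values is bounded by $\lambda^{\mathrm{max}}$ immediately dominates $\opnorm{A_k}$, and $\lVert A_k\rVert_F\le\sqrt{n}\,\opnorm{A_k}\le\sqrt{n}\,\lambda^{\mathrm{max}}$ handles the Frobenius norm; the $\sqrt{n}$ here is what produces the $n/s$ and $\sqrt{n/s}$ factors in the final bound.

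Finally, a union bound over $k=1,\ldots,d_{\bm{\theta}}$ upgrades the coordinate-wise bound to the desired $\ell_\infty$ bound by replacing $\delta'$ with $\delta/d_{\bm{\theta}}$; combining with the factor of two from the two-sided tail and the absolute constants already absorbed into $C_1,C_2$ produces the precise $\log(9d_{\bm{\theta}}/(2\delta))$ argument in the theorem. The main obstacle is bookkeeping rather than mathematical content: Hanson-Wright is usually stated with the sub-gaussian and sub-exponential regimes fused through a single $\min$ in the exponent, so cleanly extracting the $\max(\sqrt{\cdot},\,\cdot)$ form and tracking the constants $C_1$, $C_2$ and the particular $9/2$ factor — while reconciling the paper's sub-gaussian-norm convention against the $\sigma^2$-scaling that naturally arises for quadratic forms — takes care. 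Every other step is a routine application of concentration of measure followed by a union bound.
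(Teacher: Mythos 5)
Your proposal is correct in substance but takes a genuinely different route through the key concentration step. The paper first pulls the randomness out of the quadratic form entirely: writing $\mathbf{A}_k=\mathbf{H}_{\bm{\theta}}^{-1}\partial\mathbf{H}_{\bm{\theta}}/\partial\theta_k$ via its SVD, it bounds $|\tilde{g}_k-g_k|\leq\bigl(\sum_i\lambda_i\bigr)\,\lVert\mathbf{M}\rVert_{\mathrm{op}}$ with $\mathbf{M}=\tfrac{1}{s}\sum_j\bm{z}_j\bm{z}_j^{\mathsf{T}}-\mathbf{I}$, and then controls $\lVert\mathbf{M}\rVert_{\mathrm{op}}$ by the standard covariance-estimation argument (one-dimensional Bernstein for $(\bm{x}^{\mathsf{T}}\bm{z}_j)^2-1$, then a $\tfrac14$-net of cardinality $9^n$ over the sphere); the $9^n$ from the net is precisely where the $n/s$ in both branches of the final $\max$ comes from. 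You instead apply Hanson--Wright directly to each $\tfrac1s\sum_j(\bm{z}_j^{\mathsf{T}}\mathbf{A}_k\bm{z}_j-\mathrm{tr}(\mathbf{A}_k))$ and convert to $\lambda^{\mathrm{max}}$ via $\lVert\mathbf{A}_k\rVert_{\mathrm{op}}\leq\lambda^{\mathrm{max}}$ and $\lVert\mathbf{A}_k\rVert_F\leq\sqrt{n}\,\lambda^{\mathrm{max}}$. Note that this actually yields a \emph{stronger} bound than the theorem claims: the sub-exponential branch of Hanson--Wright scales as $\lVert\mathbf{A}_k\rVert_{\mathrm{op}}\log(1/\delta)/s$, i.e.\ $1/s$ rather than $n/s$ (and one can even take $\lVert\mathbf{A}_k\rVert_F\leq\sum_i\lambda_i\leq\lambda^{\mathrm{max}}$, removing the $\sqrt{n}$ from the sub-gaussian branch too); since your bound is dominated by the stated one, the theorem follows a fortiori. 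The trade-offs: your route needs the union bound over $k$ essentially (Hanson--Wright is applied per coordinate), whereas in the paper's decomposition the single event $\{\lVert\mathbf{M}\rVert_{\mathrm{op}}<\beta\}$ controls all $d_{\bm{\theta}}$ coordinates simultaneously and their union bound is technically redundant; conversely, the paper's net argument is more elementary and self-contained, while yours imports a heavier off-the-shelf inequality but gets sharper dependence on $n$. The $\sigma$-versus-$\sigma^2$ bookkeeping you flag is a real wrinkle, but it is present in the paper's own Lemma (where a sub-exponential norm of order $\sigma^2$ silently becomes $C_2\sigma$ in the Bernstein exponent), so it does not distinguish the two arguments.
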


The crux of the proof of \Cref{thm:bounded-gradient-error} comes from bounding the spectral norm of the difference $\left(\left(\sum_{j=1}^s  \bm{z}_j \bm{z}_j^\mathsf{T}\right) - \mathbf{I} \right)$.
To do so, it is useful to introduce the following definitions and lemmas.

\begin{lemma}[\textbf{Computing the operator norm on a net \cite[Exercise~4.4.3]{vershynin2018high}}]
    \label{lemma:opnorm-on-net-bound}
    Let $\mathbf{A}$ be an $n\times n$ matrix and $\eps \in [0, 1)$. Then, for any $\eps$-net $\Sigma_\eps$ of the unit sphere $\mathcal{S}^{n-1}$, we have:
    \begin{equation*}
        \sup_{\bm{x}\in \Sigma_\eps} \bm{x}^\intercal \mathbf{A} \bm{x} \leq \opnorm{\mathbf{A}} \leq \frac{1}{1-2\eps}  \sup_{\bm{x}\in \Sigma_\eps} \bm{x}^\intercal \mathbf{A} \bm{x}.
    \end{equation*}
\end{lemma}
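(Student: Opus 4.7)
The plan is to prove the two inequalities separately. The lower bound $\sup_{\bm{x}\in \Sigma_\eps} \bm{x}^\intercal \mathbf{A} \bm{x} \leq \opnorm{\mathbf{A}}$ is essentially immediate: every point of the $\eps$-net lies on the unit sphere, so by Cauchy--Schwarz $\bm{x}^\intercal \mathbf{A}\bm{x} \leq \|\bm{x}\|\,\|\mathbf{A}\bm{x}\| \leq \opnorm{\mathbf{A}}$ for any such $\bm{x}$. All the real work lies in the reverse inequality.

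For the upper bound, I would run the standard ``approximation on a net'' argument. Set $M \;=\; \sup_{\bm{x}\in\Sigma_\eps} \bm{x}^\intercal\mathbf{A}\bm{x}$. Given an arbitrary $\bm{x}\in\mathcal{S}^{n-1}$, the $\eps$-net property supplies some $\bm{x}_0\in\Sigma_\eps$ with $\|\bm{x}-\bm{x}_0\|_2\leq\eps$. The key step is the bilinear decomposition
\begin{equation*}
\bm{x}^\intercal \mathbf{A}\bm{x} - \bm{x}_0^\intercal \mathbf{A}\bm{x}_0 \;=\; \bm{x}^\intercal \mathbf{A}(\bm{x}-\bm{x}_0) + (\bm{x}-\bm{x}_0)^\intercal \mathbf{A}\bm{x}_0,
\end{equation*}
obtained by adding and subtracting $\bm{x}^\intercal \mathbf{A}\bm{x}_0$. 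Each term on the right is at most $\eps\opnorm{\mathbf{A}}$ by Cauchy--Schwarz and the definition of the operator norm, since $\|\bm{x}\|=\|\bm{x}_0\|=1$ and $\|\bm{x}-\bm{x}_0\|\leq\eps$. Hence $\bm{x}^\intercal \mathbf{A}\bm{x} \leq M + 2\eps\opnorm{\mathbf{A}}$ uniformly in $\bm{x}\in\mathcal{S}^{n-1}$. For the symmetric matrices relevant to the application, $\opnorm{\mathbf{A}} = \sup_{\bm{x}\in\mathcal{S}^{n-1}} \bm{x}^\intercal \mathbf{A}\bm{x}$ (taking absolute values if $\mathbf{A}$ is indefinite); taking the supremum over $\bm{x}$ on the left and rearranging yields $(1-2\eps)\opnorm{\mathbf{A}} \leq M$, which is the second inequality.

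The only thing requiring genuine care is the passage from $\sup_{\bm{x}\in\mathcal{S}^{n-1}} \bm{x}^\intercal \mathbf{A}\bm{x}$ to $\opnorm{\mathbf{A}}$: this identity uses that $\mathbf{A}$ is symmetric (so the quadratic form attains the operator norm) and may require absolute values if $\mathbf{A}$ has negative eigenvalues. In the downstream use, $\mathbf{A}$ is $\tfrac{1}{s}\sum_j \bm{z}_j\bm{z}_j^\intercal - \mathbf{I}$, which is symmetric, so inserting $|\cdot|$ throughout makes the argument go through verbatim. I do not anticipate any deeper obstacle, since everything reduces to two lines of algebra once an arbitrary unit vector has been approximated by a net element; bounding the two terms in the decomposition by $\eps\opnorm{\mathbf{A}}$ is the only computational step and is entirely routine.
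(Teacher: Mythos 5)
The paper gives no proof of this lemma at all: it is imported verbatim as Exercise~4.4.3 of Vershynin, so there is nothing internal to compare against. Your argument is the standard solution to that exercise and is correct --- the lower bound because $\Sigma_\eps \subset \mathcal{S}^{n-1}$, and the upper bound via the decomposition $\bm{x}^\intercal \mathbf{A}\bm{x} - \bm{x}_0^\intercal \mathbf{A}\bm{x}_0 = \bm{x}^\intercal \mathbf{A}(\bm{x}-\bm{x}_0) + (\bm{x}-\bm{x}_0)^\intercal \mathbf{A}\bm{x}_0$ with each term bounded by $\eps\opnorm{\mathbf{A}}$. The caveat you raise is genuine and worth keeping: as stated, for a general matrix with no absolute values the right-hand inequality is false (an antisymmetric $\mathbf{A}$ has $\bm{x}^\intercal\mathbf{A}\bm{x}\equiv 0$, and $\mathbf{A}=-\mathbf{I}$ makes the net supremum negative), and one also implicitly needs $\eps<\nicefrac{1}{2}$ rather than $\eps\in[0,1)$ for $1-2\eps$ to be positive; none of this affects the downstream use, where $\mathbf{M}$ is symmetric, the tail bounds are effectively two-sided, and $\eps=\nicefrac{1}{4}$.
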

\begin{lemma}[\textbf{Size of $\eps$-net on $\mathcal{S}^{n-1}$ \cite[Corollary~4.2.13]{vershynin2018high}}]
    \label{lemma:size-of-epsnet}
    There exists an $\eps$-net on $\mathcal{S}^{n-1}$ with cardinality at most $\left(\frac{2}{\eps} + 1\right)^n$.
\end{lemma}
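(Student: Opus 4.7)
The plan is to prove the bound by the classical volume-packing argument: construct the $\eps$-net as a maximal $\eps$-separated subset of the sphere, then control its cardinality through a volume comparison in $\mathbb{R}^n$.

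First, I would build an $\eps$-net $\mathcal{N}\subseteq \mathcal{S}^{n-1}$ greedily: pick any point on $\mathcal{S}^{n-1}$, then repeatedly add any point whose Euclidean distance to every previously chosen point exceeds $\eps$, stopping when no such point can be found. Because $\mathcal{S}^{n-1}$ is compact and every pair in $\mathcal{N}$ lies at distance greater than $\eps$, the process terminates with a finite set. By construction, $\mathcal{N}$ is $\eps$-separated (distinct elements are more than $\eps$ apart) and, by maximality, $\eps$-dense (every $\bm{x}\in\mathcal{S}^{n-1}$ lies within distance $\eps$ of some element of $\mathcal{N}$, else it could have been added). Hence $\mathcal{N}$ is the sought $\eps$-net.

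Next, I would bound $|\mathcal{N}|$ by a volume argument. For each $\bm{x}\in\mathcal{N}$, consider the open Euclidean ball $B(\bm{x}, \eps/2)\subseteq \mathbb{R}^n$. The $\eps$-separation of $\mathcal{N}$ implies these balls are pairwise disjoint, and since $\|\bm{x}\|=1$ the triangle inequality gives $B(\bm{x}, \eps/2)\subseteq B(\bm{0},1+\eps/2)$. Writing $c_n$ for the Lebesgue volume of the unit ball in $\mathbb{R}^n$ and using disjointness together with containment in the inflated ball yields
\[
|\mathcal{N}|\cdot c_n\,(\eps/2)^n \;\leq\; c_n\,(1+\eps/2)^n,
\]
from which dividing through by $c_n(\eps/2)^n$ gives
\[
|\mathcal{N}|\;\leq\;\left(\tfrac{1+\eps/2}{\eps/2}\right)^n \;=\;\left(\tfrac{2}{\eps}+1\right)^n,
\]
which is the claimed bound.

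The argument is essentially textbook and presents no real obstacle; the only point requiring care is the choice of outer container of radius $1+\eps/2$ rather than just $1$, which is exactly what produces the additive ``$+1$'' inside the parentheses rather than a bare $2/\eps$. No further structure of $\mathcal{S}^{n-1}$ beyond compactness and the Euclidean metric is used.
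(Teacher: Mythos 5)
Your proposal is correct and is essentially the standard argument behind the cited result: the paper itself offers no proof, deferring to Vershynin's Corollary~4.2.13, whose derivation is exactly your construction of a maximal $\eps$-separated set followed by the volume comparison of disjoint balls of radius $\eps/2$ inside a ball of radius $1+\eps/2$. Nothing is missing.
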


\begin{lemma}
    \label{lemma:bound-quadratic-form-of-projection}
    Let $\mathbf{M}=\frac{1}{s}\sum_{j=1}^s \bm{z}_j \bm{z}_j^\intercal - I$ be an $n\times n$ random matrix, where $z_{ji}$ are independent and identically distributed sub-gaussian random variables with sub-gaussian norm $\|z_{ij}\|_{\psi_2}=\sigma$, and $\bm{x}\in \mathcal{S}^{n-1}$ any unit vector in $\mathbb{R}^n$. Then:
    \begin{equation*}
        \mathbb{P}[\bm{x}^\intercal \mathbf{M} \bm{x} \geq \beta] \leq 2 e^{-C_1 \min\left( \frac{\beta^2}{C_2^2 \sigma^2}, \frac{\beta}{C_2 \sigma}\right) s},
    \end{equation*}
    where $C_1, C_2$ are absolute constants.
    \begin{proof}
        We can rewrite:
        \begin{equation*}
            \bm{x}^\intercal \mathbf{M} \bm{x}= \bm{x}^\intercal \frac{1}{s}\sum_{j=1}^s \left(\bm{z}_j \bm{z}_j^\intercal - I \right)\bm{x} = \frac{1}{s}\sum_{j=1}^s \left(\bm{x}^\intercal \bm{z}_j\right)^2-\bm{x}^\intercal \bm{x} = \frac{1}{s}\sum_{j=1}^s \left( \sum_{i=1}^n  x_i z_{ij}\right)^2 - 1.
        \end{equation*}
        Since $\sum_{i=1}^n  x_i z_{ij}$ is a weighted sum of independent sub-gaussian random variables, it is also sub-gaussian with squared norm:
        \begin{equation*}
            \left\lVert\sum_{i=1}^n  x_i z_{ij}\right\rVert_{\psi_2}^2\leq C \sum_{i=1}^n \|x_i z_{ij}\|_{\psi_2}^2 = C \sum_{i=1}^n x_i^2 \|z_{ij}\|_{\psi_2}^2 = C \sigma^2,
        \end{equation*}
        where $C$ is an absolute constant \cite[Proposition~2.6.1]{vershynin2018high}.
        Hence, since $\left( \sum_{i=1}^n  x_i z_{ij}\right)^2$ is the square of a sub-gaussian random variable, it must be sub-exponential with the sub-exponential norm (see \cite[Lemma~2.7.6]{vershynin2018high}):
        \begin{equation*}
            \left\lVert\left( \sum_{i=1}^n  x_i z_{ij}\right)^2  \right\rVert_{\psi_1} =  \left\lVert\sum_{i=1}^n  x_i z_{ij}  \right\rVert_{\psi_2}^2 \leq C\sigma^2,
        \end{equation*}
        Lastly, since $\left( \sum_{i=1}^n  x_i z_{ij}\right)^2$ is sub-exponential, so will the mean-centered counterpart $\left( \sum_{i=1}^n  x_i z_{ij}\right)^2-\mathbb{E}[\left( \sum_{i=1}^n  x_i z_{ij}\right)^2]= \left( \sum_{i=1}^n  x_i z_{ij}\right)^2 - 1$ \cite[Exercise~2.7.10]{vershynin2018high} with sub-exponential norm:
        \begin{equation*}
             \left\lVert\left( \sum_{i=1}^n  x_i z_{ij}\right)^2 - 1\right\rVert_{\psi_1} \leq C_2 \sigma^2,
        \end{equation*}
        where $C_2$ is another absolute constant.
        Hence, since $\left( \sum_{i=1}^n  x_i z_{ij}\right)^2 - 1$ for $j\in \{1, \dots, s\}$ are sub-exponential, zero-mean and independent, we can apply Bernstein's inequality to bound the tail probability of $\bm{x}^\intercal \mathbf{M} \bm{x}$:
        \begin{equation*}
            \mathbb{P}[\bm{x}^\intercal \mathbf{M} \bm{x} > \beta] = \mathbb{P}\left[\frac{1}{s}\sum_{j=1}^s \left(\left( \sum_{i=1}^n  x_i z_{ij}\right)^2 - 1 \right) > \beta\right]
            \leq 2 e^{-C_1 \min\left( \frac{\beta^2}{C_2^2 \sigma^2}, \frac{\beta}{C_2 \sigma}\right) s}
        \end{equation*}
        holds for any $\beta\geq 0$, where $C_1, C_2$ are absolute constants.
    \end{proof}
\end{lemma}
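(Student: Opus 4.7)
The plan is to reduce the coordinate-wise gradient error to the operator norm of the random matrix $\mathbf{M}_s := \tfrac{1}{s}\sum_{j=1}^s \bm{z}_j \bm{z}_j^{\mathsf{T}} - \mathbf{I}$, and then control $\opnorm{\mathbf{M}_s}$ by an $\varepsilon$-net argument combined with the pointwise tail of \Cref{lemma:bound-quadratic-form-of-projection}. The global trace/nuclear assumption on $\mathbf{H}_{\bm{\theta}}^{-1}\partial_{\theta_k}\mathbf{H}_{\bm{\theta}}$ will enter only once, at the very first reduction step.

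First, using $\mathbb{E}[\bm{z}_j \bm{z}_j^{\mathsf{T}}] = \mathbf{I}$, the data-dependent quadratic term $\tfrac{1}{2}(\mathbf{H}_{\bm{\theta}}^{-1}\bm{y})^{\mathsf{T}}\partial_{\theta_k}\mathbf{H}_{\bm{\theta}}\,\mathbf{H}_{\bm{\theta}}^{-1}\bm{y}$ of $\partial_{\theta_k}\mathcal{L}$ is reproduced exactly by $\tilde g_k$, so the only stochastic discrepancy comes from the Hutchinson approximation to the trace. Writing $\mathbf{A}_k := \mathbf{H}_{\bm{\theta}}^{-1}\partial_{\theta_k}\mathbf{H}_{\bm{\theta}}$, the $k$-th error coordinate then equals $\tfrac{1}{2}\,\mathrm{tr}(\mathbf{A}_k \mathbf{M}_s)$ (up to sign, which will be absorbed in a symmetric tail). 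Trace/operator duality gives
\begin{equation*}
\lvert \mathrm{tr}(\mathbf{A}_k \mathbf{M}_s) \rvert \;\leq\; \Bigl(\sum_i \sigma_i(\mathbf{A}_k)\Bigr)\,\opnorm{\mathbf{M}_s} \;\leq\; \lambda^{\max}\,\opnorm{\mathbf{M}_s},
\end{equation*}
which is where the nuclear-norm hypothesis is consumed and decouples $k$ and $\bm{\theta}$ from the probabilistic step.

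Second, since $\mathbf{M}_s$ is symmetric, $\opnorm{\mathbf{M}_s} = \sup_{\bm{x}\in\mathcal{S}^{n-1}}\lvert \bm{x}^{\mathsf{T}}\mathbf{M}_s \bm{x}\rvert$. I would take $\varepsilon = 1/4$ in \Cref{lemma:opnorm-on-net-bound}, so that $1/(1-2\varepsilon) = 2$, and use \Cref{lemma:size-of-epsnet} to obtain an $\varepsilon$-net $\Sigma_{1/4} \subset \mathcal{S}^{n-1}$ with $\lvert\Sigma_{1/4}\rvert \leq 9^n$. Applying \Cref{lemma:bound-quadratic-form-of-projection} at each net point and for both signs (to turn the one-sided tail into an absolute-value tail), and then union-bounding over $\Sigma_{1/4}$ and over the $d_{\bm{\theta}}$ coordinate indices, yields a bound of the form
\begin{equation*}
\mathbb{P}\bigl[\,\opnorm{\mathbf{M}_s} \geq \beta\,\bigr] \;\leq\; 4\, d_{\bm{\theta}}\, 9^{\,n}\,\exp\!\Bigl(-C_1\,s\,\min\!\bigl(\tfrac{\beta^2}{C_2^2 \sigma^2},\,\tfrac{\beta}{C_2\sigma}\bigr)\Bigr),
\end{equation*}
after folding the factor $2$ from the net bound into the constant $C_2$.

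Third, I would invert this tail. Setting the right-hand side equal to $\delta$ and using $n \log 9 + \log(4 d_{\bm{\theta}}/\delta) \leq n \log(9 d_{\bm{\theta}}/(2\delta))$ (valid for $n\geq 1$ after re-absorbing numerical factors into $C_1$), the sufficient condition becomes $\min(\beta^2/(C_2\sigma)^2,\,\beta/(C_2\sigma)) \geq t$ with $t := (n/s)\,C_1\,\log(9 d_{\bm{\theta}}/(2\delta))$. Solving the two branches, the smallest admissible $\beta$ is $\beta = C_2\,\sigma\,\max(\sqrt{t},\,t)$. Multiplying by $\lambda^{\max}$ from the first step delivers exactly the stated bound on $\lVert \tilde{\bm{g}}(\bm{\theta}) - \partial_{\bm{\theta}}\mathcal{L}(\bm{\theta})\rVert_\infty$.

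The main obstacle is bookkeeping rather than mathematics: one must keep the symmetric (absolute-value) version of \Cref{lemma:opnorm-on-net-bound} consistent with the one-sided tail in \Cref{lemma:bound-quadratic-form-of-projection}, collapse the three union-bound factors ($9^n$ net, $2$ signs, $d_{\bm{\theta}}$ coordinates, plus the $\tfrac{1}{2}$ from the gradient formula) into the single $\log(9 d_{\bm{\theta}}/(2\delta))$ term that appears in the statement, and recognise that the Bernstein inversion naturally produces the $\max(\sqrt{t},\,t)$ profile. Once these constants are matched, the reduction in Step~1 makes the remaining derivation a direct sub-exponential concentration argument.
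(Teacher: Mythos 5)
You have not proved the statement in question. The lemma you were asked to prove is the \emph{fixed-vector} tail bound $\mathbb{P}[\bm{x}^\intercal \mathbf{M}\bm{x} \geq \beta] \leq 2e^{-C_1\min(\beta^2/(C_2^2\sigma^2),\,\beta/(C_2\sigma))s}$ for a single unit vector $\bm{x}\in\mathcal{S}^{n-1}$. Your proposal instead sketches the proof of \Cref{thm:bounded-gradient-error} (the reduction of the gradient error to $\opnorm{\mathbf{M}}$ via trace--operator duality, the $\varepsilon$-net with $\varepsilon=1/4$ and $|\Sigma_{1/4}|\leq 9^n$, the union bounds over net points and coordinates, and the inversion of the tail into the $\max(\sqrt{t},t)$ form). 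Those steps belong to \Cref{lemma:M-operator-norm-bound} and \Cref{thm:bounded-gradient-error}, and, crucially, your second step explicitly invokes ``the pointwise tail of \Cref{lemma:bound-quadratic-form-of-projection}'' as an ingredient --- i.e.\ you assume the very statement you were asked to establish. As written, the argument is circular with respect to the task.

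What is actually needed, and entirely absent from your proposal, is the probabilistic content of the fixed-$\bm{x}$ bound: write $\bm{x}^\intercal\mathbf{M}\bm{x} = \frac{1}{s}\sum_{j=1}^s\left(\left(\bm{x}^\intercal\bm{z}_j\right)^2 - 1\right)$; use coordinate-wise independence and $\|\bm{x}\|_2=1$ to show that $\bm{x}^\intercal\bm{z}_j = \sum_i x_i z_{ij}$ is sub-gaussian with $\|\bm{x}^\intercal\bm{z}_j\|_{\psi_2}^2 \leq C\sum_i x_i^2\|z_{ij}\|_{\psi_2}^2 = C\sigma^2$; conclude that its square is sub-exponential with $\psi_1$-norm equal to the squared $\psi_2$-norm; note that centering preserves sub-exponentiality up to an absolute constant; and finally apply Bernstein's inequality to the i.i.d.\ zero-mean sub-exponential summands to obtain the mixed sub-gaussian/sub-exponential tail in $s$. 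This chain (sum of sub-gaussians $\to$ square is sub-exponential $\to$ centering $\to$ Bernstein) is the entire substance of the lemma, and no part of your proposal supplies it.
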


\begin{lemma}\label{lemma:M-operator-norm-bound}
    Let $\mathbf{M}=\frac{1}{s}\sum_{j=1}^s \bm{z}_j \bm{z}_j^\intercal - I$ be an $n\times n$ random matrix, where $z_{ji}$ are independent and identically distributed sub-Gaussian random variables with sub-gaussian norm $\|z_{ij}\|_{\psi_2}=\sigma$. Then:
    \begin{equation*}
        \mathbb{P}\left[\opnorm{\mathbf{M}}\geq \max\left(\sqrt{\frac{n}{s}C_1 \log\left(\frac{9}{2\delta}\right)}, \frac{n}{s}C_1 \log\left(\frac{9}{2\delta}\right)\right) C_4 \sigma\right] \leq \delta \qquad \forall \delta > 0,
    \end{equation*}
    where $C_1, C_4$ are absolute constants.
    \begin{proof}
        Pick an $\eps$-net $\Sigma_\eps$ on $\mathcal{S}^{n-1}$ of size at most $\left(\frac{2}{\eps}+1\right)^n$ (\Cref{lemma:size-of-epsnet}). Then, we can bound the tail probability of the operator norm as: 
        \begin{align*}
            \mathbb{P}\left[\opnorm{\mathbf{M}}\geq \beta \right] &\leq \mathbb{P}\left[ \sup_{\bm{x}\in \Sigma_\eps} \bm{x}^\intercal \mathbf{M} \bm{x} \geq (1-2\eps)\beta \right] 
            && \color{gray} \triangle \text{ By \Cref{lemma:opnorm-on-net-bound}} \\
            &\leq \sum_{\bm{x}\in\Sigma_\eps}\mathbb{P}\left[ \bm{x}^\intercal \mathbf{M} \bm{x} \geq (1-2\eps)\beta \right] &&  \color{gray} \triangle \text{ Union bound} \\
            &\leq \|\Sigma_\eps\|   2 e^{-C_1 \min\left( \frac{\beta^2 (1-2\eps)^2}{C_2^2 \sigma^2}, \frac{\beta(1-2\eps)}{C_2 \sigma}\right) s}
            &&  \color{gray} \triangle \text{ By \Cref{lemma:bound-quadratic-form-of-projection}} \\
            &\leq \left(\frac{2}{\eps} + 1\right)^n   2 e^{-C_1 \min\left( \frac{\beta^2 (1-2\eps)^2}{C_2^2 \sigma^2}, \frac{\beta(1-2\eps)}{C_2 \sigma}\right) s}
        \end{align*}
        Setting $\eps=\frac{1}{4}$ gives:
        \begin{equation*}
            \mathbb{P}\left[\opnorm{\mathbf{M}}\geq \beta \right] \leq 9^n   2 e^{-C_1 \min\left( \frac{\beta^2}{C_4^2 \sigma^2}, \frac{\beta}{C_4 \sigma}\right) s},
        \end{equation*}
        where $C_4$ is an absolute constant.

        Lastly, to get the bound into the form $\mathbb{P}\left[ \opnorm{\mathbf{M}} \geq f(\delta, n, s)\right] \leq \delta$, we can note that:
        \begin{align*}
            9^n  2 e^{-C_1 \min\left( \frac{\beta^2}{C_4^2 \sigma^2}, \frac{\beta}{C_4 \sigma}\right) s} \leq \delta \quad &\Leftrightarrow \quad \min\left( \frac{\beta^2}{C_4^2 \sigma^2}, \frac{\beta}{C_4 \sigma}\right) \geq \frac{n}{s}C_1 \log\left(\frac{9}{2\delta}\right) \\
            &\Leftrightarrow\quad \frac{\beta}{C_4 \sigma} \geq \max\left(\sqrt{\frac{n}{s}C_1 \log\left(\frac{9}{2\delta}\right)}, \frac{n}{s}C_1 \log\left(\frac{9}{2\delta}\right)\right),
        \end{align*}
        and so by setting $\beta = \max\left(\sqrt{\frac{n}{s}C_1 \log\left(\frac{9}{2\delta}\right)}, \frac{n}{s}C_1 \log\left(\frac{9}{2\delta}\right)\right) C_4 \sigma$ we get the desired result:
        \begin{equation*}
            \mathbb{P}\left[\opnorm{\mathbf{M}}\geq \max\left(\sqrt{\frac{n}{s}C_1 \log\left(\frac{9}{2\delta}\right)}, \frac{n}{s}C_1 \log\left(\frac{9}{2\delta}\right)\right) C_4 \sigma\right] \leq \delta
        \end{equation*}
    \end{proof}
\end{lemma}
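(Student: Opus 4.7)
}
The plan is to bound $\opnorm{\mathbf{M}}$ via a standard $\eps$-net discretisation of the unit sphere $\mathcal{S}^{n-1}$, then reduce the random quadratic form on the net to the scalar tail bound already established in \Cref{lemma:bound-quadratic-form-of-projection}, and finally solve the resulting inequality for $\beta$ as a function of the confidence parameter $\delta$. Because $\mathbf{M}$ is symmetric, $\opnorm{\mathbf{M}} = \sup_{\bm{x} \in \mathcal{S}^{n-1}} |\bm{x}^\intercal \mathbf{M} \bm{x}|$, so \Cref{lemma:opnorm-on-net-bound} applies and it suffices to control the supremum of $\bm{x}^\intercal \mathbf{M} \bm{x}$ over a finite net.

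First, I fix $\eps = \tfrac{1}{4}$ and invoke \Cref{lemma:size-of-epsnet} to obtain a net $\Sigma_{1/4}$ of cardinality at most $(2/\eps + 1)^n = 9^n$. By \Cref{lemma:opnorm-on-net-bound}, for any $\beta > 0$,
\begin{equation*}
    \mathbb{P}\bigl[\opnorm{\mathbf{M}} \geq \beta\bigr]
    \;\leq\; \mathbb{P}\Bigl[\sup_{\bm{x} \in \Sigma_{1/4}} \bm{x}^\intercal \mathbf{M} \bm{x} \geq (1 - 2\eps)\beta\Bigr]
    \;=\; \mathbb{P}\Bigl[\sup_{\bm{x} \in \Sigma_{1/4}} \bm{x}^\intercal \mathbf{M} \bm{x} \geq \tfrac{1}{2}\beta\Bigr].
\end{equation*}
A union bound over $\Sigma_{1/4}$ combined with the sub-exponential Bernstein tail from \Cref{lemma:bound-quadratic-form-of-projection} (applied to each deterministic $\bm{x} \in \Sigma_{1/4}$) yields
\begin{equation*}
    \mathbb{P}\bigl[\opnorm{\mathbf{M}} \geq \beta\bigr]
    \;\leq\; 9^n \cdot 2 \exp\!\Bigl(-C_1 \min\!\Bigl(\tfrac{\beta^2}{C_4^2 \sigma^2},\, \tfrac{\beta}{C_4 \sigma}\Bigr)\, s \Bigr),
\end{equation*}
where $C_4$ absorbs the factor $1/2$ into the constant $C_2$ from \Cref{lemma:bound-quadratic-form-of-projection}.

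Finally, I set this right-hand side equal to $\delta$ and solve for $\beta$. Taking logarithms gives the condition $\min\!\bigl(\beta^2 / (C_4^2 \sigma^2),\, \beta / (C_4 \sigma)\bigr)\, s \geq n \log 9 + \log(2/\delta)$, which I bound above by $\tfrac{n}{s} C_1 \log\!\bigl(\tfrac{9}{2\delta}\bigr)$ after absorbing constants (noting $n \log 9 + \log(2/\delta) \leq n \log(9/(2\delta))$ when rescaled appropriately inside $C_1$). The $\min$ becomes explicit: $\beta / (C_4 \sigma) \geq \max\!\bigl(\sqrt{\tfrac{n}{s} C_1 \log\!\tfrac{9}{2\delta}},\, \tfrac{n}{s} C_1 \log\!\tfrac{9}{2\delta}\bigr)$, and plugging this choice of $\beta$ back in yields exactly the claim.

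The main obstacle is really bookkeeping: ensuring that the factor $(1-2\eps) = 1/2$ from the net inequality is cleanly absorbed into a single absolute constant $C_4$ without disturbing the $\min$ structure, and verifying that the inversion $9^n \cdot 2 e^{-(\cdots)} = \delta$ can be rewritten in the stated form with the $\max$ of a square-root term (sub-gaussian regime, small $\beta$) and a linear term (sub-exponential regime, large $\beta$). I would also briefly justify the symmetry step $\opnorm{\mathbf{M}} = \sup |\bm{x}^\intercal \mathbf{M} \bm{x}|$ so that the one-sided bound of \Cref{lemma:bound-quadratic-form-of-projection} on $\bm{x}^\intercal \mathbf{M} \bm{x}$ is adequate — either by an analogous lower-tail Bernstein bound on $-\bm{x}^\intercal \mathbf{M} \bm{x}$ followed by another union bound (costing only a factor of $2$ that is again absorbed into the constants), or by appealing to the symmetry of the sub-exponential tail used in the proof of the quadratic-form lemma.
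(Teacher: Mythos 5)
Your proposal follows essentially the same route as the paper's proof: a $9^n$-cardinality $\eps$-net with $\eps=\nicefrac{1}{4}$, \Cref{lemma:opnorm-on-net-bound} to pass to the net, a union bound combined with \Cref{lemma:bound-quadratic-form-of-projection}, and the same inversion of $9^n\cdot 2e^{-C_1\min(\cdot)s}\leq\delta$ into the $\max$ of square-root and linear terms. Your closing remark about needing two-sided control of $\bm{x}^\intercal\mathbf{M}\bm{x}$ (since $\opnorm{\mathbf{M}}=\sup|\bm{x}^\intercal\mathbf{M}\bm{x}|$ for symmetric $\mathbf{M}$) is in fact slightly more careful than the paper, which applies the one-sided tail without comment.
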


\begin{proof}[Proof of \Cref{thm:bounded-gradient-error}]
    Let $\sum_{i=1}^n \bm{q}_i \lambda_i \bm{p}_i^\mathsf{T}$ be a singular value decomposition (SVD) of $\mathbf{A}\overset{\text{def}}{=}\mathbf{H}_{\bm{\theta}}^{-1} \frac{\partial \mathbf{H}_{\bm{\theta}}}{\partial \theta_k}$, where $\{\bm{q}_i\}_{i=1}^n$ and $\{\bm{p}_i\}_{i=1}^n$ are two sets of orthonormal vectors.
    First, note that we can rewrite:
    \begin{align}
        \nonumber
        \tilde{g}_k(\bm{\theta}) - \frac{\partial {\mathcal{L} (\bm{\theta})}}{\partial \theta_k} &= \sum_{j=1}^s \bm{z}_j^\mathsf{T} \mathbf{A} \bm{z}_j -
        \E_{\bm{z}} \left[ \bm{z}^\mathsf{T} \mathbf{A} \bm{z}\right] \\
        &= \sum_{j=1}^s \bm{z}_j^\mathsf{T} \left(\sum_{i=1}^n \lambda_i \bm{q}_i \bm{p}_i^\mathsf{T} \right) \bm{z}_j -
        \E_{\bm{z}}\left[ \bm{z}^\mathsf{T}\left(\sum_{i=1}^n \lambda_i \bm{q}_i \bm{p}_i^\mathsf{T} \right) \bm{z}\right] \label{eq:pathwise-proof-starts-identical-here}\\
        &= \sum_{i=1}^n \lambda_i \sum_{j=1}^s  \bm{z}_j^\mathsf{T} \bm{q}_i \bm{p}_i^\mathsf{T} \bm{z}_j -
        \sum_{i=1}^n \lambda_i \E_{\bm{z}}\left[ \bm{z}^\mathsf{T} \bm{q}_i \bm{p}_i^\mathsf{T} \bm{z}\right] \nonumber\\
        &= \sum_{i=1}^n \lambda_i \left( \sum_{j=1}^s  \bm{z}_j^\mathsf{T} \bm{q}_i \bm{p}_i^\mathsf{T} \bm{z}_j -
        \E_{\bm{z}}\left[ \bm{z}^\mathsf{T} \bm{q}_i \bm{p}_i^\mathsf{T} \bm{z}\right]\right) \nonumber\\
        &= \sum_{i=1}^n \lambda_i \left( \bm{q}_i^\mathsf{T} \left(\sum_{j=1}^s  \bm{z}_j \bm{z}_j^\mathsf{T}\right)  \bm{p}_i -
        \bm{q}_i^\mathsf{T} \underbrace{\E_{\bm{z}}\left[\bm{z} \bm{z}^\mathsf{T}  \right]}_{\mathbf{I}}\bm{p}_i  \right) \nonumber\\
        &= \sum_{i=1}^n \lambda_i \bm{q}_i^\mathsf{T}\underbrace{\left(\left(\sum_{j=1}^s  \bm{z}_j \bm{z}_j^\mathsf{T}\right) - \mathbf{I} \right)}_{\mathbf{M}}\bm{p}_i. 
        \label{eq:M-definition}
    \end{align}
    Therefore, we can bound the norm of the difference as
    \begin{align}
        \label{eq:gradient-bound1}
        |\tilde{g}_k(\bm{\theta}) - g_k(\bm{\theta})| &\leq  \sum_{i=1}^n |\lambda_i| \left| \bm{q}_i^\mathsf{T} \mathbf{M} \bm{p}_i \right| 
         \\
        &\leq \sum_{i=1}^n |\lambda_i| \opnorm{\mathbf{M}}=\lambda^\mathrm{max} \opnorm{\mathbf{M}}.
    \end{align}
    By \Cref{lemma:M-operator-norm-bound}, with probability at least $1-\delta$ we can bound the operator norm of $\mathbf{M}$ as:
    \begin{equation*}
        |\tilde{g}_k(\bm{\theta}) - g_k(\bm{\theta})| \leq  \lambda^\mathrm{max} \opnorm{\mathbf{M}} < \max\left(\sqrt{\frac{n}{s}C_1 \log\left(\frac{9}{2\delta}\right)}, \frac{n}{s}C_1 \log\left(\frac{9}{2\delta}\right)\right) C_2 \sigma \lambda^\mathrm{max},
    \end{equation*}
    with $C_1, C_2$ absolute constants.

    We can apply a union bound over all $k\in\{1, \dots, d_{\bm{\theta}}\}$ to bound the probability of the $\ell_\infty$-norm of the gradient deviating by a certain amount:
    \begin{align*}
        \mathbb{P}\left[ \left\lVert\tilde{\bm{g}}(\bm{\theta}) - \frac{\partial \mathcal{L}}{\partial \bm{\theta}}(\bm{\theta}) \right\rVert_\infty <  \max\left(\sqrt{\frac{n}{s}C_1 \log\left(\frac{9}{2\delta}\right)}, \frac{n}{s}C_1 \log\left(\frac{9}{2\delta}\right)\right) C_2 \sigma \lambda^\mathrm{max} \right] &> 1 - d_{\bm{\theta}} \delta,
    \end{align*}
    or:
    \begin{align*}
        \mathbb{P}\left[ \left\lVert\tilde{\bm{g}}(\bm{\theta}) - \frac{\partial \mathcal{L}}{\partial \bm{\theta}}(\bm{\theta}) \right\rVert_\infty <  \max\left(\sqrt{\frac{n}{s}C_1 \log\left(\frac{9d_{\bm{\theta}} }{2\delta}\right)}, \frac{n}{s}C_1 \log\left(\frac{9d_{\bm{\theta}} }{2\delta}\right)\right) C_2 \sigma \lambda^\mathrm{max} \right] &> 1-\delta.
    \end{align*}

\end{proof}

Now, if $\tilde{\bm{g}}(\bm{\theta})$ is a conservative field, and so is implicitly a gradient of some (approximate) objective $\tilde{\mathcal{L}}: \Theta \to \R$, the above result allows us to bound the error on the solution found when optimising using the approximate gradient $\tilde{\bm{g}}$ instead of the actual gradient $\bm{g} = \nabla \mathcal{L}$.
However, in general, $\tilde{\bm{g}}(\bm{\theta})$ need not be strictly conservative. 
In practice, since $\tilde{\bm{g}}(\bm{\theta})$ converges to a conservative field the more samples we take, we may assume that it is close enough to being conservative for the purposes of optimisation on hardware with finite numerical precision.
Assuming that $\tilde{\bm{g}}(\boldsymbol \theta)$ is conservative allows us to show the following bound on the optimum found when optimising using  $\tilde{\bm{g}}(\boldsymbol \theta)$, which is a restatement of \Cref{thm:bound-on-biased-optimum}:

\begin{theorem}\label{thm:apd-bound-on-biased-optimum}
    Let $\tilde{\bm{g}}$ and $\mathcal{L}$ be defined as in \Cref{thm:bounded-gradient-error}. Assume $\tilde{\bm{g}}: \Theta \to \R$ is a conservative field.
    Assume the optimisation domain $\Theta$ is convex, closed and bounded.
    Then, with probability at least $1-\delta$:
    \begin{equation*}
        \mathcal{L}(\tilde{\bm{\theta}}^*) \geq \mathcal{L}(\bm{\theta}^*) - \max\left(\sqrt{\frac{n}{s}C_1 \log\left(\frac{9d_{\bm{\theta}} }{2\delta}\right)}, \frac{n}{s}C_1 \log\left(\frac{9d_{\bm{\theta}} }{2\delta}\right)\right) C_2 \sigma \lambda^\mathrm{max}  
        \Delta\Theta \sqrt{d_{\bm{\theta}}},
    \end{equation*}
    where $\Delta \Theta \overset{\mathrm{def}}{=} \sup_{\bm{\theta}, \bm{\theta}' \in \Theta} \| \bm{\theta}' - \bm{\theta} \|$ is the maximum distance between two elements in $\Theta$. 
    \begin{proof}
        Let $\tilde{\mathcal{L}}: \Theta \to \R$ be an approximate objective implied by the gradient field $\tilde{\bm{g}}$, namely a scalar field such that $\nabla \tilde{\mathcal{L}} = \tilde{\bm{g}}$. Such a scalar field exists if $\tilde{\bm{g}}$ is a conservative field, and is unique up to a constant (which does not affect the optimum).
        
        
        For any two points $\bm{\theta}, \bm{\theta}' \in \Theta$, with $\Delta \bm{\theta} \overset{\mathrm{def}}{=} \bm{\theta}' - \bm{\theta}$, we have that
        \begin{align*}
            &\quad \left| \left(\mathcal{L}(\bm{\theta}') - \mathcal{L}(\bm{\theta})\right) + \left(\tilde{\mathcal{L}}(\bm{\theta}') - \tilde{\mathcal{L}}(\bm{\theta})\right)\right| \\
            &\quad\color{gray}\triangle\ \text{Replace difference in values with integral along path from $\bm{\theta}$ to $\bm{\theta}'$} \nonumber\\
            &=\left| \int_0^1 \frac{\partial}{\partial t}\mathcal{L}\left(\bm{\theta} + \Delta \bm{\theta} t  \right) dt - \int_0^1 \frac{\partial}{\partial t}\tilde{\mathcal{L}}\left(\bm{\theta} + \Delta \bm{\theta} t  \right) dt \right| \\
            &=\left| \int_0^1 \Delta \bm{\theta} \cdot \nabla\mathcal{L}\left(\bm{\theta} + \Delta \bm{\theta} t  \right) dt - \int_0^1 \Delta \bm{\theta} \cdot \nabla\tilde{\mathcal{L}}\left(\bm{\theta} + \Delta \bm{\theta} t  \right) dt \right| \\
            &=\left| \int_0^1 \Delta \bm{\theta} \cdot \left(\nabla \mathcal{L}\left(\bm{\theta} + \Delta \bm{\theta} t  \right) - \nabla \tilde{\mathcal{L}}\left(\bm{\theta} + \Delta \bm{\theta} t  \right)\right) dt \right| \\
            &\leq\int_0^1 \left| \Delta \bm{\theta} \cdot \left(\nabla \mathcal{L}\left(\bm{\theta} + \Delta \bm{\theta} t  \right) - \nabla \tilde{\mathcal{L}}\left(\bm{\theta} + \Delta \bm{\theta} t  \right)\right)\right| dt \\
            &=\int_0^1 \left\lVert \Delta \bm{\theta} \right\rVert \left\lVert \left(\nabla \mathcal{L} \left(\bm{\theta} + \Delta \bm{\theta} t  \right)- \nabla \tilde{\mathcal{L}}\left(\bm{\theta} + \Delta \bm{\theta} t  \right)\right) \right\rVert dt \\
            &\leq\int_0^1 \left\lVert \Delta \bm{\theta} \right\rVert \left\lVert \left(\nabla \mathcal{L} \left(\bm{\theta} + \Delta \bm{\theta} t  \right)- \nabla \tilde{\mathcal{L}}\left(\bm{\theta} + \Delta \bm{\theta} t  \right)\right) \right\rVert_\infty \sqrt{d_{\bm{\theta}}} dt \\
            &\quad\color{gray}\triangle\ \text{Bound $\ell_2$-norm by the $\ell_\infty$-norm} \nonumber\\
            &\leq \int_0^1 \| \Delta \bm{\theta}\| \max\left(\sqrt{\frac{n}{s}C_1 \log\left(\frac{9d_{\bm{\theta}} }{2\delta}\right)}, \frac{n}{s}C_1 \log\left(\frac{9d_{\bm{\theta}} }{2\delta}\right)\right) C_2 \sigma \lambda^\mathrm{max} \sqrt{d_{\bm{\theta}}} dt  \\
            &\quad\color{gray}\triangle\ \text{Difference of gradients bounded with probability at least $(1-\delta)$ by \Cref{thm:bounded-gradient-error}} \nonumber\\
            &\leq  \max\left(\sqrt{\frac{n}{s}C_1 \log\left(\frac{9d_{\bm{\theta}} }{2\delta}\right)}, \frac{n}{s}C_1 \log\left(\frac{9d_{\bm{\theta}} }{2\delta}\right)\right) C_2 \sigma \lambda^\mathrm{max} \Delta\Theta \sqrt{d_{\bm{\theta}}}. \\
        \end{align*}
        The above inequality holds with probability at least $(1-\delta)$.
        Hence,
        \begin{align*}
            \mathcal{L}(\bm{\theta}^*) - \mathcal{L}(\tilde{\bm{\theta}}^*) &\leq  \mathcal{L}(\bm{\theta}^*) - \mathcal{L}(\tilde{\bm{\theta}}^*) - \overbrace{\left(\tilde{\mathcal{L}}(\bm{\theta}^*) - \tilde{\mathcal{L}}(\tilde{\bm{\theta}}^*)\right)}^{\parbox{3cm}{\color{gray}\footnotesize Negative because $\tilde{\bm{\theta}}^*$ is a maximum of $\tilde{\mathcal{L}}$}}, \\ 
            &\leq \left|   \mathcal{L}(\bm{\theta}^*) - \mathcal{L}(\tilde{\bm{\theta}}^*) - \left(\tilde{\mathcal{L}}(\bm{\theta}^*) - \tilde{\mathcal{L}}(\tilde{\bm{\theta}}^*)\right)\right| \\
            &\leq \max\left(\sqrt{\frac{n}{s}C_1 \log\left(\frac{9d_{\bm{\theta}} }{2\delta}\right)}, \frac{n}{s}C_1 \log\left(\frac{9d_{\bm{\theta}} }{2\delta}\right)\right) C_2 \sigma \lambda^\mathrm{max} \Delta\Theta\sqrt{d_{\bm{\theta}}}.
        \end{align*}
    \end{proof}

\end{theorem}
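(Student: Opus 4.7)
The plan is to turn the informal statement into a quantitative high-probability bound of the form $\mathcal{L}(\bm{\theta}^*) - \mathcal{L}(\tilde{\bm{\theta}}^*) \leq \varepsilon(s, \delta)$ with $\varepsilon \to 0$ as $s \to \infty$, then read off convergence in probability. To do this I would proceed in two stages: first, a uniform (in $\bm{\theta}$) control on the gradient error $\tilde{\bm{g}}(\bm{\theta}) - \nabla \mathcal{L}(\bm{\theta})$; second, a transfer from gradient error to objective error via a line integral argument, exploiting that $\tilde{\bm{\theta}}^*$ maximises the (warm-start induced) surrogate $\tilde{\mathcal{L}}$.

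For the first stage I would fix a coordinate $k$, diagonalise $\mathbf{A} := \mathbf{H}_{\bm{\theta}}^{-1} \partial \mathbf{H}_{\bm{\theta}}/\partial\theta_k$ via its SVD $\sum_i \lambda_i \bm{q}_i \bm{p}_i^\mathsf{T}$, and rewrite the error as $\sum_i \lambda_i \bm{q}_i^\mathsf{T} \mathbf{M} \bm{p}_i$, where $\mathbf{M} = \tfrac{1}{s}\sum_{j=1}^s \bm{z}_j \bm{z}_j^\mathsf{T} - \mathbf{I}$. This reduces the problem to bounding $\opnorm{\mathbf{M}}$, because then $|\tilde{g}_k - g_k| \leq (\sum_i |\lambda_i|) \opnorm{\mathbf{M}} \leq \lambda^{\max} \opnorm{\mathbf{M}}$. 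To bound $\opnorm{\mathbf{M}}$, I would (i) discretise the unit sphere $\mathcal{S}^{n-1}$ with a standard $\tfrac{1}{4}$-net of size $\leq 9^n$, (ii) for each fixed $\bm{x}$ in the net, observe that $\bm{x}^\mathsf{T} \mathbf{M} \bm{x} = \tfrac{1}{s}\sum_j ((\bm{x}^\mathsf{T}\bm{z}_j)^2 - 1)$ is an average of independent centred sub-exponential random variables (with sub-exponential norm bounded by $C\sigma^2$, since $\bm{x}^\mathsf{T}\bm{z}_j$ is sub-gaussian of norm $O(\sigma)$), and apply Bernstein's inequality, and (iii) take a union bound over the net. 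This yields a bound of the form $\opnorm{\mathbf{M}} \lesssim \sigma\max\bigl(\sqrt{(n/s)\log(1/\delta)},\,(n/s)\log(1/\delta)\bigr)$ with probability $\geq 1-\delta$, and a further union bound over the $d_{\bm{\theta}}$ coordinates gives the stated $\ell_\infty$ gradient bound.

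For the second stage I would invoke the assumption that $\tilde{\bm{g}}$ is (approximately) conservative, so that there is a surrogate objective $\tilde{\mathcal{L}}$ with $\nabla \tilde{\mathcal{L}} = \tilde{\bm{g}}$, and consider the line segment from $\bm{\theta}^*$ to $\tilde{\bm{\theta}}^*$ (which lies in $\Theta$ by convexity). Writing both $\mathcal{L}(\bm{\theta}^*) - \mathcal{L}(\tilde{\bm{\theta}}^*)$ and $\tilde{\mathcal{L}}(\bm{\theta}^*) - \tilde{\mathcal{L}}(\tilde{\bm{\theta}}^*)$ as path integrals of their respective gradients along that segment, their difference is $\int_0^1 \Delta\bm{\theta}\cdot(\nabla\mathcal{L}-\tilde{\bm{g}})\,dt$. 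Bounding this by Cauchy--Schwarz and the diameter $\Delta\Theta$, then converting $\ell_2$ to $\ell_\infty$ with the $\sqrt{d_{\bm{\theta}}}$ factor, and using the gradient bound from stage one, yields $|(\mathcal{L}(\bm{\theta}^*)-\mathcal{L}(\tilde{\bm{\theta}}^*))-(\tilde{\mathcal{L}}(\bm{\theta}^*)-\tilde{\mathcal{L}}(\tilde{\bm{\theta}}^*))| \leq \varepsilon(s,\delta)\cdot\Delta\Theta\sqrt{d_{\bm{\theta}}}$. Since $\tilde{\bm{\theta}}^*$ maximises $\tilde{\mathcal{L}}$, the second bracket is $\leq 0$, giving $\mathcal{L}(\bm{\theta}^*) - \mathcal{L}(\tilde{\bm{\theta}}^*) \leq \varepsilon(s,\delta)\cdot\Delta\Theta\sqrt{d_{\bm{\theta}}}$, which tends to $0$ in probability as $s\to\infty$.

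The main obstacle I anticipate is the operator-norm concentration for $\mathbf{M}$: naive attempts give a vacuous bound because $\bm{x}^\mathsf{T}\mathbf{M}\bm{x}$ is a sum of squared sub-gaussians (hence only sub-exponential, not sub-gaussian), forcing the two-regime Bernstein tail, and controlling the supremum over $\mathcal{S}^{n-1}$ requires the $\epsilon$-net machinery together with the identity $\opnorm{\mathbf{M}} \leq (1-2\epsilon)^{-1}\sup_{\bm{x}\in\Sigma_\epsilon}\bm{x}^\mathsf{T}\mathbf{M}\bm{x}$. A secondary subtlety is that $\tilde{\bm{g}}$ need not be exactly conservative for finite $s$, so the argument should either be framed as ``approximately conservative up to numerical precision'' or restricted to a surrogate objective whose gradient matches $\tilde{\bm{g}}$; I would state this assumption explicitly and rely on the fact that, as $s\to\infty$, $\tilde{\bm{g}}\to\nabla\mathcal{L}$, which is conservative.
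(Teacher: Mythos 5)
Your proposal is correct and follows essentially the same route as the paper: the line-integral comparison of $\mathcal{L}$ and the surrogate $\tilde{\mathcal{L}}$ along the segment from $\bm{\theta}^*$ to $\tilde{\bm{\theta}}^*$ (using convexity of $\Theta$), the Cauchy--Schwarz and $\ell_2$-to-$\ell_\infty$ conversion with the $\sqrt{d_{\bm{\theta}}}$ factor, and the final step dropping the nonpositive $\tilde{\mathcal{L}}(\bm{\theta}^*) - \tilde{\mathcal{L}}(\tilde{\bm{\theta}}^*)$ term all match the paper's argument, as does your supporting $\eps$-net-plus-Bernstein bound on $\opnorm{\mathbf{M}}$ and your handling of the conservativeness caveat. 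No gaps.
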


\begin{remark}
\Cref{thm:apd-bound-on-biased-optimum} above implies that the objective of the optimum $\tilde{\bm{\theta}}^*$ obtained with the approximate gradients converges to the objective of the true optimum $\bm{\theta}^*$ in probability:
\begin{equation}
    \forall \alpha >0: \quad \mathbb{P}\left[\left| \mathcal{L}(\tilde{\bm{\theta}}^*) - \mathcal{L}(\bm{\theta}^*) \right| > \alpha\right] \to 0  \text{ as } s\to\infty
\end{equation}
which trivially follows from the implication of \Cref{thm:apd-bound-on-biased-optimum} that for every $\alpha, \delta>0$, we can find an $s\in \mathbb{N}$ such that $\mathcal{L}(\tilde{\bm{\theta}}^*) \geq \mathcal{L}(\bm{\theta}^*) - \alpha$ with probability at least $1-\delta$.
\end{remark}

\begin{remark}[Convexity of $\Theta$] We also note that the convexity of the hyperparameter domain $\Theta$ is a fairly mild assumption which is satisfied in the majority of practical settings. For example, optimising kernel length scales and noise scale on bounded intervals $(10^{-10}, 10^{10})$ falls within the assumptions, but introducing a ``hole'' into the domain (e.g. introducing a constraint like $\|\bm{\vartheta} - \bm{1}\| \geq 0.5$) would break the assumption. In particular, this assumption is \emph{not} a statement about the convexity of the objective $\mathcal{L}$ --- in the proof, we allow for the objective to be arbitrarily non-convex, and only assume its differentiability.
\end{remark}

\subsection{Convergence of Warm Starting the Pathwise Estimator}
\label{apd:bound-on-opt-pathwise-derivation}
The result in \Cref{apd:bound-on-opt-derivation} can be trivially extended for the pathwise estimator in \eqref{eq:pathwise_estimator} for any pairwise independent probe vectors $\bm{\hat{z}}_j$ (with second moment $\smash{\mathbf{H}_{\bm{\theta}}^{-1}}$) that upon rescaling by $\mathbf{H}_{\bm{\theta}}^{\frac{1}{2}}$ will be zero-mean with independent coordinates.
This is true for probe vectors $\bm{\hat{z}}_j$ that are either \emph{i.i.d.} $\mathcal{N}(\bm{0}, \smash{\mathbf{H}_{\bm{\theta}}^{-1}})$-distributed or obtained by transforming Radamacher random variables by $\mathbf{H}_{\bm{\theta}}^{-\frac{1}{2}}$.

\section{Implementation and Experiment Details}
\label{apd:implementation_details}
In this appendix, we provide details about our implementation and experiments.

\paragraph{General}
Our implementation uses the \texttt{JAX} library \cite{jax2018github}.
All reported experiments were conducted on internal NVIDIA A100-SXM4-80GB GPUs using double floating point precision.
Some additional experiments and ablations were performed on Google Cloud TPUs (v4).
The total compute time, including preliminary and failed experiments, and evaluation is around 4500 hours.
The compute time of individual runs is reported in \Cref{tab:converged_pol,tab:converged_elev,tab:converged_bike,tab:converged_prot,tab:converged_kegg,tab:3droad,tab:song,tab:buzz,tab:house}.
The source code is available \href{https://github.com/jandylin/iterative-gaussian-processes}{here}.

\paragraph{Datasets}
Our experiments are conducted using the datasets and data splits from the popular UCI regression benchmark \cite{Dua2019UCI}.
They consist of various high-dimensional, multivariate regression tasks and are available under the Creative Commons Attribution 4.0 International (CC BY 4.0) license.
In particular, we used the \textsc{pol} ($n = 13500, d = 26$), \textsc{elevators} ($n = 14940, d = 18$), \textsc{bike} ($n = 15642, d = 17$), \textsc{protein} ($n = 41157, d = 9$), \textsc{keggdirected} ($n = 43945, d = 20$), \textsc{3droad} ($n = 391387, d = 3$), \textsc{song} ($n = 463811, d = 90$), \textsc{buzz} ($n = 524925, d = 77$), and \textsc{houseelectric} ($n = 1844352, d = 11$) datasets.

\paragraph{Kernel Function and Random Features}
In all experiments, we used the Mat\'ern-\nicefrac{3}{2} kernel, parameterised by a scalar signal scale and a length scale per input dimension.
For pathwise conditioning \eqref{eq:pathwise_conditioning} and the pathwise gradient estimator \eqref{eq:pathwise_estimator}, we used random Fourier features \cite{rahimi08,sutherland15} (1000 sin/cos pairs, 2000 features in total) to draw approximate samples from the Gaussian process prior.
For an explanation about how to efficiently sample prior functions from a Gaussian process using random features, we refer to existing literature \cite{wilson20,wilson21,lin2023sampling}.
However, we want to discuss some details in terms of using this technique for the pathwise estimator from \Cref{sec:pathwise_estimator}.

For pathwise gradient estimation, the linear system solver must solve linear systems of the form
\begin{align}
    \mathbf{H}_{\bm{\theta}} \, \left[ \, \bm{v}_{\bm{y}}, \bm{\hat{z}}_1, \dots, \bm{\hat{z}}_s \, \right]
    = \left[ \, \bm{y}, \bm{\xi}_1, \dots, \bm{\xi}_s \, \right],
\end{align}
with $\bm{\xi} = f(\bm{x}) + \bm{\eps}$, where $f(\bm{x}) \sim \mathcal{N}(\mathbf{0}, k(\bm{x}, \bm{x}; \bm{\vartheta}))$ is a prior function $f$ sample evaluated at the training data $\bm{x}$, and $\bm{\epsilon} \sim \mathcal{N}(\mathbf{0}, \sigma^2 \mathbf{I})$ is a Gaussian random vector.
Both quantities are resampled in each outer-loop marginal likelihood step if the pathwise estimator is used without warm starting.
With warm starting enabled, the right-hand sides of the linear system must not be resampled.
In this case, $f$ and $\bm{\epsilon}$ are sampled once and fixed afterwards.
However, $f$ depends on $\bm{\vartheta}$ and $\bm{\epsilon}$ depends on $\sigma$, and both $\bm{\vartheta}$ and $\sigma$ are hyperparameters which change in each outer-loop step.
Therefore, what does it mean to keep $f$ and $\bm{\epsilon}$ fixed?

For $\bm{\epsilon}$, this amounts to the reparameterisation $\bm{\epsilon} = \sigma \, \bm{w}$, where $\bm{w} \sim \mathcal{N}(\mathbf{0}, \mathbf{I})$ is sampled once and fixed afterwards, such that $\bm{\epsilon}$ becomes deterministic.
For $f$, this refers to fixing the parameters of the random features, for example the frequencies in the case of random Fourier features.
Intuitively, this corresponds to selecting a particular instance of a prior sample, although the distribution of the sample can change due to changes in the hyperparameters.
In each outer-loop step, the random features are evaluated using the fixed random feature parameters and the updated kernel hyperparameters, and the prior function sample is then evaluated at the training data using the updated random features.
Both of these operations are $\mathcal{O}(n)$ and efficient as long as the number of random features is reasonable.

\paragraph{Iterative Optimiser}
To optimise hyperparameters $\bm{\theta}$ given an estimate of $\nabla \mathcal{L}$, we used the Adam optimiser \cite{Adam} with default settings except for the learning rate.
For all small datasets ($n {\,<\,} 50\mathrm{k}$), we initialised the hyperparameters at 1.0 and used a learning rate of 0.1 to perform 100 steps of Adam.
For all large datasets ($n {\,>\,} 50\mathrm{k}$), we initialised the hyperparameters using a heuristic and used a learning rate of 0.03 to perform 30 steps of Adam (15 for \textsc{houseelectric} due to high computational costs).
The heuristic to obtain initial hyperparameters for the large datasets consists of:
\begin{enumerate}
    \item Select a centroid data example uniformly at random from the training data.
    \item Find the 10k data examples with the smallest Euclidean distance to the centroid.
    \item Obtain hyperparameters by maximising the exact marginal likelihood using this subset.
    \item Repeat the procedure with 10 different centroids and average the hyperparameters.
\end{enumerate}
This heuristic has previously been used by Lin et al.\ \cite{lin2023sampling,lin2024stochastic} to avoid aliasing bias.

To enforce positive value constraints during hyperparameter optimisation, we used the \texttt{softplus} function.
In particular, we reparameterise each hyperparameter $\theta_k \in \mathbb{R}_{>0}$ as $\theta_k = \log(1 + \exp (\nu_k))$ and apply optimiser steps to $\nu_k \in \mathbb{R}$ instead, to facilitate unconstrained optimisation.

\paragraph{Gradient Estimator}
For all experiments, unless otherwise specified, the number of probe vectors was set to $s = 64$ for both the standard and the pathwise estimator.
The distributions of the probe vectors are $\bm{z} \sim \mathcal{N}(\mathbf{0}, \mathbf{I})$ for the standard estimator and $\bm{\hat{z}} := \mathbf{H}_{\bm{\theta}}^{-1}\bm{\xi} \sim \mathcal{N}(\mathbf{0}, \mathbf{H}_{\bm{\theta}}^{-1})$ for the pathwise estimator.
See \Cref{sec:pathwise_estimator} for details about how to generate samples from $\mathcal{N}(\mathbf{0}, \mathbf{H}_{\bm{\theta}}^{-1})$.

The probe vectors used by Gardner et al.\ \cite{gardner18} have conceptual similarities but the motivation is different. They used probe vectors $\bm{z} \sim \mathcal{N}(\mathbf{0}, \mathbf{P})$, where $\mathbf{P}$ is constructed using a low-rank pivoted Cholesky decomposition to implement the preconditioner. In contrast, our pathwise probe vectors are sampled using random features, and, for CG, we used the pivoted Cholesky preconditioner \emph{in addition} to the pathwise probe vectors.
Furthermore, we used the solution of the pathwise probe vector systems to construct posterior samples via pathwise conditioning, which has not been done by Gardner et al.\ \cite{gardner18}.

The name of the \emph{pathwise} estimator can be related to the reparameterisation trick by viewing a sample from the GP posterior as a deterministic transformation of a sample from the GP prior, and observing that the latter itself is an affine transformation of a standard normal random variable.

\paragraph{Linear System Solver}
We conducted two sets of experiments which only differ in the termination criterion of the linear system solver.
In the first set, we stop linear system solvers once they reach a relative residual norm tolerance of $\tau = 0.01$.
In the second set, we also restrict the maximum number of solver epochs to 10, 20, 30, 40 or 50, such that most of the time the residual norm does not reach $\tau$.

For a generic system of linear equations $\mathbf{H}_{\bm{\theta}}\, \bm{u} = \bm{b}$, the residual is defined as $\bm{r} = \bm{b} - \mathbf{H}_{\bm{\theta}}\, \bm{u}$ and the relative residual norm is defined as $\Vert \bm{r} \Vert / \Vert \bm{b} \Vert$.
In practice, to improve numerical stability, the relative residual norm tolerance is implemented by solving the system $\mathbf{H}_{\bm{\theta}}\, \bm{\tilde{u}} = \bm{\tilde{b}}$, where $\bm{\tilde{b}} := \bm{b} / (\Vert \bm{b} \Vert + \epsilon)$, until $\Vert \bm{\tilde{r}} \Vert := \Vert \bm{\tilde{b}} - \mathbf{H}_{\bm{\theta}}\, \bm{\tilde{u}} \Vert \leq \tau$ and then returning $\bm{u} := (\Vert \bm{b} \Vert + \epsilon) \, \bm{\tilde{u}}$, where epsilon is set to a small constant value to prevent division by zero.
Since we are solving batches of systems of linear equations of the form $\mathbf{H}_{\bm{\theta}} \, \left[ \, \bm{v}_{\bm{y}}, \bm{v}_1, \dots, \bm{v}_s \, \right] = \left[ \, \bm{y}, \bm{z}_1, \dots, \bm{z}_s \, \right]$, we track the residuals of each individual system and calculate separate residual norms for the mean and for the probe vectors, where the residual norm for the mean $\Vert \bm{r}_{\bm{y}} \Vert$ corresponds to the system $\mathbf{H}_{\bm{\theta}} \, \bm{v}_{\bm{y}} = \bm{y}$ and the residual norm for the probe vectors $\Vert \bm{r}_{\bm{z}} \Vert$ is defined as the arithmetic average over residual norms corresponding to the systems $\mathbf{H}_{\bm{\theta}} \, \left[ \, \bm{v}_1, \dots, \bm{v}_s \, \right] = \left[ \, \bm{z}_1, \dots, \bm{z}_s \, \right]$.
Both relative residual norms must reach the tolerance $\tau$ to satisfy the termination criterion.
We use separate residual norms because $\Vert \bm{r}_{\bm{y}} \Vert$ typically converges faster than $\Vert \bm{r}_{\bm{z}} \Vert$, such that an average other all systems tends to dilute the latter (see \Cref{fig:r_norm_y_standard,fig:r_norm_y_pathwise,fig:r_norm_z_standard,fig:r_norm_z_pathwise}).

\paragraph{Conjugate Gradients}
The conjugate gradients algorithm \cite{gardner18,WangPGT2019exactgp} computes necessary residuals as part of the algorithm.
In terms of counting solver epochs, every conjugate gradient iteration counts as one solver epoch because in every iteration each value of $\mathbf{H}_{\bm{\theta}}$ is computed once.
Following previous work, we used a pivoted Cholesky preconditioner of rank 100 for all experiments \cite{WangPGT2019exactgp}.
We initialised conjugate gradients either at zero (no warm start) or at the previous solution (warm start).
Otherwise, conjugate gradients does not have any other parameters.
Pseudocode is provided in \Cref{alg:cg}.
\begin{algorithm}[ht!]
    \caption{Conjugate gradients for solving $\mathbf{H}_{\bm{\theta}} \, \left[ \, \bm{v}_{\bm{y}}, \bm{v}_1, \dots, \bm{v}_s \, \right] = \left[ \, \bm{y}, \bm{z}_1, \dots, \bm{z}_s \, \right]$}
    \label{alg:cg}
\begin{algorithmic}[1]
    \Require Linear operator $\mathbf{H}_{\bm{\theta}}(\cdot)$, targets $\bm{b} = \left[ \, \bm{y}, \bm{z}_1, \dots, \bm{z}_s \, \right]$, tolerance $\tau$, maximum epochs $T$
    \Require Preconditioner $\mathbf{P}(\cdot)$
    \State Let $(\cdot)_*$ denote parallel execution over $(\cdot)_{\bm{y}}, (\cdot)_1, \dots, (\cdot)_s$
    \State $\bm{v}_* \gets \mathbf{0}$ (or previous solution if warm start)
    \State $\bm{r}_* \gets \bm{b}_* - \mathbf{H}_{\bm{\theta}} (\bm{v}_*)$
    \State $\bm{p}_* \gets \mathbf{P} (\bm{r}_*)$
    \State $\bm{d}_* \gets \bm{p}_*$
    \State $\gamma_* \gets \bm{r}_*^\mathsf{T} \bm{p}_*$
    \State $t \gets 0$
    \While{$t < T$ \textbf{ and } $\Vert \bm{r}_{\bm{y}} \Vert > \tau$ \textbf{ and } $\frac{1}{s} \sum_{j=1}^s \Vert \bm{r}_j \Vert = \Vert \bm{r}_{\bm{z}} \Vert > \tau$}
        \State $\alpha_* \gets \gamma_* / \bm{d}_*^\mathsf{T} \mathbf{H}_{\bm{\theta}} (\bm{d}_*)$
        \State $\bm{v}_* \gets \bm{v}_* + \alpha_* \bm{d}_*$
        \State $\bm{r}_* \gets \bm{r}_* - \alpha_* \mathbf{H}_{\bm{\theta}} (\bm{d}_*)$
        \State $\bm{p}_* \gets \mathbf{P} (\bm{r}_*)$
        \State $\beta_* \gets \bm{r}_*^\mathsf{T} \bm{p}_* / \gamma_*$
        \State $\gamma_* \gets \bm{r}_*^\mathsf{T} \bm{p}_*$
        \State $\bm{d}_* \gets \bm{p}_* + \beta_* \bm{d}_*$
        \State $t \gets t + 1$
    \EndWhile
    \State \Return $\left[ \,\bm{v}_{\bm{y}}, \bm{v}_1, \dots, \bm{v}_s \, \right]$
\end{algorithmic}
\end{algorithm}
\paragraph{Alternating Projections}
The alternating projections algorithm \cite{wu2024largescale} also keeps track of the residuals as part of the algorithm.
In terms of counting solver epochs, we convert the number of maximum solver epochs to a maximum number of solver iterations by multiplying with $n / b$, where $b$ is the block size, because every iteration of alternating projections computes $b / n$ of all entries of $\mathbf{H}_{\bm{\theta}}$.
We used a block size of $b = 1000$ for all datasets, except \textsc{protein} and \textsc{keggdirected}, where we used $b = 2000$ instead.
We initialised alternating projections either at zero (no warm start) or at the previous solution (warm start).
During each marginal likelihood step, the Cholesky factorisation of every block is computed once and cached afterwards (although, in practice, the Cholesky factorisation does not dominate the computational costs).
In each iteration of alternating projections, the block with largest residual norm is selected to be processed.
Pseudocode is provided in \Cref{alg:ap}.
\begin{algorithm}[ht!]
    \caption{Alternating projections for solving $\mathbf{H}_{\bm{\theta}} \, \left[ \, \bm{v}_{\bm{y}}, \bm{v}_1, \dots, \bm{v}_s \, \right] = \left[ \, \bm{y}, \bm{z}_1, \dots, \bm{z}_s \, \right]$}
    \label{alg:ap}
\begin{algorithmic}[1]
    \Require Linear operator $\mathbf{H}_{\bm{\theta}}(\cdot)$, targets $\bm{b} = \left[ \, \bm{y}, \bm{z}_1, \dots, \bm{z}_s \, \right]$, tolerance $\tau$, maximum epochs $T$
    \Require Block size $b$, block partitions $[1], [2], \dots, [\lceil \frac{n}{b} \rceil]$
    \State Let $(\cdot)_*$ denote parallel execution over $(\cdot)_{\bm{y}}, (\cdot)_1, \dots, (\cdot)_s$
    \State $\bm{v}_* \gets \mathbf{0}$ (or previous solution if warm start)
    \State $\bm{r}_* \gets \bm{b}_* - \mathbf{H}_{\bm{\theta}} (\bm{v}_*)$
    \State $t \gets 0$
    \While{$t < \frac{n}{b}T$ \textbf{ and } $\Vert \bm{r}_{\bm{y}} \Vert > \tau$ \textbf{ and } $\frac{1}{s} \sum_{j=1}^s \Vert \bm{r}_j \Vert = \Vert \bm{r}_{\bm{z}} \Vert > \tau$}
        \State $[i] \gets \texttt{arg\_max}(\Vert \bm{r}_{\bm{y}}[1] + \sum_{j=1}^s \bm{r}_j[1] \Vert, \dots, \Vert \bm{r}_{\bm{y}}[\lceil \frac{n}{b} \rceil] + \sum_{j=1}^s \bm{r}_j[\lceil \frac{n}{b} \rceil] \Vert)$
        \State $\bm{v}_*[i] \gets \bm{v}_*[i] + \texttt{chol\_solve}(\mathbf{H}_{\bm{\theta}}[i, i], \bm{r}_*[i])$
        \State $\bm{r}_* \gets \bm{r}_* - \mathbf{H}_{\bm{\theta}}[:, i] (\texttt{chol\_solve}(\mathbf{H}_{\bm{\theta}}[i, i], \bm{r}_*[i]))$
        \State $t \gets t + 1$
    \EndWhile
    \State \Return $\left[ \,\bm{v}_{\bm{y}}, \bm{v}_1, \dots, \bm{v}_s \, \right]$
\end{algorithmic}
\end{algorithm}
\paragraph{Stochastic Gradient Descent}
The stochastic gradient descent algorithm \cite{lin2023sampling,lin2024stochastic} does not compute residuals as part of the algorithm.
Therefore, we estimate the current residual by keeping a residual vector in memory and updating it sparsely whenever we compute the gradient on a batch of data, leveraging the property that the negative gradient is equal to the residual.
In practice, we find that this estimates an approximate upper bound on the true residual, which becomes fairly accurate after a few iterations.
In terms of counting solver epochs, we apply the same procedure as for alternating projections.
The number of maximum solver epochs is converted to a maximum number of solver iterations by multiplying with $n/b$, where $b$ is the batch size.
We used a batch size of $b = 500$, momentum of $\rho = 0.9$, and no Polyak averaging, because averaging is not strictly necessary \cite{lin2024stochastic} and would interfere with our residual estimation heuristic.
We use learning rates of 30, 20, 30, 20, and 20 respectively for the \textsc{pol}, \textsc{elevators}, \textsc{bike}, \textsc{keggdirected} and \textsc{protein} datasets, picking the largest learning rate from a grid $[5, 10, 20, 30, 50, 60, 70, 80, 90, 100]$ that does not cause the inner linear system solver to diverge on the very first outer  marginal likelihood loop. For the larger datasets, we use learning rates of 10, 10, 50, and 50 for \textsc{3droad}, \textsc{buzz}, \textsc{song} and \textsc{houseelectric}, picking half of the largest learning rate as above. We find that the larger datasets are more sensitive to diverging when the hyperparameters change, and therefore we choose half of the largest learning rate possible at initialisation.
Pseudocode is provided in \Cref{alg:sgd}.
\begin{algorithm}[ht!]
    \caption{Stochastic gradient descent for solving $\mathbf{H}_{\bm{\theta}} \, \left[ \, \bm{v}_{\bm{y}}, \bm{v}_1, \dots, \bm{v}_s \, \right] = \left[ \, \bm{y}, \bm{z}_1, \dots, \bm{z}_s \, \right]$}
    \label{alg:sgd}
\begin{algorithmic}[1]
    \Require Linear operator $\mathbf{H}_{\bm{\theta}}(\cdot)$, targets $\bm{b} = \left[ \, \bm{y}, \bm{z}_1, \dots, \bm{z}_s \, \right]$, tolerance $\tau$, maximum epochs $T$
    \Require Batch size $b$, learning rate $\gamma$, momentum $\rho$
    \State Let $(\cdot)_*$ denote parallel execution over $(\cdot)_{\bm{y}}, (\cdot)_1, \dots, (\cdot)_s$
    \State $\bm{v}_* \gets \mathbf{0}$ (or previous solution if warm start)
    \State $\bm{r}_* \gets \bm{b}_*$
    \State $\bm{m}_* \gets \mathbf{0}$
    \State $t \gets 0$
    \While{$t < \frac{n}{b}T$ \textbf{ and } $\Vert \bm{r}_{\bm{y}} \Vert > \tau$ \textbf{ and } $\frac{1}{s} \sum_{j=1}^s \Vert \bm{r}_j \Vert = \Vert \bm{r}_{\bm{z}} \Vert > \tau$}
        \State $[i] \gets \texttt{uniform\_batch\_sample}(1, \dots, n)$
        \State $\bm{g}_* \gets \mathbf{0}$
        \State $\bm{g}_*[i] \gets \mathbf{H}_{\bm{\theta}}[i, :] (\bm{v}_*) - \bm{b}_*[i]$
        \State $\bm{m}_* \gets \rho \, \bm{m}_* - \frac{\gamma}{b} \, \bm{g}_*$
        \State $\bm{v}_* = \bm{v}_* + \bm{m}_*$
        \State $\bm{r}_*[i] \gets \bm{g}_*[i]$
        \State $t \gets t + 1$
    \EndWhile
    \State \Return $\left[ \,\bm{v}_{\bm{y}}, \bm{v}_1, \dots, \bm{v}_s \, \right]$
\end{algorithmic}
\end{algorithm}

\clearpage
\section{Additional Empirical Results}
\label{apd:results}
In this appendix, we provide additional result from our experiments.

For our first experiment (solving until reaching the relative residual norm tolerance), we present the predictive performance, time taken, and speed-ups in \Cref{tab:converged_pol,tab:converged_elev,tab:converged_bike,tab:converged_prot,tab:converged_kegg}.
Hyperparameter trajectories are illustrated in \Cref{fig:hyperparameter_trace_cg,fig:hyperparameter_trace_ap,fig:hyperparameter_trace_sgd}.
Required number of solver iterations are shown in \Cref{fig:solver_iterations}.

For our second experiment (solving until reaching the tolerance or exhausting the compute budget), we tabulate the predictive performance, time taken, and average residual norms in \Cref{tab:3droad,tab:song,tab:buzz,tab:house}.
The behaviour of residual norms is visualised in \Cref{fig:r_norm_y_standard,fig:r_norm_y_pathwise,fig:r_norm_z_standard,fig:r_norm_z_pathwise}.
The evolution of residual norms and predictive performance via pathwise conditioning is depicted in \Cref{fig:large_datasets_cg,fig:large_datasets_ap,fig:large_datasets_sgd}.

\begin{table}[ht!]
\caption{Results on \textsc{pol} when solving until convergence (mean $\pm$ standard error over 10 splits).}
\label{tab:converged_pol}
\vspace{0.2cm}
\centering
\small
\setlength{\tabcolsep}{4pt}
\begin{tabular}{l c c c c c c r}
\toprule
& \scriptsize{path} & \scriptsize{warm} & \multicolumn{4}{c}{\textsc{pol} ($n = 13\;500$, $d = 26$)} & \\
& \scriptsize{wise} & \scriptsize{start} & Test RMSE & Test LLH & Total Time (min) & Solver Time (min) & Speed-Up \\
\midrule
\multirow{4}{*}{\rotatebox[origin=c]{90}{CG}} & & & 0.0750 $\pm$ 0.0010 & 1.2682 $\pm$ 0.0084 & 4.8263 $\pm$ 0.0356 & 4.6138 $\pm$ 0.0355 & --- \\
 & \checkmark & & 0.0754 $\pm$ 0.0010 & 1.2716 $\pm$ 0.0077 & 3.9567 $\pm$ 0.0204 & 3.7148 $\pm$ 0.0205 & 1.2 $\times$ \\
 & & \checkmark & 0.0750 $\pm$ 0.0010 & 1.2681 $\pm$ 0.0084 & 2.2844 $\pm$ 0.0131 & 2.0755 $\pm$ 0.0131 & 2.1 $\times$ \\
 & \checkmark & \checkmark & 0.0758 $\pm$ 0.0010 & 1.2666 $\pm$ 0.0074 & 2.4652 $\pm$ 0.0133 & 2.2513 $\pm$ 0.0133 & 2.0 $\times$ \\
\midrule
\multirow{4}{*}{\rotatebox[origin=c]{90}{AP}} & & & 0.0750 $\pm$ 0.0010 & 1.2682 $\pm$ 0.0084 & 493.05 $\pm$ 3.0738 & 492.84 $\pm$ 3.0737 & --- \\
 & \checkmark & & 0.0754 $\pm$ 0.0010 & 1.2715 $\pm$ 0.0077 & 44.006 $\pm$ 0.0722 & 43.765 $\pm$ 0.0725 & 11.2 $\times$ \\
 & & \checkmark & 0.0750 $\pm$ 0.0010 & 1.2681 $\pm$ 0.0084 & 27.915 $\pm$ 0.2707 & 27.706 $\pm$ 0.2708 & 17.7 $\times$ \\
 & \checkmark & \checkmark & 0.0758 $\pm$ 0.0010 & 1.2666 $\pm$ 0.0074 & 3.8962 $\pm$ 0.0174 & 3.6831 $\pm$ 0.0173 & 126.6 $\times$ \\
\midrule
\multirow{4}{*}{\rotatebox[origin=c]{90}{SGD}} & & & 0.0750 $\pm$ 0.0010 & 1.2681 $\pm$ 0.0084 & 138.63 $\pm$ 0.8169 & 138.60 $\pm$ 0.8170 & --- \\
 & \checkmark & & 0.0754 $\pm$ 0.0010 & 1.2708 $\pm$ 0.0074 & 73.546 $\pm$ 0.2661 & 73.434 $\pm$ 0.2661 & 1.9 $\times$ \\
 & & \checkmark & 0.0750 $\pm$ 0.0010 & 1.2682 $\pm$ 0.0084 & 26.483 $\pm$ 0.1002 & 26.461 $\pm$ 0.1001 & 5.2 $\times$ \\
 & \checkmark & \checkmark & 0.0757 $\pm$ 0.0010 & 1.2678 $\pm$ 0.0076 & 17.938 $\pm$ 0.1083 & 17.854 $\pm$ 0.1083 & 7.7 $\times$ \\
\bottomrule
\end{tabular}
\end{table}
\begin{table}[ht!]
\caption{Results on \textsc{elev} when solving until convergence (mean $\pm$ standard error over 10 splits).}
\label{tab:converged_elev}
\vspace{0.2cm}
\centering
\small
\setlength{\tabcolsep}{4pt}
\begin{tabular}{l c c c c c c r}
\toprule
& \scriptsize{path} & \scriptsize{warm} & \multicolumn{4}{c}{\textsc{elevators} ($n = 14\;940$, $d = 18$)} \\
& \scriptsize{wise} & \scriptsize{start} & Test RMSE & Test LLH & Total Time (min) & Solver Time (min) & Speed-Up \\
\midrule
\multirow{4}{*}{\rotatebox[origin=c]{90}{CG}} & & & 0.3550 $\pm$ 0.0034 & -0.3856 $\pm$ 0.0065 & 1.5811 $\pm$ 0.0063 & 1.3661 $\pm$ 0.0062 & --- \\
 & \checkmark & & 0.3562 $\pm$ 0.0033 & -0.3868 $\pm$ 0.0065 & 1.4940 $\pm$ 0.0039 & 1.2484 $\pm$ 0.0039 & 1.1 $\times$ \\
 & & \checkmark & 0.3550 $\pm$ 0.0034 & -0.3856 $\pm$ 0.0065 & 1.0308 $\pm$ 0.0037 & 0.8191 $\pm$ 0.0036 & 1.5 $\times$ \\
 & \checkmark & \checkmark & 0.3558 $\pm$ 0.0034 & -0.3856 $\pm$ 0.0066 & 0.9977 $\pm$ 0.0029 & 0.7835 $\pm$ 0.0028 & 1.6 $\times$ \\
\midrule
\multirow{4}{*}{\rotatebox[origin=c]{90}{AP}} & & & 0.3550 $\pm$ 0.0034 & -0.3856 $\pm$ 0.0065 & 77.787 $\pm$ 0.2694 & 77.572 $\pm$ 0.2693 & --- \\
 & \checkmark & & 0.3562 $\pm$ 0.0033 & -0.3868 $\pm$ 0.0065 & 36.346 $\pm$ 0.0821 & 36.100 $\pm$ 0.0820 & 2.1 $\times$ \\
 & & \checkmark & 0.3550 $\pm$ 0.0034 & -0.3856 $\pm$ 0.0065 & 1.6658 $\pm$ 0.0081 & 1.4541 $\pm$ 0.0081 & 46.7 $\times$ \\
 & \checkmark & \checkmark & 0.3558 $\pm$ 0.0034 & -0.3856 $\pm$ 0.0066 & 1.2065 $\pm$ 0.0032 & 0.9928 $\pm$ 0.0031 & 64.5 $\times$ \\
\midrule
\multirow{4}{*}{\rotatebox[origin=c]{90}{SGD}} & & & 0.3550 $\pm$ 0.0034 & -0.3855 $\pm$ 0.0066 & 5.5408 $\pm$ 0.0101 & 5.4091 $\pm$ 0.0101 & --- \\
 & \checkmark & & 0.3562 $\pm$ 0.0033 & -0.3868 $\pm$ 0.0065 & 4.5758 $\pm$ 0.0086 & 4.4146 $\pm$ 0.0086 & 1.2 $\times$ \\
 & & \checkmark & 0.3550 $\pm$ 0.0034 & -0.3855 $\pm$ 0.0066 & 1.2221 $\pm$ 0.0042 & 1.0927 $\pm$ 0.0041 & 4.5 $\times$ \\
 & \checkmark & \checkmark & 0.3558 $\pm$ 0.0034 & -0.3854 $\pm$ 0.0068 & 1.1437 $\pm$ 0.0033 & 1.0105 $\pm$ 0.0032 & 4.8 $\times$ \\
\bottomrule
\end{tabular}
\end{table}
\begin{table}[ht!]
\caption{Results on \textsc{bike} when solving until convergence (mean $\pm$ standard error over 10 splits).}
\label{tab:converged_bike}
\vspace{0.2cm}
\centering
\small
\setlength{\tabcolsep}{4pt}
\begin{tabular}{l c c c c c c r}
\toprule
& \scriptsize{path} & \scriptsize{warm} & \multicolumn{4}{c}{\textsc{bike} ($n = 15\;642$, $d = 17$)} \\
& \scriptsize{wise} & \scriptsize{start} & Test RMSE & Test LLH & Total Time (min) & Solver Time (min) & Speed-Up \\
\midrule
\multirow{4}{*}{\rotatebox[origin=c]{90}{CG}} & & & 0.0326 $\pm$ 0.0031 & 2.1500 $\pm$ 0.0180 & 5.0797 $\pm$ 0.0280 & 4.8412 $\pm$ 0.0280 & --- \\
 & \checkmark & & 0.0326 $\pm$ 0.0030 & 2.0674 $\pm$ 0.0167 & 4.4147 $\pm$ 0.0225 & 4.1410 $\pm$ 0.0221 & 1.2 $\times$ \\
 & & \checkmark & 0.0327 $\pm$ 0.0031 & 2.1508 $\pm$ 0.0181 & 2.7392 $\pm$ 0.0137 & 2.5039 $\pm$ 0.0135 & 1.9 $\times$ \\
 & \checkmark & \checkmark & 0.0329 $\pm$ 0.0030 & 2.0615 $\pm$ 0.0151 & 3.0658 $\pm$ 0.0168 & 2.8268 $\pm$ 0.0167 & 1.7 $\times$ \\
\midrule
\multirow{4}{*}{\rotatebox[origin=c]{90}{AP}} & & & 0.0326 $\pm$ 0.0031 & 2.1504 $\pm$ 0.0180 & 302.26 $\pm$ 1.7735 & 302.03 $\pm$ 1.7735 & --- \\
 & \checkmark & & 0.0325 $\pm$ 0.0030 & 2.0668 $\pm$ 0.0167 & 35.081 $\pm$ 0.0773 & 34.811 $\pm$ 0.0772 & 8.6 $\times$ \\
 & & \checkmark & 0.0326 $\pm$ 0.0031 & 2.1503 $\pm$ 0.0181 & 19.892 $\pm$ 0.2146 & 19.657 $\pm$ 0.2146 & 15.2 $\times$ \\
 & \checkmark & \checkmark & 0.0330 $\pm$ 0.0030 & 2.0616 $\pm$ 0.0150 & 5.4041 $\pm$ 0.0394 & 5.1653 $\pm$ 0.0393 & 56.0 $\times$ \\
\midrule
\multirow{4}{*}{\rotatebox[origin=c]{90}{SGD}} & & & 0.0326 $\pm$ 0.0031 & 2.1535 $\pm$ 0.0181 & 412.17 $\pm$ 10.460 & 412.03 $\pm$ 10.450 & --- \\
 & \checkmark & & 0.0324 $\pm$ 0.0030 & 2.0692 $\pm$ 0.0174 & 156.24 $\pm$ 2.2113 & 156.05 $\pm$ 2.2109 & 2.6 $\times$ \\
 & & \checkmark & 0.0327 $\pm$ 0.0031 & 2.1524 $\pm$ 0.0179 & 74.341 $\pm$ 1.2532 & 74.184 $\pm$ 1.2524 & 5.5 $\times$ \\
 & \checkmark & \checkmark & 0.0332 $\pm$ 0.0030 & 2.0562 $\pm$ 0.0144 & 64.145 $\pm$ 1.1086 & 63.983 $\pm$ 1.1083 & 6.4 $\times$ \\
\bottomrule
\end{tabular}
\end{table}
\begin{table}[ht!]
\caption{Results on \textsc{prot} when solving until convergence (mean $\pm$ standard error over 10 splits).}
\label{tab:converged_prot}
\vspace{0.2cm}
\centering
\small
\setlength{\tabcolsep}{4pt}
\begin{tabular}{l c c c c c c r}
\toprule
& \scriptsize{path} & \scriptsize{warm} & \multicolumn{4}{c}{\textsc{protein} ($n = 41\;157$, $d = 9$)} \\
& \scriptsize{wise} & \scriptsize{start} & Test RMSE & Test LLH & Total Time (min) & Solver Time (min) & Speed-Up \\
\midrule
\multirow{4}{*}{\rotatebox[origin=c]{90}{CG}} & & & 0.5024 $\pm$ 0.0036 & -0.5871 $\pm$ 0.0096 & 29.849 $\pm$ 0.2463 & 28.208 $\pm$ 0.2463 & --- \\
 & \checkmark & & 0.4909 $\pm$ 0.0035 & -0.6210 $\pm$ 0.0082 & 19.984 $\pm$ 0.1429 & 18.317 $\pm$ 0.1430 & 1.5 $\times$ \\
 & & \checkmark & 0.5026 $\pm$ 0.0036 & -0.5871 $\pm$ 0.0096 & 11.542 $\pm$ 0.1058 & 9.8816 $\pm$ 0.1059 & 2.6 $\times$ \\
 & \checkmark & \checkmark & 0.4912 $\pm$ 0.0034 & -0.6214 $\pm$ 0.0079 & 13.744 $\pm$ 0.0959 & 12.085 $\pm$ 0.0956 & 2.2 $\times$ \\
\midrule
\multirow{4}{*}{\rotatebox[origin=c]{90}{AP}} & & & 0.5024 $\pm$ 0.0036 & -0.5871 $\pm$ 0.0096 & 130.93 $\pm$ 1.0450 & 129.29 $\pm$ 1.0451 & --- \\
 & \checkmark & & 0.4907 $\pm$ 0.0035 & -0.6214 $\pm$ 0.0082 & 55.790 $\pm$ 0.2054 & 54.125 $\pm$ 0.2053 & 2.3 $\times$ \\
 & & \checkmark & 0.5027 $\pm$ 0.0036 & -0.5871 $\pm$ 0.0097 & 16.425 $\pm$ 0.1816 & 14.765 $\pm$ 0.1818 & 8.0 $\times$ \\
 & \checkmark & \checkmark & 0.4912 $\pm$ 0.0034 & -0.6213 $\pm$ 0.0079 & 12.341 $\pm$ 0.0347 & 10.682 $\pm$ 0.0346 & 10.6 $\times$ \\
\midrule
\multirow{4}{*}{\rotatebox[origin=c]{90}{SGD}} & & & 0.5026 $\pm$ 0.0036 & -0.5871 $\pm$ 0.0096 & 75.205 $\pm$ 1.4358 & 72.932 $\pm$ 1.4357 & --- \\
 & \checkmark & & 0.4894 $\pm$ 0.0035 & -0.6268 $\pm$ 0.0077 & 24.030 $\pm$ 0.1364 & 21.726 $\pm$ 0.1364 & 3.1 $\times$ \\
 & & \checkmark & 0.5027 $\pm$ 0.0037 & -0.5878 $\pm$ 0.0096 & 11.226 $\pm$ 0.1430 & 8.9541 $\pm$ 0.1433 & 6.7 $\times$ \\
 & \checkmark & \checkmark & 0.4911 $\pm$ 0.0034 & -0.6217 $\pm$ 0.0078 & 11.939 $\pm$ 0.0700 & 9.6617 $\pm$ 0.0700 & 6.3 $\times$ \\
\bottomrule
\end{tabular}
\end{table}
\begin{table}[ht!]
\caption{Results on \textsc{kegg} when solving until convergence (mean $\pm$ standard error over 10 splits).}
\label{tab:converged_kegg}
\vspace{0.2cm}
\centering
\small
\setlength{\tabcolsep}{4pt}
\begin{tabular}{l c c c c c c r}
\toprule
& \scriptsize{path} & \scriptsize{warm} & \multicolumn{4}{c}{\textsc{keggdirected} ($n = 43\;945$, $d = 20$)} \\
& \scriptsize{wise} & \scriptsize{start} & Test RMSE & Test LLH & Total Time (min) & Solver Time (min) & Speed-Up \\
\midrule
\multirow{4}{*}{\rotatebox[origin=c]{90}{CG}} & & & 0.0837 $\pm$ 0.0016 & 1.0818 $\pm$ 0.0170 & 27.974 $\pm$ 0.3172 & 25.543 $\pm$ 0.3120 & --- \\
 & \checkmark & & 0.0837 $\pm$ 0.0016 & 1.0818 $\pm$ 0.0169 & 26.362 $\pm$ 0.2851 & 23.897 $\pm$ 0.2804 & 1.1 $\times$ \\
 & & \checkmark & 0.0837 $\pm$ 0.0016 & 1.0816 $\pm$ 0.0171 & 12.754 $\pm$ 0.1314 & 10.326 $\pm$ 0.1266 & 2.2 $\times$ \\
 & \checkmark & \checkmark & 0.0836 $\pm$ 0.0016 & 1.0819 $\pm$ 0.0166 & 12.998 $\pm$ 0.1383 & 10.559 $\pm$ 0.1330 & 2.2 $\times$ \\
\midrule
\multirow{4}{*}{\rotatebox[origin=c]{90}{AP}} & & & --- & --- & > 24 h & > 24 h & --- \\
 & \checkmark & & 0.0837 $\pm$ 0.0016 & 1.0820 $\pm$ 0.0166 & 491.41 $\pm$ 0.4624 & 488.95 $\pm$ 0.4657 & > 2.9 $\times$ \\
 & & \checkmark & 0.0837 $\pm$ 0.0016 & 1.0818 $\pm$ 0.0172 & 211.28 $\pm$ 1.9504 & 208.85 $\pm$ 1.9556 & > 6.8 $\times$ \\
 & \checkmark & \checkmark & 0.0836 $\pm$ 0.0016 & 1.0817 $\pm$ 0.0166 & 14.013 $\pm$ 0.0768 & 11.574 $\pm$ 0.0711 & > 102.8 $\times$ \\
\midrule
\multirow{4}{*}{\rotatebox[origin=c]{90}{SGD}} & & & 0.0837 $\pm$ 0.0016 & 1.0816 $\pm$ 0.0173 & 620.07 $\pm$ 6.3224 & 617.35 $\pm$ 6.3194 & --- \\
 & \checkmark & & 0.0837 $\pm$ 0.0016 & 1.0822 $\pm$ 0.0164 & 411.58 $\pm$ 6.1065 & 408.86 $\pm$ 6.0874 & 1.5 $\times$ \\
 & & \checkmark & 0.0837 $\pm$ 0.0016 & 1.0821 $\pm$ 0.0170 & 168.38 $\pm$ 1.6669 & 165.66 $\pm$ 1.6636 & 3.7 $\times$ \\
 & \checkmark & \checkmark & 0.0839 $\pm$ 0.0016 & 1.0725 $\pm$ 0.0136 & 58.679 $\pm$ 0.5457 & 55.952 $\pm$ 0.5418 & 10.6 $\times$ \\
\bottomrule
\end{tabular}
\end{table}
\clearpage
\begin{figure}[ht!]
    \centering
    \includegraphics{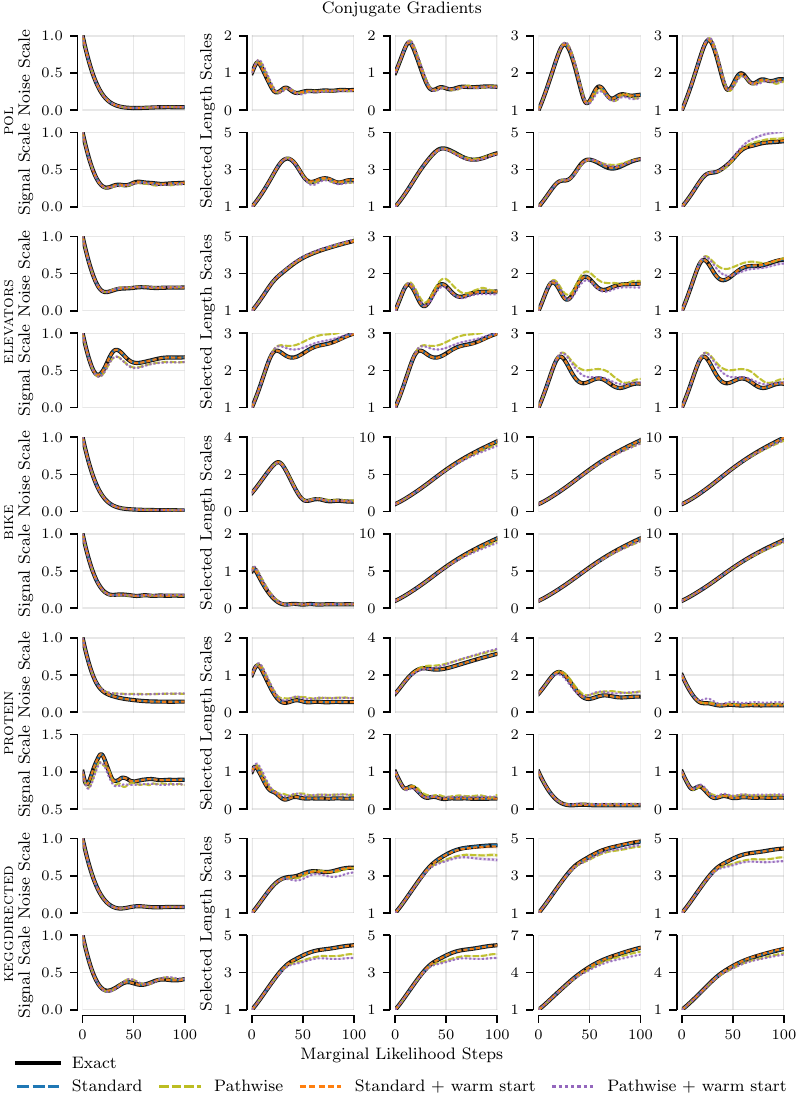}
    \caption{Evolution of hyperparameters during marginal likelihood optimisation on different datasets using conjugate gradients as linear system solver. Most of the time, the behaviour of exact gradient computation using Cholesky factorisation is resembled.}
    \label{fig:hyperparameter_trace_cg}
\end{figure}
\begin{figure}[ht!]
    \centering
    \includegraphics{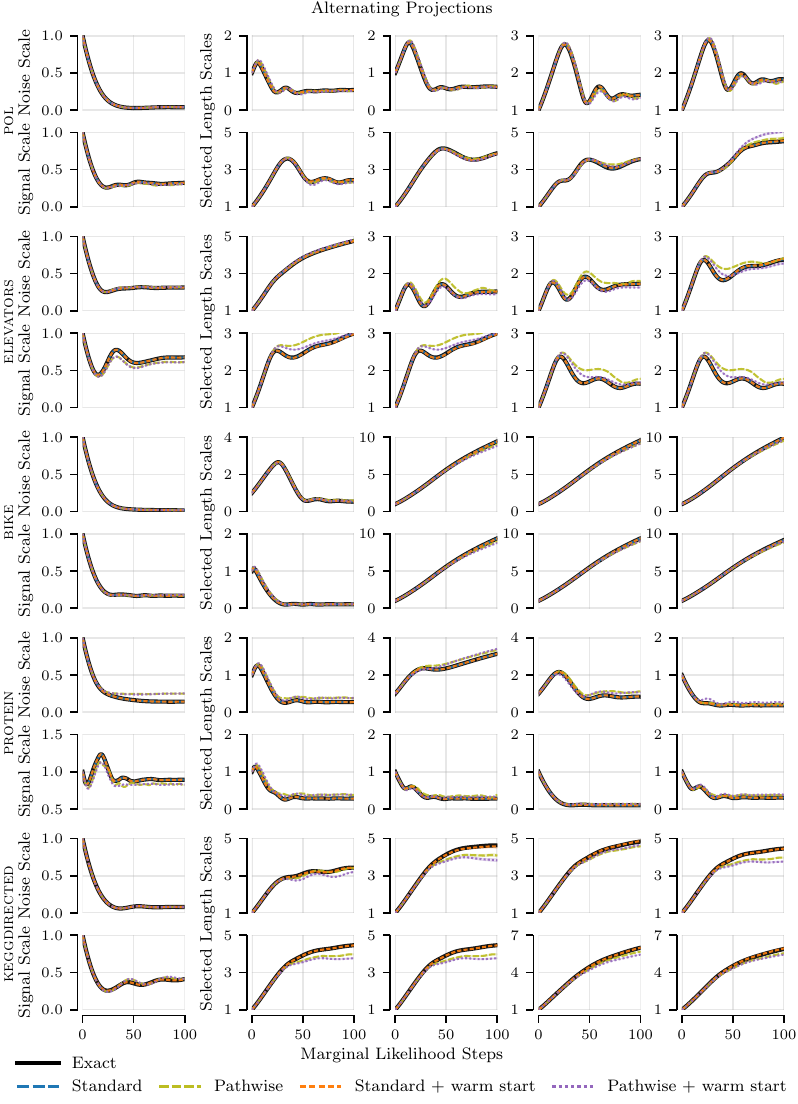}
    \caption{Evolution of hyperparameters during marginal likelihood optimisation on different datasets using alternating projections as linear system solver. Most of the time, the behaviour of exact gradient computation using Cholesky factorisation is resembled.}
    \label{fig:hyperparameter_trace_ap}
\end{figure}
\begin{figure}[ht!]
    \centering
    \includegraphics{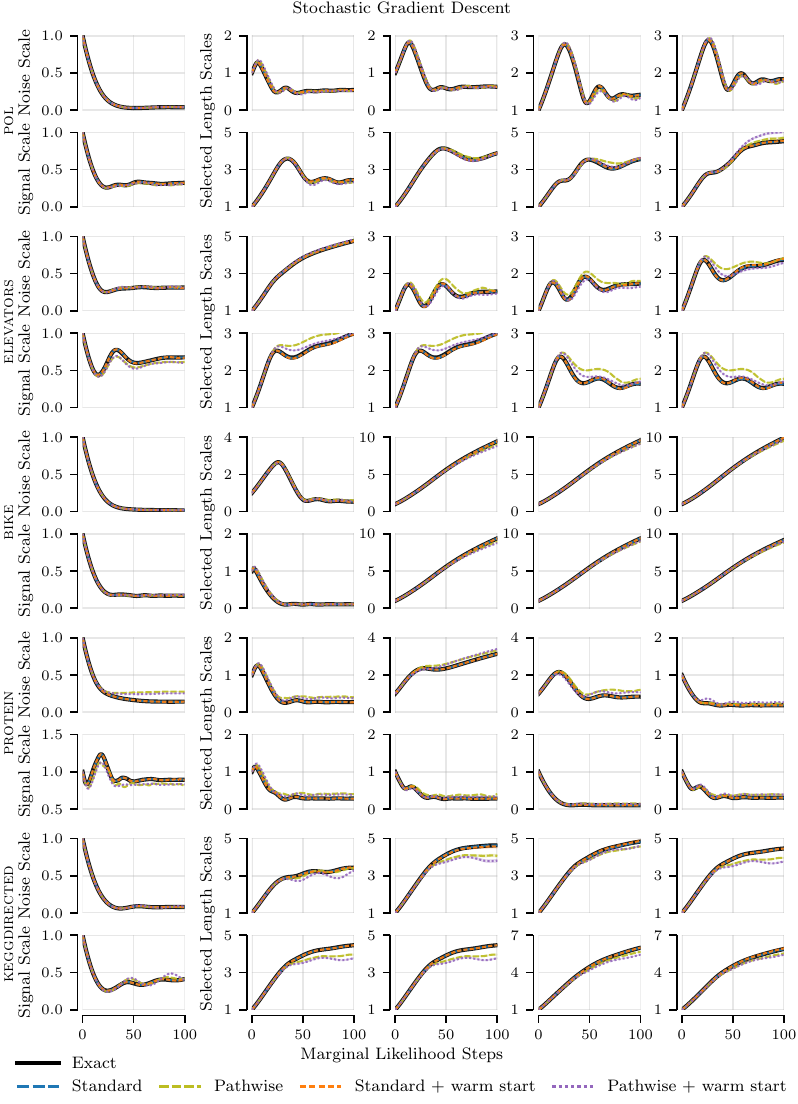}
    \caption{Evolution of hyperparameters during marginal likelihood optimisation on different datasets using stochastic gradient descent as linear system solver. Most of the time, the behaviour of exact gradient computation using Cholesky factorisation is resembled.}
    \label{fig:hyperparameter_trace_sgd}
\end{figure}
\clearpage

\begin{table}[ht!]
\caption{Results on \textsc{3droad} with 10 maximum solver epochs (mean $\pm$ standard error over 10 splits).}
\label{tab:3droad}
\vspace{0.2cm}
\centering
\small
\setlength{\tabcolsep}{4pt}
\begin{tabular}{l c c c c c c}
\toprule
& \scriptsize{warm} & \multicolumn{3}{c}{\textsc{3droad} ($n = 391\;387$, $d = 3$)} &\multicolumn{2}{c}{Average Residual Norm} \\
& \scriptsize{start} & Test RMSE & Test LLH & Total Time (h) & of Mean & of Probe Vectors \\
\midrule
\multirow{2}{*}{\rotatebox[origin=c]{90}{CG}} & & 1.2177 $\pm$ 0.0296 & -1.5463 $\pm$ 0.0187 & 1.9308 $\pm$ 0.0073 & 0.9815 $\pm$ 0.0217 & 0.9174 $\pm$ 0.0106 \\
 & \checkmark & 0.5371 $\pm$ 0.0598 & -0.9143 $\pm$ 0.0336 & 1.9411 $\pm$ 0.0047 & 0.6530 $\pm$ 0.0152 & 0.9003 $\pm$ 0.0077 \\
\midrule
\multirow{2}{*}{\rotatebox[origin=c]{90}{AP}} & & 0.1042 $\pm$ 0.0017 & 0.8237 $\pm$ 0.0172 & 2.1782 $\pm$ 0.0059 & 0.0950 $\pm$ 0.0017 & 0.0541 $\pm$ 0.0010 \\
 & \checkmark & 0.0563 $\pm$ 0.0009 & 0.9309 $\pm$ 0.0186 & 2.1805 $\pm$ 0.0047 & 0.0469 $\pm$ 0.0010 & 0.0651 $\pm$ 0.0010 \\
\midrule
\multirow{2}{*}{\rotatebox[origin=c]{90}{SGD}} & & 0.1430 $\pm$ 0.0014 & -0.4662 $\pm$ 0.0399 & 1.2717 $\pm$ 0.0014 & 0.1130 $\pm$ 0.0017 & 0.0633 $\pm$ 0.0020 \\
 & \checkmark & 0.0654 $\pm$ 0.0010 & 0.9276 $\pm$ 0.0126 & 1.2772 $\pm$ 0.0031 & 0.0561 $\pm$ 0.0010 & 0.0797 $\pm$ 0.0022 \\
\bottomrule
\end{tabular}
\end{table}
\begin{table}[ht!]
\caption{Results on \textsc{song} with 10 maximum solver epochs (mean $\pm$ standard error over 10 splits).}
\label{tab:song}
\vspace{0.2cm}
\centering
\small
\setlength{\tabcolsep}{4pt}
\begin{tabular}{l c c c c c c}
\toprule
& \scriptsize{warm} & \multicolumn{3}{c}{\textsc{song} ($n = 463\;811$, $d = 90$)} &\multicolumn{2}{c}{Average Residual Norm} \\
& \scriptsize{start} & Test RMSE & Test LLH & Total Time (h) & of Mean & of Probe Vectors \\
\midrule
\multirow{2}{*}{\rotatebox[origin=c]{90}{CG}} & & 2.1573 $\pm$ 0.0672 & -2.0688 $\pm$ 0.0303 & 18.107 $\pm$ 0.0602 & 1.6528 $\pm$ 0.0773 & 1.6356 $\pm$ 0.0716 \\
 & \checkmark & 0.8698 $\pm$ 0.0091 & -1.3025 $\pm$ 0.0140 & 17.879 $\pm$ 0.1979 & 0.3793 $\pm$ 0.0171 & 0.4147 $\pm$ 0.0186 \\
\midrule
\multirow{2}{*}{\rotatebox[origin=c]{90}{AP}} & & 0.7428 $\pm$ 0.0019 & -1.1197 $\pm$ 0.0024 & 18.256 $\pm$ 0.0272 & 0.0421 $\pm$ 0.0016 & 0.0499 $\pm$ 0.0011 \\
 & \checkmark & 0.7420 $\pm$ 0.0019 & -1.1184 $\pm$ 0.0023 & 15.114 $\pm$ 0.2373 & 0.0085 $\pm$ 0.0004 & 0.0125 $\pm$ 0.0002 \\
\midrule
\multirow{2}{*}{\rotatebox[origin=c]{90}{SGD}} & & 0.7426 $\pm$ 0.0019 & -1.1205 $\pm$ 0.0024 & 17.160 $\pm$ 0.0669 & 0.0688 $\pm$ 0.0017 & 0.0834 $\pm$ 0.0014 \\
 & \checkmark & 0.7419 $\pm$ 0.0019 & -1.1184 $\pm$ 0.0023 & 16.756 $\pm$ 0.0842 & 0.0100 $\pm$ 0.0003 & 0.0117 $\pm$ 0.0001 \\
\bottomrule
\end{tabular}
\end{table}
\begin{table}[ht!]
\caption{Results on \textsc{buzz} with 10 maximum solver epochs (mean $\pm$ standard error over 10 splits).}
\label{tab:buzz}
\vspace{0.2cm}
\centering
\small
\setlength{\tabcolsep}{4pt}
\begin{tabular}{l c c c c c c}
\toprule
& \scriptsize{warm} & \multicolumn{3}{c}{\textsc{buzz} ($n = 524\;925$, $d = 77$)} &\multicolumn{2}{c}{Average Residual Norm} \\
& \scriptsize{start} & Test RMSE & Test LLH & Total Time (h) & of Mean & of Probe Vectors \\
\midrule
\multirow{2}{*}{\rotatebox[origin=c]{90}{CG}} & & 2.0042 $\pm$ 0.0382 & -1.8249 $\pm$ 0.0393 & 22.012 $\pm$ 0.0047 & 1.9248 $\pm$ 0.0340 & 3.0380 $\pm$ 0.1826 \\
 & \checkmark & 4.5317 $\pm$ 0.4186 & -11.973 $\pm$ 1.4876 & 21.725 $\pm$ 0.2588 & 2.1218 $\pm$ 0.1021 & 1.6968 $\pm$ 0.0424 \\
\midrule
\multirow{2}{*}{\rotatebox[origin=c]{90}{AP}} & & 0.2770 $\pm$ 0.0030 & -0.0483 $\pm$ 0.0039 & 22.466 $\pm$ 0.0048 & 0.0516 $\pm$ 0.0017 & 0.0851 $\pm$ 0.0009 \\
 & \checkmark & 0.2743 $\pm$ 0.0030 & -0.0366 $\pm$ 0.0034 & 22.470 $\pm$ 0.0056 & 0.0143 $\pm$ 0.0004 & 0.0263 $\pm$ 0.0001 \\
\midrule
\multirow{2}{*}{\rotatebox[origin=c]{90}{SGD}} & & 0.2851 $\pm$ 0.0035 & -0.1029 $\pm$ 0.0152 & 16.722 $\pm$ 0.0012 & 0.2252 $\pm$ 0.0004 & 0.3906 $\pm$ 0.0115 \\
 & \checkmark & 0.2735 $\pm$ 0.0030 & -0.0457 $\pm$ 0.0045 & 16.702 $\pm$ 0.0016 & 0.0767 $\pm$ 0.0022 & 0.1544 $\pm$ 0.0039 \\
\bottomrule
\end{tabular}
\end{table}
\begin{table}[ht!]
\caption{Results on \textsc{house} with 10 maximum solver epochs (mean $\pm$ standard error over 10 splits).}
\label{tab:house}
\vspace{0.2cm}
\centering
\small
\setlength{\tabcolsep}{4pt}
\begin{tabular}{l c c c c c c}
\toprule
& \scriptsize{warm} & \multicolumn{3}{c}{\textsc{houseelectric} ($n = 1\;844\;352$, $d = 11$)} &\multicolumn{2}{c}{Average Residual Norm} \\
& \scriptsize{start} & Test RMSE & Test LLH & Total Time (h) & of Mean & of Probe Vectors \\
\midrule
\multirow{2}{*}{\rotatebox[origin=c]{90}{CG}} & & 0.8328 $\pm$ 0.0215 & -4.0695 $\pm$ 0.2445 & 32.368 $\pm$ 0.3881 & 0.7495 $\pm$ 0.0224 & 1.4922 $\pm$ 0.0826 \\
 & \checkmark & 0.4519 $\pm$ 0.0220 & -1.2631 $\pm$ 0.2168 & 32.770 $\pm$ 0.0029 & 0.6063 $\pm$ 0.0252 & 1.1958 $\pm$ 0.0454 \\
\midrule
\multirow{2}{*}{\rotatebox[origin=c]{90}{AP}} & & 0.0320 $\pm$ 0.0005 & 1.9051 $\pm$ 0.1403 & 29.090 $\pm$ 0.0075 & 0.0215 $\pm$ 0.0005 & 0.0598 $\pm$ 0.0027 \\
 & \checkmark & 0.0292 $\pm$ 0.0007 & 2.3509 $\pm$ 0.0654 & 29.058 $\pm$ 0.0064 & 0.0104 $\pm$ 0.0002 & 0.0409 $\pm$ 0.0014 \\
\midrule
\multirow{2}{*}{\rotatebox[origin=c]{90}{SGD}} & & 0.0449 $\pm$ 0.0003 & 0.9635 $\pm$ 0.2071 & 23.646 $\pm$ 0.0135 & 0.0409 $\pm$ 0.0005 & 0.1645 $\pm$ 0.0085 \\
 & \checkmark & 0.0334 $\pm$ 0.0002 & 1.6021 $\pm$ 0.0652 & 23.445 $\pm$ 0.1611 & 0.0321 $\pm$ 0.0006 & 0.1230 $\pm$ 0.0081 \\
\bottomrule
\end{tabular}
\end{table}
\clearpage
\begin{figure}[ht!]
    \centering
    \includegraphics{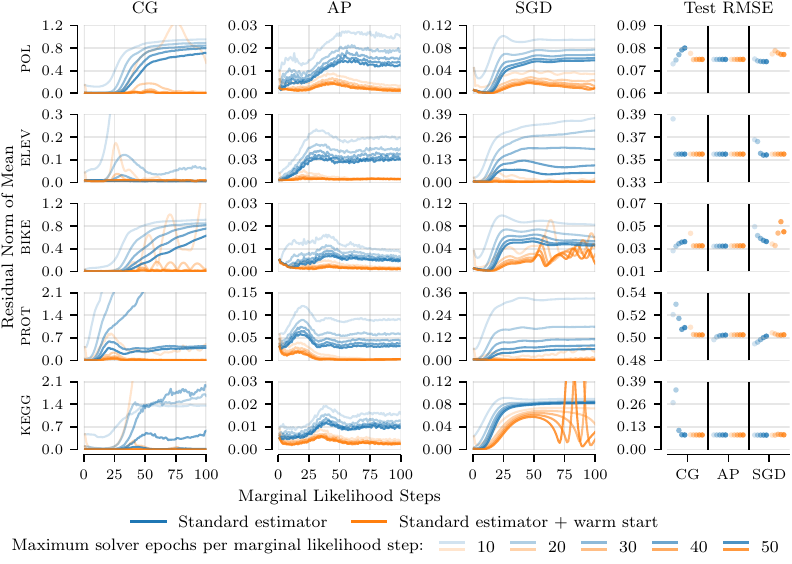}
    \caption{Relative residual norms of the mean at each marginal likelihood step and final test root-mean-square errors using the standard estimator on different datasets. The linear system solver is terminated upon either reaching the tolerance or exhausting a maximum number of solver epochs.}
    \label{fig:r_norm_y_standard}
    \vspace{0.5cm}
    \includegraphics{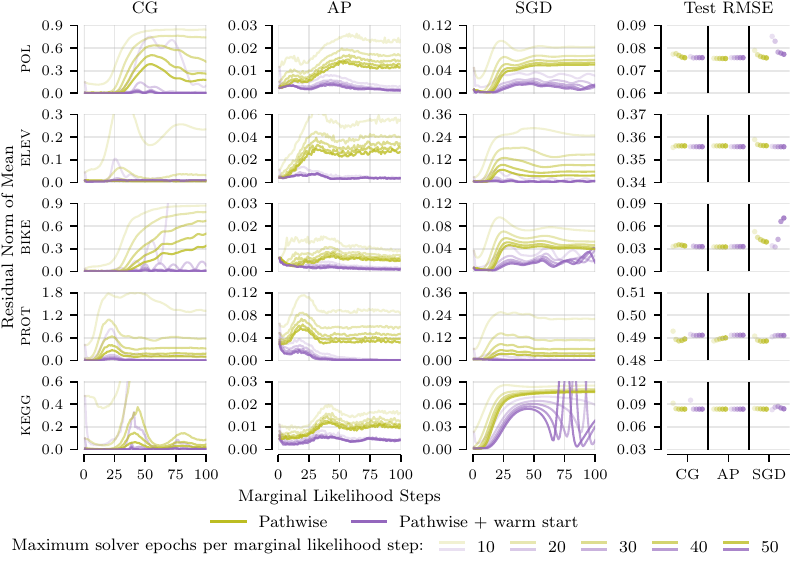}
    \caption{Relative residual norms of the mean at each marginal likelihood step and final test root-mean-square errors using the pathwise estimator on different datasets. The linear system solver is terminated upon either reaching the tolerance or exhausting a maximum number of solver epochs.}
    \label{fig:r_norm_y_pathwise}
\end{figure}
\begin{figure}[ht!]
    \centering
    \includegraphics{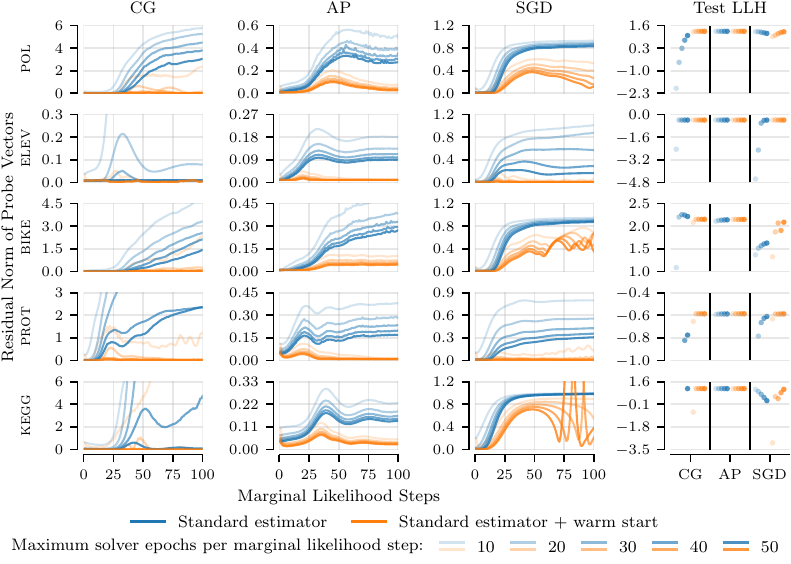}
    \caption{Relative residual norms of the probe vectors at each marginal likelihood step and final test log-likelihoods using the standard estimator on different datasets. The linear system solver is terminated upon either reaching the tolerance or exhausting a maximum number of solver epochs.}
    \label{fig:r_norm_z_standard}
    \vspace{0.5cm}
    \includegraphics{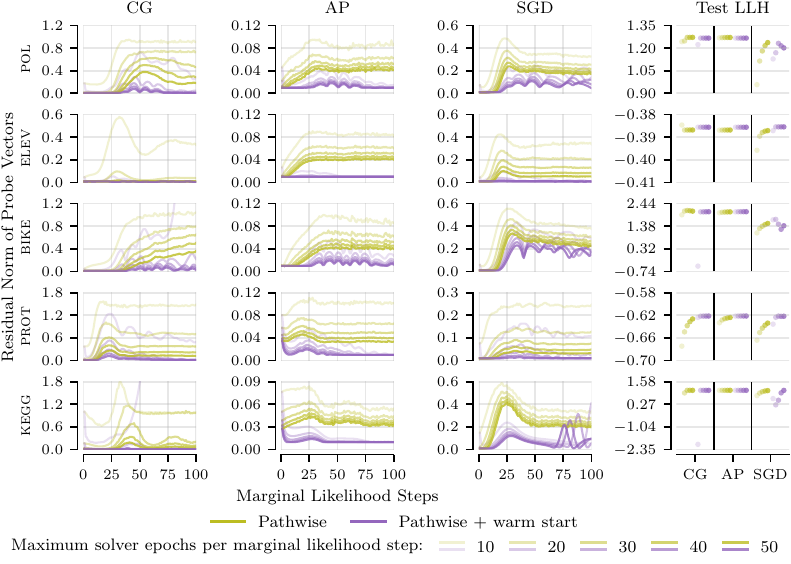}
    \caption{Relative residual norms of the probe vectors at each marginal likelihood step and final test log-likelihoods using the pathwise estimator on different datasets. The linear system solver is terminated upon either reaching the tolerance or exhausting a maximum number of solver epochs.}
    \label{fig:r_norm_z_pathwise}
\end{figure}
\clearpage
\begin{figure}[ht!]
    \centering
    \includegraphics{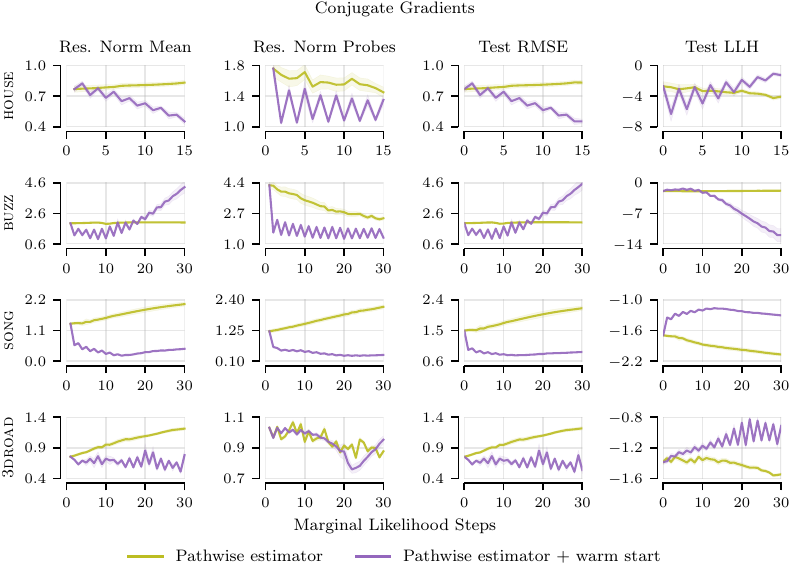}
    \caption{Relative residual norms, test root-mean-square errors and test log-likelihoods during marginal likelihood optimisation on large datasets using the pathwise gradient estimator and conjugate gradients as linear system solver.}
    \label{fig:large_datasets_cg}
    \vspace{0.5cm}
    \includegraphics{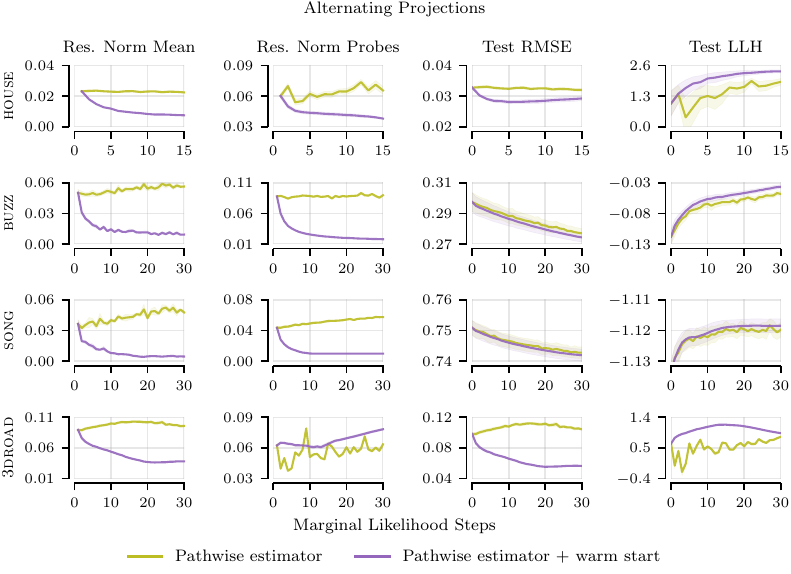}
    \caption{Relative residual norms, test root-mean-square errors and test log-likelihoods during marginal likelihood optimisation on large datasets using the pathwise gradient estimator and alternating projections as linear system solver.}
    \label{fig:large_datasets_ap}
\end{figure}
\begin{figure}[ht!]
    \centering
    \includegraphics{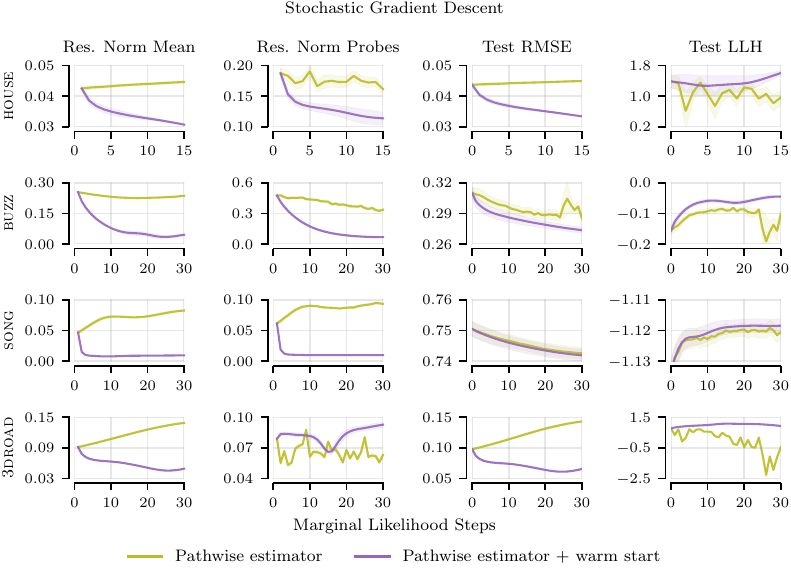}
    \caption{Relative residual norms, test root-mean-square errors and test log-likelihoods during marginal likelihood optimisation on large datasets using the pathwise gradient estimator and stochastic gradient descent as linear system solver.}
    \label{fig:large_datasets_sgd}
    \vspace{0.5cm}
    \includegraphics{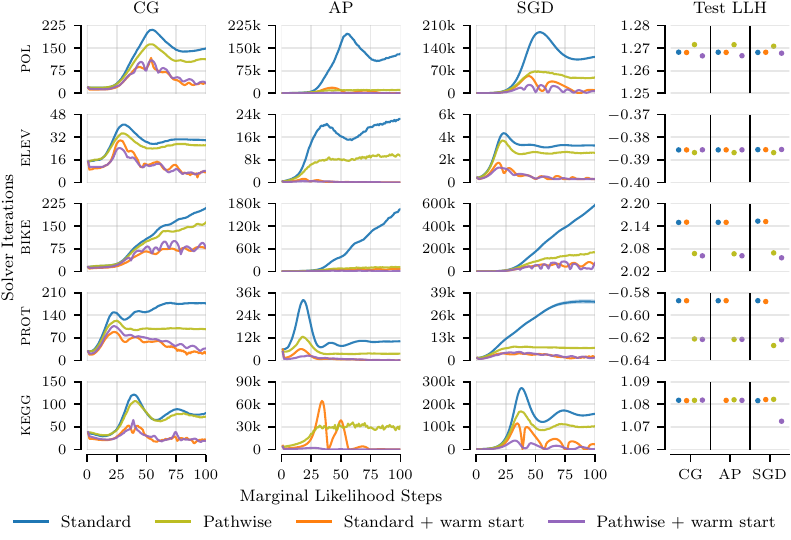}
    \caption{Required number of solver iterations until reaching the tolerance $\tau = 0.01$ at each step of marginal likelihood optimisation and final predictive test log-likelihoods on different datasets. On the \textsc{keggdirected} dataset, alternating projections with the standard estimator and without warm starting did not complete the experiment within 24 hours.}
    \label{fig:solver_iterations}
\end{figure}

\end{document}